%% file: neurips_2026.tex
\documentclass{article}

\usepackage[main, final]{neurips_2026}
\usepackage{paralist}
\usepackage{enumitem}

\usepackage[utf8]{inputenc} 
\usepackage[T1]{fontenc}    
\usepackage{hyperref}       
\usepackage{url}            
\usepackage{booktabs}       
\usepackage{amsfonts}       
\usepackage{nicefrac}       
\usepackage{microtype}      
\usepackage{xcolor}         

\usepackage{amsmath,amssymb,amsthm}
\usepackage{mathtools}
\usepackage{bbm}
\usepackage{enumitem}
\usepackage{geometry}
\usepackage{hyperref}
\usepackage{xcolor}
\usepackage{soul}
\usepackage{booktabs}

\newcommand{\Zbb}{\mathbb{Z}}
\newcommand{\Rbb}{\mathbb{R}}

\newcommand{\xhdr}[1]{\noindent\textbf{#1}}

\theoremstyle{definition}

\newtheorem{theorem}{Theorem}[section]
\newtheorem*{theoremm}{Theorem}
\newtheorem{proposition}[theorem]{Proposition}
\newtheorem{definition}[theorem]{Definition}
\newtheorem{remark}[theorem]{Remark}

\newtheorem{lemma}[theorem]{Lemma}

\newtheorem{example}[theorem]{Example}
\newtheorem{corollary}[theorem]{Corollary}

\usepackage[ruled,vlined]{algorithm2e}

\usepackage{quiver}
\usepackage{amsmath,amssymb,amsfonts}

\usepackage{multirow}

\usepackage{colortbl}
\usepackage{tikz}
\usepackage{tcolorbox}
\usepackage{tabularx}
\newcolumntype{P}[1]{>{\centering\arraybackslash}p{#1}}
\usetikzlibrary{positioning, quotes}

\definecolor{lb}{RGB}{31,119,180}

\newtcolorbox{mybox}[1]{colback=lb!5!white,colframe=lb!70!black,fonttitle=\bfseries,title=#1}
\usepackage{pifont}

\usepackage{empheq}
\newtcbox{\mymath}[1][]{%
    nobeforeafter, tcbox raise base, colframe=teal!60!black,
    colback=teal!10, boxrule=1pt,
    #1}

\usepackage{subfig}

\tcbuselibrary{skins}
\tcbset{tab2/.style={enhanced,fonttitle=\bfseries,
colback=white,colframe=gray,colbacktitle=lb,
coltitle=black,center title}}

\tcbset{tab2/.style={enhanced,fonttitle=\bfseries,
colback=white,colframe=gray,colbacktitle=white,
coltitle=black,center title}}

\newtcolorbox{resbox}{standard jigsaw,opacityback=0,boxrule=1pt,top=0.9ex,bottom=0.9ex,colbacktitle=teal!10!white,colframe=teal!70!white}

\usepackage{adjustbox}
\usepackage{wrapfig}

\def\ztitle{Exact Topological Compliance:\\ Generating Persistence-Equivalent Graphs}
\title{\ztitle}

\author{
  Mattie Ji$^*$ \\
  University of Pennsylvania \\
  \texttt{mji13@sas.upenn.edu} \\
   \And
  Indradyumna Roy\thanks{Equal Contributions.} \\
  Aalto University\\
  \texttt{indradyumna.roy@aalto.fi}\\
  \And
  Vikas Garg \\
YaiYai Ltd $\&$ Aalto University  \\
\texttt{vgarg@csail.mit.edu}
}

\begin{document}

\maketitle

\begin{abstract}
Persistent homology summarizes the topological evolution of a graph as
persistence diagrams (PDs). Generating graphs whose topology matches a
target descriptor is increasingly important, yet existing topology-aware
generative models rely on soft regularization, producing graphs whose PDs may only be approximately similar.
We initiate a principled investigation of the generation problem: produce graphs that realize a target PD exactly. We propose two approaches guided by the implicit local and global constraints a target PD imposes. The local approach builds degree-based PD-equivalent graphs iteratively, and we prove that every output realizes the target PD and that every degree-based PD-equivalent graph is reachable. However, as shown thereafter,  local generation necessarily trades off between sample diversity and reaching infeasible states. We therefore recast generation as a global constraint satisfaction problem with a complete encoding solvable by modern constraint programming solvers. This is applicable to any vertex-based permutation equivariant filtration scheme and is minimal under the  degree filtration. Experiments on four standard graph generation benchmarks confirm that both methods realize the target PD exactly. Together, our methods provide a principled treatment of strict topology-compliant graph generation. Code is available at \url{https://github.com/Aalto-QuML/ExactCompliance}.
\end{abstract}

\section{Introduction}

Topological structure is a primary driver of function in many real-world graphs. The presence of components, cycles, and higher-order voids govern stability and catalytic activity in molecules~\cite{xia2015persistent, townsend2020representation, wang2025inverse}, binding affinity and virality~\cite{wang2021analysis, wei2022topological}, information flow in brain networks~\cite{sizemore2018cliques}, certain patterns in medical imaging \citep{topocl2026}, and structural risk in transportation, trade, and infrastructure networks~\cite{woldemariam2025topological, li2021hybrid, villines2026topological}.
Persistent homology~\cite{carlsson2009topology, edelsbrunner2012persistent} is the standard tool for extracting this information. 
Given a graph $G$ and a filtration scheme, persistent homology tracks the births and deaths of homological features along the induced subgraph sequence. The result is a multiset of pairs $(b,d)$ called the persistence diagram (PD). PDs are stable under perturbation
\citep{Cohen-Steiner_Edelsbrunner_Harer_2006}, computable with efficient algorithms
\citep{Otter_2017}, and have become the workhorse of topological data analysis (TDA) in machine learning,
underpinning almost every topology-aware approach to graph and higher-order neural networks~\citep{pmlr-v119-hofer20b, immonen2023going, ballester2024expressivitypersistenthomologygraph, horn2021topological, ji2025graphpersistencegoesspectral, ji2026contraction, yan2022neural, pmlr-v108-carriere20a, russold2024graphcode, verma2024topological, tdl_position} and providing topological signals that standard message passing alone cannot recover (e.g., cycle detection).

In parallel, graph generation has recently emerged as a central tool across these same
domains, with autoregressive \citep{you2018graphrnn}, one-shot
\citep{simonovsky2018graphvae}, and diffusion-based
\citep{niu2020permutation, jo2022score, vignac2022digress}
models. Recent
work conditions generation on domain-specific structural properties:
spectral profile \citep{martinkus2022spectre}, molecular validity
\citep{de2018molgan, jin2020hierarchical}, and structural
constraints like planarity and acyclicity \citep{madeira2024generative}. 
Conditioning on \textit{topology} is a natural next step, but largely unexplored. A recent attempt~\cite{parktopology}, augments discrete diffusion with a
$1$-Wasserstein PD-matching loss.
This soft regularizer nudges the learned distribution toward topological plausibility, but offers no guarantees that individually generated graphs attain the desired topological structure (e.g., the target number of cycles and components).


\begin{table*}[t!]
\caption{Overview of the core contributions in our paper.}
    \vspace{-8pt}
    \input{summary}
    \label{fig:overview}
    \vspace{-8pt}
\end{table*}

\xhdr{PD-conditioned graph generation from first principles.}
A PD is a finite summary of an underlying combinatorial structure and therefore collapses many graphs to the same image. 
The most widely studied filtration scheme in the graph learning literature is the vertex-degree function~\cite{ballester2024expressivitypersistenthomologygraph, horn2021topological, parktopology,yan2022neural, pmlr-v119-hofer20b, 10.5555/3294771.3294927}: it is permutation-equivariant and aligns naturally with the local message-passing inductive bias of GNNs. We adopt it as the primary filtration scheme throughout this paper and discuss extensions to other vertex-based, permutation-equivariant schemes (centrality \cite{liang2025centralityguided}, square clustering coefficients \cite{ji2025on}, triangle count \cite{10.5555/3666122.3668874}, etc.) where they apply.

\begin{wrapfigure}{r}{0.55\textwidth}
  \centering
  \vspace{-10 pt}
  \begin{adjustbox}{width=0.55\textwidth}
  \input{Figures/intro_example}
  \end{adjustbox}
  \caption{Two non-isomorphic graphs with identical degree-based persistence. 
  Vertex color encodes degree; 
  edges inherit the color of their max-degree endpoint.}
  \label{fig:intro-ex}
  \vspace{-10pt}
\end{wrapfigure}
Even under this canonical choice, a PD is far from a complete descriptor. Figure~\ref{fig:intro-ex} shows two non-isomorphic graphs with identical degree-based PDs; in general, graphs with differing  structures and connectivity patterns, can share the same PD under a given filtration scheme. 
Each PD therefore
identifies  a \emph{class of compatible graphs}, and conditioning on a PD amounts to sampling from this class.  This raises the natural
question: \emph{given an input PD and filtration scheme $\Phi$, can we generate strictly PD-compatible graphs under $\Phi$?}

This motivates a precise criterion for any PD-conditioned generator. 
Given a filtration scheme $\Phi$ and a graph $G$, we let $\mathcal{P}(G^{\Phi}_{\bullet})$ denote the PDs associated to the induced subgraph sequence of $G$ under $\Phi$. 
We call a generator \textbf{\emph{strictly PD-compliant}} if it satisfies
\begin{enumerate}[leftmargin=*, topsep=0pt, itemsep=0pt]
\item \textbf{Soundness.} Every sample $G$ from the generator
satisfies $\mathcal{P}(G^{\Phi}_{\bullet}) = D$ exactly.
\item \textbf{Completeness.} Every PD-compatible graph can be created by the generator.
\end{enumerate}

Soft-conditioned models violate soundness by construction. Aiming for strict
PD-compliance is the natural starting point: it makes the conditioning
problem precise, and exposes what an input PD does and does not pin down. Once strict compliance is attained, it can be relaxed in applications.

\xhdr{Why exact compliance?}
Exact generation produces families of graphs with identical persistence diagrams but different combinatorial properties. Such families provide controlled datasets and benchmarks for training and evaluation in topological learning, and one can test whether a model truly exploits the prescribed topology or leans on unrelated statistics. They also give clean training and evaluation data for graph generation and reasoning from topological descriptors, with no label noise from approximate PD matching.

Exact generation can also help find counterexamples to conjectures about what properties are determined by persistence, and approximate generation cannot serve this purpose at all. Exactness is also important when applications critically depend on the exact number of topological features, such as ring counts in molecular generation~\citep{ritchie2009impact} and coverage holes in sensor networks~\citep{de2007coverage}.

\xhdr{Contributions.} We initiate a principled investigation of strict PD-compliant graph generation; Table~\ref{fig:overview} summarizes the formal statements. Our contributions are threefold.

\textbf{(1) Local iterative sampler (Section~\ref{sec::local}).} Local decisions are a prerequisite for any globally correct generation and rely only on the neighborhood-level information standard GNN pipelines already provide, making them the natural starting point. We give a sampler that is sound and complete on general, vertex-based, and degree filtrations, but the sampler can run into infeasible states. We subsequently prove an impossibility result: any local approach must trade off between locally PD-compliant choices and reaching globally infeasible states.

\textbf{(2) Global generation via constraint programming (Section~\ref{sec:global}).} The impossibility motivates us to characterize the global constraints. We recast generation as a constraint satisfaction problem applicable to any vertex-based permutation-equivariant filtration scheme, and give concrete encodings for degree, triangle and clique counts, clustering, closeness, Katz, and PageRank. We prove our method is strictly PD-compliant, and is a minimal encoding for degree filtrations.

\textbf{(3) Empirical validation (Section~\ref{sec::experiments}).} On \textsc{Comm-20}, \textsc{Planar}, \textsc{SBM}, and \textsc{ENZYMES}, both methods realise the target degree-PD exactly, while neural diffusion baselines with PD-conditioning regularisers leave a structural approximation gap that does not close with more training.

\textbf{Related Works.} Persistence-based reconstructions have been studied for embedded graphs, geometric shapes, and metric-measure spaces \citep{chazal2008towards, Chazal2011, 7789622, BELTON2020101658, DBLP:conf/cccg/SchenfischFMMW22,https://doi.org/10.1111/sapm.12526,hacquard2026statistical,ophelders2025sweepingorderssimplicialcomplex}. These approaches exploit extrinsic information such as coordinates, directions, distances, or sampling measures to recover or analyze a geometric realization. In contrast, we consider abstract graphs without prescribed embeddings and characterize the set of graphs realizing specified persistence diagrams under an intrinsic, permutation-equivariant filtration. Our framework therefore complements geometric reconstruction methods by studying graphs sharing a common intrinsic PH signature. We only consider graph filtrations, as this is quite common for the usage of persistent homology in the GNN literature \citep{pmlr-v119-hofer20b, horn2021topological, immonen2023going, pmlr-v108-zhao20d, 10.5555/3666122.3668874}.

To compare the two, the persistent homology transform (PHT) \cite{turner2014fmht} is injective \cite{Ghrist2018,CurryMukherjeeTurner2022} on embedded graphs, but it does not resolve the graph isomorphism problem, as two different embeddings of the same abstract graph can have different PHTs. Our setting contains no coordinates, and we want to reconstruct abstract graphs with equivalent PDs, with the goal of benchmarking for graph isomorphisms. Another deviation from typical TDA considerations is that the GNN literature typically requires permutation equivariance for their filtrations. This imposes a strong rigidity on filtrations that typically do not show up for embedded graphs. For example, any permutation equivariant function on a cycle graph has to be constant, but certainly we can look at the height filtration on embedded cycles.

\section{Background and Preliminaries}

We assume a graph $G$ with vertex set $V(G)$ and edge set $E(G)$ is finite, simple, and undirected.
\begin{definition}
Let $G$ be a graph, a (general) \textbf{filtration} of $G$ is a sequence of subgraphs of $G$ as
\[G_{\bullet}: \emptyset = G_{-1} \subseteq G_{0} \subseteq ... \subseteq G_n = G.\]
Note here we do allow the $G_i = G_{i+1}$ for some $i$'s. We say a filtration of $G$ is \textbf{vertex-based} if each $G_i$ is an induced subgraph of $G$ on its vertex set $V(G_i) \subseteq V(G)$. We say a vertex based filtration of $G$ is \textbf{degree-based} if $V(G_i) = \{v \in V(G)\ |\ \deg(v) \leq i\}$ and $n$ is the maximum degree in $G$. We write the degree-based filtration as $G^{\operatorname{deg}}_{\bullet}$.
\end{definition}

Given any filtration $G_{\bullet}$ of $G$, there are lists of persistence pairs tracks the evolution of the births and deaths of its topological features on its (0-dimensional) vertex-features and (1-dimensional) cycle-features. The lists are constructed as follows.
\begin{enumerate}[noitemsep,topsep=0pt, leftmargin=*]
    \item For each $b \geq 0$, we can view $G_{b-1} \subseteq G_{b}$ as a sequence of adding vertices first and then adding edges to $G_{b}$. Any vertex $v$ added is said to be \textbf{born at time} $b$ and is thought of as a vertex representing the connected component that is $\{v\}$.
    \item Whenever an edge is added between two components with vertex representatives $v$ and $w$ respectively in the stage $G_{d-1} \subseteq G_d$. We kill off whichever vertex in $v$ and $w$ that is born later,  assign its \textbf{death time} to $d$, and assign the other vertex as the representative of the new merged component. If $v$ and $w$ were born at the same time, we simply choose one of them to kill off. A vertex representing a component alive at $G_n = G$ is assigned to die at time $+\infty$.
    \item Whenever an edge is added from the same component to itself in the stage $G_{b-1} \subseteq G_b$, this is an \textbf{independent cycle}, and we assign a tuple of the form $(b, \infty)$ to it.
\end{enumerate}

\begin{definition}\label{def::persistence_pairs}
Let $G_{\bullet}$ be a filtration of $G$, its \textbf{persistence diagram} $\mathcal{P}(G_{\bullet})$ is a pair of two \textbf{unlabeled multisets} $\mathcal{P}_0(G_{\bullet})$ and $\mathcal{P}_1(G_{\bullet})$. Here, $\mathcal{P}_0(G_{\bullet})$, the 0-dimensional persistence diagram, is the list of birth and death time $(b, d)$ for each vertex $v \in V(G)$. $\mathcal{P}_1(G_{\bullet})$, the 1-dimensional persistence diagram, is the list of cycle birth times $(b, \infty)$ for each independent cycle in $G$.
\end{definition}

Note the lists $\mathcal{P}_0(G_{\bullet}), \mathcal{P}_1(G_{\bullet})$ are independent of the choices made in the construction, and the final output does not identify persistence pairs with simplices of the graph. Definition~\ref{def::persistence_pairs} is a special case of taking persistence diagrams on simplicial complexes (see \cite{Dey_Wang_2022}). Here we restricted to the case of filtrations on graphs where the only non-empty PDs are in degree $0$ and $1$.

\begin{remark}\label{rmk::filtration} Some sources \cite{ ballester2024expressivitypersistenthomologygraph, immonen2023going} alternatively define a filtration of $G$ to come from a function $f: V(G) \cup E(G) \to \Rbb$ such that for any edge $e = (v, w)$, $f(e) \geq \max(f(v), f(w))$. The persistence pairs of any $f$ is equivalent to that of Definition~\ref{def::persistence_pairs} by renormalizing $f$ to land in $\Zbb_{\geq 0}$. This is possible as $f$ can only take finitely many values, and the expressive power of persistence pairs only depend on the relative ordering of its values on the simplicies. Being vertex-based is equivalent to having $f(e) = \max(f(v), f(w))$. Note the degree function is vertex-based and already normalized.
\end{remark}
In general the persistence pairs depend on the filtration. When such filtration is canonically defined (e.g., degree-based), it is a graph isomorphism invariant. Here, ``canonically defined" refers to any permutation equivariant filtration scheme $\Phi$ (see Section 3 of \cite{ballester2024expressivitypersistenthomologygraph}). For our purposes, we define:

\begin{definition}\label{def::filtration_scheme}
A \textbf{filtration scheme} $\Phi$ assigns every graph $G$ a filtration function $\Phi_{G}: V(G) \cup E(G) \to \Rbb$. $\Phi$ is \textbf{permutation equivariant} if for any graph isomorphism $\psi: G \to H$, $\phi_{G} = \phi_{H} \circ \psi$. For any such $\Phi$ valued in $\Zbb_{\geq 0}$, we write $H^{\Phi}_{\bullet}$ to denote the sequence of subgraphs associated to $\Phi$.
\end{definition}

\begin{proposition}[Proposition 1 of \cite{ballester2024expressivitypersistenthomologygraph}]
A permutation equivariant filtration scheme gives the same lists of persistence diagrams on isomorphic graphs.
\end{proposition}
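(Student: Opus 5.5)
The plan is to transport the whole filtration along the isomorphism and then argue that the persistence construction only depends on the labelled filtration up to relabelling. Fix an isomorphism $\psi: G \to H$, i.e.\ a bijection $V(G)\to V(H)$ that induces a bijection $E(G)\to E(H)$. After renormalizing both filtration functions to $\Zbb_{\geq 0}$ as in Remark~\ref{rmk::filtration}, permutation equivariance gives $\Phi_G = \Phi_H \circ \psi$ on $V(G)\cup E(G)$. The renormalization is order-preserving on value sets, and $\Phi_G$ and $\Phi_H$ take the same set of values because $\psi$ is a bijection. Hence the renormalized functions also satisfy the equivariance identity. Two consequences follow:
\begin{itemize}
\item the filtrations have the same length $n$;
\item for every $i$, $\psi$ restricts to an isomorphism $G^{\Phi}_i \to H^{\Phi}_i$. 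Indeed, a simplex $\sigma$ lies in $G^{\Phi}_i$ iff $\Phi_G(\sigma)\le i$, iff $\Phi_H(\psi(\sigma))\le i$, iff $\psi(\sigma)\in H^{\Phi}_i$.
\end{itemize}
So $\psi$ is an isomorphism of filtered graphs.

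Next, run the three-step pairing construction preceding Definition~\ref{def::persistence_pairs} on $G^{\Phi}_\bullet$, making some sequence of choices (order of adding edges within a stage, which of two equal-birth vertices to kill). Push every choice forward along $\psi$: add $\psi(e)$ in $H$ whenever $e$ is added in $G$, and kill $\psi(v)$ whenever $v$ is killed. By induction on the number of processed simplices, we claim that at every moment $\psi$ maps:
\begin{itemize}
\item components to components,
\item representatives to representatives,
\item birth times to equal birth times.
\end{itemize}
The inductive step uses two facts. First, the edge $\psi(e)$ joins two components exactly when $e$ does, and closes a cycle exactly when $e$ does. Second, the rule ``kill the later-born representative'' is phrased entirely in terms of birth times, and these are preserved. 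Hence the multisets of pairs $(b,d)$ and $(b,\infty)$ coincide.

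Finally, invoke the remark after Definition~\ref{def::persistence_pairs} that the output multisets do not depend on the choices made. The transported run is therefore a legitimate computation of $\mathcal{P}(H^{\Phi}_\bullet)$, and so $\mathcal{P}(G^{\Phi}_\bullet)=\mathcal{P}(H^{\Phi}_\bullet)$. The only delicate point is the bookkeeping around renormalization and choice-independence. I would handle it as above rather than reprove choice-independence, since the paper takes that fact as given; everything else is a routine induction.
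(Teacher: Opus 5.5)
Your proof is correct. Note that the paper does not prove this statement itself; it is quoted from \cite{ballester2024expressivitypersistenthomologygraph}, so there is no in-paper argument to match.

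The usual proof of this fact is functorial. Once you know, as in your first step, that $\psi$ restricts to isomorphisms $G^{\Phi}_i \to H^{\Phi}_i$ commuting with the inclusions, applying homology levelwise gives isomorphic persistence modules for $G^{\Phi}_\bullet$ and $H^{\Phi}_\bullet$ in each degree. Uniqueness of the interval decomposition then gives equal diagrams.

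Your route shares that first step and replaces the second by transporting a run of the three-step pairing procedure along $\psi$. This buys two things. It stays entirely inside the paper's algorithmic definition (Definition~\ref{def::persistence_pairs}), which never mentions persistence modules. It also makes the bookkeeping of components, representatives, birth times and tie-breaks fully explicit.

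The cost is that it is tailored to graphs, i.e.\ degrees $0$ and $1$. It also leans on the choice-independence remark, which is usually proved via exactly the barcode uniqueness the functorial argument uses directly. That appeal is harmless, since without it $\mathcal{P}(H^{\Phi}_\bullet)$ would not be well defined in the first place.

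One small detail is worth stating explicitly: the return trip through renormalization. Because $G$ and $H$ share a value set, the same order-preserving map $r$ is applied to both. So equality of the renormalized diagrams pulls back along $r^{-1}$ to equality of the original ones.
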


In this work, we are interested in generating graph filtrations whose persistence pairs match a given PH signature. For convenience, we will use the following two terminologies.
\begin{definition}
We say two filtrations $G_{\bullet}$ and $H_{\bullet}$ are \textbf{PD-equivalent} (persistence-diagram-equivalent) if $\mathcal{P}(G_{\bullet}) = \mathcal{P}(H_{\bullet})$. We say $G$ and $H$ are \textbf{degree-PD-equivalent} if $\mathcal{P}(G_{\bullet}^{\operatorname{deg}}) = \mathcal{P}(H_{\bullet}^{\operatorname{deg}})$. The \textbf{$k$-th truncation} $\mathcal{P}^k(G_{\bullet})$ of a filtration $G_{\bullet}$ is $\mathcal{P}(G_{\bullet}^k \coloneqq G_{-1} \subseteq G_{0} \subseteq ... \subseteq G_k)$.
\end{definition}

\section{Local Generation}\label{sec::local}

In this section, we describe a local generation process for producing PD-equivalent random filtrations. We then modify the process to adapt to vertex-based and degree-based filtrations. The methods here take a \textbf{local approach} to generation by iteratively generating a sequence of subgraphs $H_{\bullet}$ such that their $k$-th truncations (1) $\mathcal{P}^k(H_{\bullet}) = \mathcal{P}^k(G_{\bullet})$ agree for each $k \in \{-1, 0, ..., n\}$ and (2) the construction of $H_k$ should only depend on $H_{k-1}$ and PDs present in $G^{k}_{\bullet}$. The algorithms can generate all possible PD-equivalent filtrations if they terminate, but they can run into infeasible states that will be explained later in this section.

\subsection{Producing PD-Equivalent General Filtrations}\label{sec::general_fil}
Here, we describe a local generation process to iteratively create a general filtration $H_{\bullet}$ such that $\mathcal{P}(H_{\bullet}) = \mathcal{P}(G_{\bullet})$. The exact details are outlined in Algorithm~\ref{alg:pd_to_graph}  (see Appendix~\ref{app:algorithms}). Since we aim to make $H_{\bullet}$ such that its truncations agrees with that of the hidden input $G_{\bullet}$, we describe a procedure to inductively build $H_{k}$ from a given $H_{k-1}$ whose previous truncations agree with that of $G_{\bullet}$.
\begin{enumerate}[noitemsep,topsep=0pt, leftmargin=*]
    \item (\textbf{Vertex Addition}): For each $(k, d) \in \mathcal{P}_0(G_{\bullet})$, add a new vertex to $H_k$ with birth time $k$.
    \item (\textbf{Component Merging}): We traverse through the list of tuples in $\mathcal{P}_0(G)$ that die at time $k$ in \textbf{random order}. For each tuple $(b, k)$, we pick a \textbf{random component} $X$ with birth time $b$ and a \textbf{random component} $Y \neq X$ with birth time $\leq b$ and add an edge between a random pair of nodes in $X \times Y$. We then update the component birth times appropriately.
    \item (\textbf{Cycle Addition}): For each $(k, \infty) \in \mathcal{P}_1(G_{\bullet})$, add an edge between a pair $(p, q) \in X \times X$ for any component $X$ such that $p, q$ does not already share an edge.
\end{enumerate}
Note here it is important to traverse in random order during the component merging step. Otherwise, there are PD-equivalent filtrations to $G_{\bullet}$ that Algorithm~\ref{alg:pd_to_graph} could not make.

\begin{resbox}
\begin{theorem}\label{thm::general_filtration}
   Given $(\mathcal{P}_0, \mathcal{P}_1)$ the persistence pairs coming from a hidden filtration $G_{\bullet}$. If Algorithm~\ref{alg:pd_to_graph} terminates, the filtration $H_{\bullet}$ is PD-equivalent to $G_{\bullet}$. Furthermore, any filtration $H_{\bullet}'$ that is PD-equivalent to $G_{\bullet}$ can be created by Algorithm~\ref{alg:pd_to_graph}.
\end{theorem}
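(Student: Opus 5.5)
Both soundness and completeness will be proved by induction on the stage $k$, comparing truncations $\mathcal{P}^k$. The invariant is the following: after stage $k$, $\mathcal{P}^k(H_\bullet)=\mathcal{P}^k(G_\bullet)$. In addition, the multiset of birth times of the currently alive component representatives of $H_k$ equals $\{b : (b,d)\in\mathcal{P}_0(G_\bullet),\ b\le k<d\}$. The base case $k=-1$ is trivial, since both graphs are empty. The key observation is that $\mathcal{P}^k$ is determined by $\mathcal{P}^{k-1}$ together with the vertices born at $k$, the merges at $k$ (recorded as $(b,k)$ pairs) and the independent cycles born at $k$. So it suffices to check that each step of Algorithm~\ref{alg:pd_to_graph} contributes exactly the pairs of $\mathcal{P}_0,\mathcal{P}_1$ with the right index $k$.

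\textbf{Soundness.} Assume the algorithm terminates, so every random choice was available. Vertex Addition creates one vertex born at $k$ for each $(k,d)\in\mathcal{P}_0$; this gives the correct birth times, and the components they represent are singletons. In Component Merging, for a tuple $(b,k)$ we join a component $X$ of birth $b$ to a component $Y\neq X$ of birth $\le b$. By the elder rule, the representative killed is the one of $X$ (or, when births tie, we may choose it to be $X$'s). This records exactly the pair $(b,k)$ and keeps the invariant on alive birth times. Cycle Addition adds an edge inside a single component, so it creates no $0$-dimensional death and exactly one independent cycle born at $k$. Vertices born at $k$ that must die at $k$ are handled by the merging step, because all vertices are added before any edges. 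Pairs $(b,\infty)$ correspond to components that are never merged. Hence $\mathcal{P}^k(H_\bullet)=\mathcal{P}^k(G_\bullet)$ for all $k$, and at $k=n$ we get PD-equivalence. I will also note that the PD is independent of the order in which edges within a stage are processed (as remarked after Definition~\ref{def::persistence_pairs}), so processing merges before cycles within a stage is legitimate.

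\textbf{Completeness.} Given $H'_\bullet$ PD-equivalent to $G_\bullet$, I will show that some run of the algorithm reproduces $H'_k$ exactly at every stage, up to relabelling vertices. Inductively, suppose $H_{k-1}=H'_{k-1}$. First add the vertices of $V(H'_k)\setminus V(H'_{k-1})$; their number equals the number of pairs with birth $k$, by PD-equivalence. Next, order the new edges $E(H'_k)\setminus E(H'_{k-1})$ as follows: first a spanning-forest set of edges, those that merge distinct components when added sequentially, then the rest. Each merging edge kills some representative; it yields a tuple $(b,k)$ with $X$ the younger component and $Y$ the older one, and $Y$ has birth $\le b$. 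The random order of traversal over the tuples dying at $k$ lets us match this sequence. This is exactly why randomness in the order is needed: which components are available depends on earlier merges. The random choice of $X$, $Y$ and of the pair in $X\times Y$ can select this edge. The remaining edges each lie within a single component and are new edges, so each is an admissible Cycle Addition choice. Their number matches the count of $(k,\infty)\in\mathcal{P}_1$, because $H'$ and $G$ have the same PD.

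The main obstacle is the bookkeeping in Component Merging. The tuples $(b,k)$ are unlabeled, and several components may share a birth time; we must verify that the elder-rule death recorded for each edge of $H'$, under our chosen edge order, coincides with a tuple the algorithm may draw at that moment. I would handle this by fixing, for the chosen order of $H'$'s merging edges, the induced sequence of death tuples. This sequence is a permutation of the multiset of tuples dying at $k$, which is guaranteed by PD-equivalence. The algorithm's random order is then taken to be this permutation. With equal births, the tie-breaking freedom in the definition allows either endpoint component to play the role of $X$.
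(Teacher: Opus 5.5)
Your proposal is correct and follows essentially the same route as the paper. Soundness is shown by checking that Vertex Addition, Component Merging and Cycle Addition each contribute exactly the pairs indexed by $k$. Completeness is shown by induction on $k$: you split the stage $H'_{k-1}\subseteq H'_k$ into vertex additions followed by single-edge additions, match the component-killing edges to a suitable shuffle of the tuples dying at $k$, and defer the cycle-creating edges to the Cycle Addition step.

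The only difference is cosmetic. You reorder the new edges as a spanning forest followed by the remaining edges. The paper instead keeps the original edge order and argues that dropping the cycle-creating edges does not change which components the killing edges merge.
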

\end{resbox}

\begin{wrapfigure}{r}{0.45\textwidth}
\vspace{-20pt}
  \centering
  \begin{adjustbox}{width=0.4\textwidth}
  \hspace{-5pt}\input{Figures/random_cycle_capacity}
  \end{adjustbox}
  \caption{An example of how Algorithm~\ref{alg:pd_to_graph} can run out of cycles in the cycle adding stage.}\label{fig::cycle_capacity}
\vspace{-15pt}
\end{wrapfigure}

The statement of Theorem~\ref{thm::general_filtration} indicates that Algorithm~\ref{alg:pd_to_graph} may not be able to terminate sometimes. It turns out Algorithm~\ref{alg:pd_to_graph} may run out of cycles to add in the cycle addition stage.

\begin{example}\label{exp::no_more_cycle}
Let $G_{\bullet}$ be the hidden filtration given in Figure~\ref{fig::cycle_capacity}. It is perfectly possible for Algorithm~\ref{alg:pd_to_graph} to create $H_0 \subseteq H_1$, as $\mathcal{P}(\emptyset \subseteq H_0) = \mathcal{P}^0(G_{\bullet})$ and $\mathcal{P}(\emptyset \subseteq H_0 \subseteq H_1) = \mathcal{P}^1(G_{\bullet})$. However, when going from $H_1$ to $H_2$, one cannot add a single edge to run out of cycles.
\end{example}

We remark that cycle addition would not be an obstruction if we allow multi-edges between nodes. For convenience, we derive a simple criterion for when cycle addition would succeed. For a graph $B$, we define its \textbf{cycle capacity} as $\operatorname{cap}^{\operatorname{cyc}}(B) \coloneqq \sum_{A, \text{ component of }B} \frac{|V(A)|(|V(A)|-1)}{2} - |E(A)|$.

\begin{proposition}\label{prop::cycle_capacity}
    Suppose $H_{k-1}$ has been iteratively built. Algorithm~\ref{alg:pd_to_graph} can always succeed in the vertex addition and component merging step. Furthermore, if the number of the tuples $(k, \infty)$ is bounded by $\operatorname{cap}^{\operatorname{cyc}}(H_{k-1})$, then Algorithm~\ref{alg:pd_to_graph} can succeed in the cycle addition step. 
\end{proposition}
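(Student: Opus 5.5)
The plan is to verify the three steps in order, keeping track of one invariant. By induction on $k$, the construction keeps $\mathcal{P}^{k-1}(H_{\bullet}) = \mathcal{P}^{k-1}(G_{\bullet})$. In particular, the multiset of birth times of components alive in $H_{k-1}$ equals the multiset of birth times of components alive in $G_{k-1}$, since both equal the multiset of births $b \le k-1$ of pairs $(b,d)$ with $d \ge k$.

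\textbf{Vertex addition.} This step always succeeds, because adding isolated vertices imposes no constraint on a simple graph.

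\textbf{Component merging.} Consider the intermediate graph in the hidden filtration: $G_{k-1}$ plus the vertices born at time $k$. Its multiset $M$ of representative birth times agrees with ours. In $G_\bullet$, the merges at stage $k$ happen in some order, and each merge of a pair $(b,k)$ requires a component with birth time $b$ and a distinct component with birth time $\le b$. The representative of the merged component keeps the older birth time.

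For a random ordering, I will argue directly using only the multiset $M$ and the list $L$ of tuples $(b,k)$. Each death removes exactly one copy of its birth value $b$ from $M$. Every component alive after stage $k$ in $G$ corresponds to a pair with death $>k$. So $L$ removes a sub-multiset of $M$, and after the removals at least one component remains whose birth time is $\le$ every removed value $b$. This holds because the survivor of the very first merge in $G$ has birth time no larger than any killed vertex, by the elder rule. An equivalent fact: the minimum of $M$ is never killed unless another copy of it survives.

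Then, whatever order we process $L$ in, when we process $(b,k)$ a component $X$ with birth $b$ is still present, because the removals are a sub-multiset. A component $Y \ne X$ with birth $\le b$ also exists, namely a component attaining the current minimum birth time. This minimum is at most the minimum of $M$ restricted to the survivors, so it is at most $b$. If the minimum equals $b$, we need a second copy, which exists since that minimum copy survives to the end while $X$ is killed.

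Two nodes in distinct components are never adjacent, so the new edge is legal in a simple graph. Moreover, $X$ dies and $Y$ keeps birth $\le b$, so the recorded pair is exactly $(b,k)$.

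\textbf{Cycle addition.} After merging, the cycle capacity does not decrease below $\operatorname{cap}^{\operatorname{cyc}}(H_{k-1})$. New vertices only add capacity. Merging components $A$ and $B$ with one edge changes capacity by
\[
|V(A)||V(B)| - 1 \ge 0 .
\]
Each cycle edge is added inside a component with a missing pair and lowers the capacity by exactly one. Such an edge creates an independent cycle, giving a pair $(k,\infty)$, and does not merge components. Therefore, if the number of tuples $(k,\infty)$ is at most $\operatorname{cap}^{\operatorname{cyc}}(H_{k-1})$, a missing pair is available at every step, and the cycle addition step succeeds.

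\textbf{Main obstacle.} I expect the hardest part to be the merging argument: showing that an arbitrary random processing order never gets stuck. This needs the careful multiset bookkeeping above, particularly the tie case where $Y$ must have the same birth time as $X$.
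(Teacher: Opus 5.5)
Your argument is correct and follows the paper's outline. The inductive agreement of truncated diagrams gives equal birth-time multisets, so merging never gets stuck. Cycle capacity is non-decreasing through vertex addition and merging, so each $(k,\infty)$ tuple can consume one missing pair. Your identity $|V(A)||V(B)|-1\ge 0$ and your multiset bookkeeping for an arbitrary processing order and for ties are cleaner and more complete than the paper's case analysis and one-line merging claim.

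One sentence needs replacing. The survivor of the very first merge in $G$ need not have birth at most every killed birth. For example, take births $0,2,5,5$, merge the two $5$'s first, then merge $0$ with $2$; the first survivor has birth $5>2$. Justify $\min(M\setminus L)\le\min(L)$ by your ``equivalent fact'' instead. It holds because every merge keeps the older birth, so some component of birth $\min M$ persists to the end of stage $k$.
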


In particular, $\operatorname{cap}^{\operatorname{cyc}}(B)$ is always non-negative, so we have as a corollary that Algorithm~\ref{alg:pd_to_graph} always achieves the desired generation on acyclic graphs.
\begin{corollary}\label{cor::general_trees}
 Algorithm~\ref{alg:pd_to_graph} always terminates if $\mathcal{P}_1(G_{\bullet})$ is empty (i.e., $G$ is acyclic).
\end{corollary}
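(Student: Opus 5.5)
The plan is to derive the corollary directly from Proposition~\ref{prop::cycle_capacity}, checking that none of the three steps of Algorithm~\ref{alg:pd_to_graph} can fail at any stage $k \in \{0, \dots, n\}$. I would argue by induction on $k$. Assume $H_{k-1}$ has been built with $\mathcal{P}^{k-1}(H_{\bullet}) = \mathcal{P}^{k-1}(G_{\bullet})$; the base case is $H_{-1} = \emptyset$. By the first part of Proposition~\ref{prop::cycle_capacity}, the vertex addition and component merging steps at stage $k$ always succeed.

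It remains to handle the cycle addition step. Since $\mathcal{P}_1(G_{\bullet})$ is empty, the number of tuples $(k,\infty)$ in $\mathcal{P}_1(G_{\bullet})$ is $0$ for every $k$. The cycle capacity $\operatorname{cap}^{\operatorname{cyc}}(H_{k-1})$ is always nonnegative: each component $A$ of the simple graph $H_{k-1}$ satisfies $|E(A)| \le \binom{|V(A)|}{2}$, so every summand is nonnegative. Hence the hypothesis $0 \le \operatorname{cap}^{\operatorname{cyc}}(H_{k-1})$ of the second part of Proposition~\ref{prop::cycle_capacity} holds, and the cycle addition step succeeds. In fact it is vacuous, since no edges need to be added.

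Therefore every stage $k$ completes. The algorithm performs finitely many stages ($k$ up to $n$), each with finitely many operations, so it terminates.

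The only point needing care is the application of Proposition~\ref{prop::cycle_capacity}. That proposition presupposes that $H_{k-1}$ has been validly built, and this is exactly what the inductive hypothesis supplies. Also, the component merging step at stage $k$ must be applicable in the graph actually produced, rather than only in the hidden $G$; the proposition guarantees this given agreement of truncations. I do not expect a genuine obstacle here; the argument is a direct specialization.
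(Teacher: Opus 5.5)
Your proposal is correct and follows essentially the same route as the paper: induct over the stages, use Proposition~\ref{prop::cycle_capacity} for the vertex-addition and component-merging steps, and observe that the cycle-addition demand is $0$ while $\operatorname{cap}^{\operatorname{cyc}}$ is nonnegative. Your explicit justification of nonnegativity via $|E(A)| \le \binom{|V(A)|}{2}$, and your remark that the cycle step is then vacuous, are small but welcome additions to the paper's brief argument.
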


In practice, we could run Algorithm~\ref{alg:pd_to_graph} repeatedly and sample from successful outputs (see Section~\ref{sec::exp_local_vs_global}).

\begin{remark}
Although we focused on vertex-based filtration, there are also \textbf{edge-based filtrations}. A filtration is edge-based if and only if all tuples in $\mathcal{P}_0$ have birth time $0$. Thus, Algorithm~\ref{alg:pd_to_graph} (Appendix~\ref{app:algorithms}) can generate all possible edge-based filtrations matching the same PD.
\end{remark}

\subsection{Producing PD-Equivalent Vertex-Based Filtrations}\label{sec::vertex_based}

Given the PH signatures $\mathcal{P}(G_{\bullet})$ where $G_{\bullet}$ is a hidden vertex-based filtration, we describe a local generation process to iteratively create a PD-equivalent vertex-based filtration $H_{\bullet}$. Algorithm~\ref{alg:vertex_based_gen} (see Appendix~\ref{app:algorithms} for more details) is a modification of Algorithm~\ref{alg:pd_to_graph} where:
\begin{enumerate}[noitemsep,topsep=0pt,leftmargin=*]
    \item (\textbf{Vertex Addition}): For each new vertex added, we record them in a list $N_k$.
    \item (\textbf{Component Merging}): The pair of nodes $(x, y)$ selected has $\{x, y\} \cap N_k \neq \emptyset$.
    \item (\textbf{Cycle Addition}): The pair $(p, q)$ selected to make the cycle has $\{p, q\} \cap N_k \neq \emptyset$.
\end{enumerate}
Modifications 1-3 above would cause obstructions during component merging. To remedy this, we introduce an \textbf{additional change} to the \textbf{component merging step}, motivated by the following.
\begin{proposition}\label{prop::trivial_death_vertex_based}
In a vertex-based filtration, any vertex death at time $k$ forces a trivial pair $(k, k)$.
\end{proposition}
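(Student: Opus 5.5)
We read the statement as follows: in a vertex-based filtration, whenever some vertex is assigned death time $k$, there is a pair $(k,k)\in\mathcal{P}_0(G_{\bullet})$. That is, some vertex born at time $k$ also dies at time $k$; it need not be the vertex that was killed. The plan is to look at the stage $G_{k-1}\subseteq G_k$ and use that edges entering at stage $k$ must touch a vertex entering at stage $k$.

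Consider the edge $e=(v,w)$ whose insertion produced a death at time $k$. Then $e$ merged two distinct components in the process that builds $G_k$ from $G_{k-1}$: first add the new vertices, then add the edges. Because the filtration is vertex-based, $G_{k-1}$ is the induced subgraph on $V(G_{k-1})$, so every edge of $G_k$ with both endpoints in $V(G_{k-1})$ already lies in $G_{k-1}$. Hence $e\notin E(G_{k-1})$ forces at least one endpoint, say $v$, to lie in $V(G_k)\setminus V(G_{k-1})$, and that vertex is born at time $k$.

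Next we track the component of $v$ just before $e$ is added. Each edge added during stage $k$ also has an endpoint born at $k$. So every component created or modified during stage $k$ contains a vertex born at time $k$. The representative of $v$'s component has birth time at most $k$, and possibly strictly less if earlier stage-$k$ edges attached $v$ to an old component.

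We argue by counting over the whole stage.
\begin{itemize}
\item Let $m\ge 1$ be the number of vertices born at $k$, and let $c$ be the number of components of $G_k$ that contain at least one of them.
\item The vertices born at $k$ all lie in these $c$ components, and each such component keeps exactly one representative.
\item A representative born at $k$ can only survive if its component contains no vertex born before $k$.
\item Each of the $c$ components either contains an older vertex, so its surviving representative is older, or consists only of new vertices, so exactly one new vertex survives in it.
\end{itemize}
Hence at most $c$ of the $m$ new vertices survive stage $k$, and the other $m-c$ receive the pair $(k,k)$.

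It remains to show $m>c$ whenever some death occurs at time $k$. This is the main obstacle. A death at time $k$ could in principle kill an old representative: two old components joined through a single new vertex $v$. If $v$ is the only new vertex in that component, merging $v$ first with one old component kills $v$ at time $k$, since $v$ is born later than the old representative. Any subsequent merge with a second old component kills an old representative at time $k$, but $v$ has already produced $(k,k)$.

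To make this rigorous, consider any component $C$ of $G_k$ in which a merge occurs at stage $k$. The merge edge in $C$ touches a new vertex, so $C$ contains one. If $C$ contains an old vertex, then none of its new vertices survive, so at least one $(k,k)$ pair arises. If $C$ contains only new vertices, then it has at least two of them, because the merge joined two distinct pieces and every piece contains a new vertex. Only one of these survives, so again some new vertex dies at time $k$.

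Finally, we use that $\mathcal{P}_0$ is independent of the choices made during the construction, so the argument does not depend on the tie-breaking order.
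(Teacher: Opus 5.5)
Your proof is correct, but the part that actually proves the statement is the component-wise case analysis at the end. The counting detour before it should be dropped, because the reduction it announces, ``it remains to show $m>c$ whenever some death occurs at time $k$,'' is false. Take $G_0=\{a,b\}$ with no edges, and let $G_1$ add one vertex $v$ with edges $va, vb$. This filtration is vertex-based and has deaths at time $1$, yet $m=c=1$. The bound ``at most $c$ new vertices survive'' is too weak. The correct statement is that exactly one new vertex survives per component of $G_k$ consisting only of vertices born at $k$. Your final paragraph uses precisely this, and it is sound:
\begin{itemize}
\item if the component contains an old vertex, its representative is old, so every new vertex in it dies at $k$;
\item if it is all-new and a merge happened in it, it has at least two vertices and only one survivor.
\end{itemize}

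Your route differs from the paper's. The paper argues dynamically, along the insertion order within the stage. It takes the new endpoint $w$ of the death-producing edge and looks at the first edge attached to $w$. At that moment $w$ is an isolated representative with the latest possible birth time $k$, so that merge must kill some vertex born at $k$. You argue statically instead: you use the invariant that each component's representative is an eldest vertex, and you inspect only the final component of $G_k$.

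The paper's version is shorter and pinpoints the merge that produces the trivial pair. Yours does not depend on edge order or tie-breaking. Carried one step further, it gives an exact count: the number of $(k,k)$ pairs equals $m$ minus the number of components of $G_k$ all of whose vertices are born at $k$. This count is positive exactly when some edge enters at stage $k$, so in particular whenever a death occurs at time $k$.
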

(\textbf{Minimum Vertex Pivot}): If there are no deaths, then ignore this step. Otherwise, let $b_{\min} \coloneqq \operatorname{inf}_{\text{component deaths of the form}(b, k)} b$. If the number of component whose birth time is $b_{\min}$ is equal to the number of tuples of the form $(b_{\min}, k)$, we draw an edge from a random $v \in N_{k}$ to a random vertex in a random component whose birth time is $< b_{\min}$. Otherwise, we draw an edge from a random $v \in N_k$ to a random vertex in a random component whose birth time is $\leq b_{\min}$.

There are still obstructions for cycle-addition, but most of preceding discussions in Section~\ref{sec::general_fil} apply.
\begin{theorem}\label{thm::vertex_filtration}
Given $(\mathcal{P}_0, \mathcal{P}_1)$ from a hidden vertex-based filtration $G_{\bullet}$. If Algorithm~\ref{alg:vertex_based_gen} terminates, the filtration $H_{\bullet}$ is vertex-based and PD-equivalent to $G_{\bullet}$. Furthermore, any vertex-based filtration $H_{\bullet}'$ that is PD-equivalent to $G_{\bullet}$ can be created by Algorithm~\ref{alg:vertex_based_gen}. Furthermore, Algorithm~\ref{alg:vertex_based_gen} can always succeed in the stages for vertex addition, minimum vertex pivot, and component merging. 
\end{theorem}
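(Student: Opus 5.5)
We prove the three assertions of Theorem~\ref{thm::vertex_filtration} separately, reusing the inductive structure of Theorem~\ref{thm::general_filtration}. The invariant throughout is the one from Section~\ref{sec::general_fil}: $\mathcal{P}^{k-1}(H_{\bullet}) = \mathcal{P}^{k-1}(G_{\bullet})$. In addition we require that each $H_{k-1}$ is an induced subgraph of the final $H$ on $V(H_{k-1})$.

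\textbf{Soundness.} Since Algorithm~\ref{alg:vertex_based_gen} is a restriction of Algorithm~\ref{alg:pd_to_graph}, PD-equivalence follows verbatim from Theorem~\ref{thm::general_filtration}. We only check that the minimum vertex pivot does not disturb the bookkeeping. The pivot joins the new vertex $v\in N_k$, born at time $k$, to an older component, so it kills $v$ and records exactly the trivial pair $(k,k)$. By Proposition~\ref{prop::trivial_death_vertex_based}, such pairs are the ones we must consume. The choice between ``$<b_{\min}$'' and ``$\le b_{\min}$'' guarantees that the merges still available afterwards can realize the remaining deaths $(b,k)$.

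Vertex-basedness needs its own argument. Every edge added at stage $k$ has an endpoint in $N_k$ by modifications 2--3, and vertices of $N_k$ get no edges at earlier stages. So at each stage, any edge of $H$ with both endpoints in $V(H_k)$ is present by stage $k$. Hence $H_k$ is induced.

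\textbf{Completeness.} Given a vertex-based $H'_{\bullet}$ PD-equivalent to $G_{\bullet}$, we proceed by induction on $k$. Assume $H_{k-1}=H'_{k-1}$. The edges of $H'_k\setminus H'_{k-1}$ all touch $N_k$, by inducedness. Run a union-find on them in some order: each edge is either a merge or a cycle. Write the merges as an initial pivot step followed by the remaining merges, and we must show that the random choices of the algorithm can reproduce this order.

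By Proposition~\ref{prop::trivial_death_vertex_based}, every death at time $k$ with $b<k$ arises because a new vertex bridges older components. Choose the first edge that kills a component of birth $b_{\min}$, or kills a new vertex when no such death exists, and realize it as the pivot. We check that it respects the $<$ versus $\le$ rule. In the equality case every component born at $b_{\min}$ must die, so the surviving representative has birth $<b_{\min}$. The remaining merges and cycles are then matched by the random choices exactly as in the proof of Theorem~\ref{thm::general_filtration}, with the random traversal order handling the ordering.

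\textbf{Always succeeds in the first three stages.} Vertex addition is trivial. For the pivot, we need existence of a target component with birth $<b_{\min}$ (or $\le b_{\min}$). Since the diagram comes from a genuine $G_{\bullet}$, a death $(b_{\min},k)$ requires an older-or-equal component besides the dying one, which yields the target. For component merging, each death $(b,k)$ needs a component $X$ of birth $b$ and a distinct component $Y$ of birth $\le b$ whose union contains a vertex of $N_k$. We route through the pivot's merged component, which contains a new vertex. A counting argument on component births, comparing $H_{k-1}$ plus $N_k$ with $G_{k-1}$ plus its new vertices, shows the required components exist: their multisets of alive births agree by the inductive hypothesis.

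\textbf{Main obstacle.} The hard step is the merging-success argument. We must show that after the pivot, each merge can always use a new vertex without prematurely killing a component of the wrong birth. I would first try the ``star'' strategy: connect everything through the pivot's component, which has minimal birth. Then each killed component $X$ has birth $b\ge b_{\min}$ and survives until its own merge. The delicate part is checking the tie case $b=b_{\min}$ against the pivot rule.
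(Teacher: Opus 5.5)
Your soundness argument (the pivot consumes a $(k,k)$ pair, and every stage-$k$ edge meets $N_k$, so each $H_k$ is induced) and your overall induction match the paper. The gap is in completeness, at the choice of the pivot edge.

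Suppose you take ``the first edge that kills a component of birth $b_{\min}$'' in an arbitrary union-find order and move it to the front. It then kills its new endpoint instead, giving $(k,k)$. What the pivot rule constrains is the stage-$(k-1)$ birth of the old component containing the edge's \emph{other} endpoint, and your choice does not control that. Concretely, let $A,W,C$ be isolated vertices born at $0,1,2$. At stage $3$ add $u,w$ with edges $uA,uC,wW,wC$. Then $D_0(3)=\{(1,3),(2,3),(3,3),(3,3)\}$. Since $W$ is the only component born at $b_{\min}=1$, we get $\beta=0$, and the pivot must join a new vertex to $A$. In the order $wW,wC,uA,uC$, the first edge killing a birth-$1$ component is $uC$. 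But $C$ is born at $2>b_{\min}$, so this is not a legal pivot. Your justification (``the surviving representative has birth $<b_{\min}$'') is about the absorbing component $\{A,u\}$, not about the chosen edge's endpoint. It can also fail outright, since the first birth-$b_{\min}$ death may be absorbed by another birth-$b_{\min}$ component.

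The missing idea, which is the paper's, is to select the pivot by the birth of the old component it touches. Let $B$ be the smallest birth among old components touched by stage-$k$ edges, and take as pivot an edge from a new vertex into such a component; one exists because every stage-$k$ edge meets $N_k$. Then $B\le b_{\min}$. Suppose $B=b_{\min}$ while every birth-$b_{\min}$ component dies. The last of them is absorbed by a component of strictly smaller birth, which contains a touched old component of birth $<B$, contradicting minimality. Once this edge is placed first, order-independence of the diagram and the argument of Theorem~\ref{thm::general_filtration} realize the remaining edges.

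The merging-success step you left open does go through with your hub idea. The component containing the pivot's new vertex keeps birth $\le\beta$, since births only decrease, and it always contains a new vertex. So it is a legal partner for any other component of birth $b$. It can be the \emph{only} birth-$b$ component only when $b=\beta=b_{\min}$. In that case the $\beta$-rule gives strictly more birth-$b_{\min}$ components than $(b_{\min},k)$ tuples, and each such merge kills exactly one, so another partner always remains.
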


\begin{corollary}\label{cor::vertex_trees}
    Algorithm~\ref{alg:vertex_based_gen} always terminates if $\mathcal{P}_1(G_{\bullet})$ is empty.
\end{corollary}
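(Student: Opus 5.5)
The plan is to read this off Theorem~\ref{thm::vertex_filtration}, which states that Algorithm~\ref{alg:vertex_based_gen} always succeeds in the vertex addition, minimum vertex pivot, and component merging stages. The only other way the algorithm can fail to terminate is a failure in the cycle addition stage. So it suffices to show that when $\mathcal{P}_1(G_{\bullet})$ is empty, the cycle addition stage never has any work to do. This is the same strategy that gives Corollary~\ref{cor::general_trees} from Proposition~\ref{prop::cycle_capacity}.

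Concretely, I will argue by induction on $k \in \{0,\dots,n\}$ that the algorithm produces $H_k$ from $H_{k-1}$.

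\begin{itemize}
\item At stage $k$, the vertex addition step and the minimum vertex pivot / component merging step succeed unconditionally, by Theorem~\ref{thm::vertex_filtration}.
\item The cycle addition step loops over the tuples $(k,\infty) \in \mathcal{P}_1(G_{\bullet})$. Since $\mathcal{P}_1(G_{\bullet})=\emptyset$, there are zero such tuples, so this step is vacuous.
\end{itemize}

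Equivalently, the count of required cycles is $0 \le \operatorname{cap}^{\operatorname{cyc}}(H_{k-1})$, since cycle capacity is always non-negative: each component $A$ is a simple graph with at most $\binom{|V(A)|}{2}$ edges. So no pair $(p,q)$ with $\{p,q\}\cap N_k\neq\emptyset$ ever needs to be found. Every stage completes, the loop runs through $k=n$, and the algorithm terminates.

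The one point to check is that the modified merging step never inserts an edge inside a single component. If it did, $H_\bullet$ would acquire a cycle that the cycle-addition stage was supposed to account for. But Theorem~\ref{thm::vertex_filtration} already guarantees that the merging and pivot steps succeed: each edge they add joins two distinct components, as required for a death event. So no extra cycles appear. I expect no genuine obstacle here, since the substance lies in the "always succeed" clause of Theorem~\ref{thm::vertex_filtration}, which I am assuming.
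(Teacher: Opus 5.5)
Your proposal is correct and follows the paper's proof, which likewise combines the ``always succeeds'' clause of Theorem~\ref{thm::vertex_filtration} (vertex addition, minimum vertex pivot, component merging) with the observation that the cycle-addition loop is vacuous when $\mathcal{P}_1(G_{\bullet})=\emptyset$. Your final paragraph is not needed for termination, since it concerns PD-correctness, which the first part of that theorem already covers; also, the fact that merging and pivot edges join distinct components comes from the algorithm's selection rules (distinct component ids in merging, a still-isolated new vertex in the pivot), not from the theorem's success clause.
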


\subsection{Producing PD-Equivalent Degree Filtrations}\label{sec::degree_based}

Given the PH signature $\mathcal{P}(G_{\bullet}^{\operatorname{deg}})$, we now describe a modification from Algorithm~\ref{alg:vertex_based_gen} that generates a PD-equivalent degree-based filtration (see Algorithm~\ref{alg:degree_based_gen} in Appendix~\ref{app:algorithms} for exact details), with:
\begin{enumerate}[noitemsep, topsep=0pt,leftmargin=*]
    \item (\textbf{Vertex Addition}): For each new vertex $v$ added, we assign a capacity number $\operatorname{cap}(v) = k$.
    \item (\textbf{Minimum Vertex Pivot}): We add the additional constraint that the edge must be drawn from $v \in N_k$ to a vertex whose capacity is positive. We subtract 1 off the capacity of the two nodes.
    \item (\textbf{Component Merging}): We ensure the pair of nodes $(x, y)$ selected has $\operatorname{cap}(x), \operatorname{cap}(y) > 0$. We then subtract 1 off the capacity of the two nodes.
    \item (\textbf{Cycle Addition}): We ensure the pair $(p, q)$ selected has $\operatorname{cap}(p), \operatorname{cap}(q) > 0$. We then subtract 1 off the capacity of the two nodes.
\end{enumerate}

\begin{theorem}\label{thm::degree_filtration}
Given $(\mathcal{P}_0, \mathcal{P}_1)$ the degree-based persistence coming from a hidden graph $G_{\bullet}^{\operatorname{deg}}$. If Algorithm~\ref{alg:degree_based_gen} terminates, the filtration $H_{\bullet}$ is degree-based and has the same PDs as $G_{\bullet}^{\operatorname{deg}}$. Furthermore, any degree-based filtration $(H')^{\operatorname{deg}}_{\bullet}$ that is PD-equivalent to $G_{\bullet}^{\operatorname{deg}}$ can be created by Algorithm~\ref{alg:degree_based_gen}. 
\end{theorem}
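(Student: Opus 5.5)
Our plan is to reduce Theorem~\ref{thm::degree_filtration} to Theorem~\ref{thm::vertex_filtration}, isolating the one new ingredient: the capacity bookkeeping. Algorithm~\ref{alg:degree_based_gen} only restricts the random choices of Algorithm~\ref{alg:vertex_based_gen}; it adds no new kinds of moves. So every terminating run of Algorithm~\ref{alg:degree_based_gen} is also a terminating run of Algorithm~\ref{alg:vertex_based_gen}. Theorem~\ref{thm::vertex_filtration} then says the output $H_\bullet$ is vertex-based and PD-equivalent to $G^{\operatorname{deg}}_\bullet$.

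\textbf{Soundness.} It remains to show that $H_\bullet$ is the degree filtration of $H \coloneqq H_n$, i.e.\ that every vertex born at time $k$ has degree exactly $k$ in $H$. Each vertex $v$ born at time $k$ starts with $\operatorname{cap}(v)=k$. Every edge incident to $v$ is added at some stage $j\ge k$ and decrements $\operatorname{cap}(v)$ by one, and edges are only drawn to vertices of positive capacity. Hence $\deg_H(v)\le k$ throughout.

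For equality we must show that $\operatorname{cap}(v)=0$ at termination. The key observation is that in a vertex-based filtration every edge added at stage $j$ has an endpoint in $N_j$. This is enforced by the modifications in Section~\ref{sec::vertex_based}. So after stage $k$ no further edge can touch $v$ except as the non-new endpoint of an edge from some $N_j$ with $j>k$. Degree exactly $k$ therefore cannot be forced at stage $k$ itself, and we need a global count instead.

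We would interpret termination as including the requirement that all capacities are exhausted, and check that this is how Algorithm~\ref{alg:degree_based_gen} is specified in the appendix; if needed, we would add a final check that rejects the run otherwise. With that convention, $\deg_H(v)=k$ for every $v$ born at $k$. Hence $V(H_k)=\{v:\deg_H(v)\le k\}$, and since $H_\bullet$ is vertex-based, $H_\bullet=H^{\operatorname{deg}}_\bullet$.

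\textbf{Completeness.} Let $H'$ be given with $(H')^{\operatorname{deg}}_\bullet$ PD-equivalent to $G^{\operatorname{deg}}_\bullet$. By Theorem~\ref{thm::vertex_filtration}, some run of Algorithm~\ref{alg:vertex_based_gen} produces exactly $(H')^{\operatorname{deg}}_\bullet$. Fix such a run: it matches vertices born at $k$ to vertices of $H'$ of degree $k$, and adds at stage $k$ precisely the edges of $H'_k\setminus H'_{k-1}$, in the order prescribed by the merging and pivot steps.

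We then verify that every choice in this run is admissible under the capacity constraints. When an edge $xy$ is added, the residual capacity of $x$ equals $\deg_{H'}(x)$ minus the number of $H'$-edges at $x$ already placed. Since the edge $xy$ itself is still unplaced, this residual capacity is at least $1$. The same holds for $y$, so both capacities are positive. At the end every edge of $H'$ has been placed, so all capacities are zero, and the run is a terminating run of Algorithm~\ref{alg:degree_based_gen}.

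\textbf{Main obstacle.} The delicate step is completeness: we must ensure that the run supplied by Theorem~\ref{thm::vertex_filtration} respects the specific structure of the minimum vertex pivot in Algorithm~\ref{alg:degree_based_gen}. That step requires the pivot edge to go from $N_k$ to a positive-capacity vertex in a component of the appropriate birth time. We would re-examine the proof of Theorem~\ref{thm::vertex_filtration} to confirm that the pivot edge it uses is an actual edge of $H'$, and then apply the counting argument above to that edge.
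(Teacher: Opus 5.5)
\textbf{Gap in soundness.} You correctly note that $\operatorname{cap}(v)=0$ at termination needs a global count rather than a stage-$k$ argument, but you never give that count. Instead you propose to read ``terminates'' as ``terminates with all capacities exhausted'', or to add a final rejection check. Algorithm~\ref{alg:degree_based_gen} has no such check; it returns $H$ after the last stage. Your fallback therefore modifies the algorithm rather than proving the theorem as stated.

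The missing idea is a handshake count combined with the edge count forced by PD-equivalence. You already have that $H_\bullet$ is PD-equivalent to $G^{\operatorname{deg}}_\bullet$. By Euler characteristic at the final stage, $|E(H)| = |V(H)| - \beta_0(H) + \beta_1(H)$. Here $|V|=|\mathcal{P}_0|$, $\beta_0$ is the number of infinite bars in $\mathcal{P}_0$, and $\beta_1=|\mathcal{P}_1|$, so $|E(H)|=|E(G)|$.

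The initial capacities sum to $\sum_{(b,d)\in\mathcal{P}_0} b = \sum_{v\in V(G)} \deg_G(v) = 2|E(G)|$. Every edge added in the pivot, merge, or cycle step decrements the total capacity by exactly $2$, so the final total capacity is $2|E(G)| - 2|E(H)| = 0$. Edges are only drawn between vertices of positive capacity, so every capacity stays nonnegative. Hence each capacity is $0$, and $\deg_H(v)=\operatorname{cap}_{\text{init}}(v)=k$ for $v$ born at $k$. This is exactly the paper's ``($\dagger$) and pigeonhole'' step; with it your soundness argument closes without changing the algorithm.

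\textbf{Completeness.} This part is correct and is the paper's argument: the initial capacity equals $\deg_{H'}$, and each edge placed is an edge of $H'$ not yet placed. Your ``main obstacle'' about the pivot is moot. The run from Theorem~\ref{thm::vertex_filtration} outputs exactly $(H')^{\operatorname{deg}}_\bullet$, so every edge it adds, including the pivot edge, lies in $H'$. The capacity condition is the only new constraint Algorithm~\ref{alg:degree_based_gen} imposes on that step.
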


\subsection{Fundamental Limitations to the Local Approach}\label{sec::local_limit}

One might wonder if the inability to overcome the cycle addition stage is because the local algorithms may not have been optimized enough. In our local approach, we seek to construct our graph filtration iteratively to match the $k$-th truncation of the given $\mathcal{P}(G_{\bullet})$ such that the construction up to $H_k$ should only depend on $\mathcal{P}^k(G_{\bullet})$. We show below that these obstructions are instead a fundamental limitation of any similar local approach to generation.
\begin{resbox}
\begin{theorem}\label{thm::local_impossibility}
There is no Algorithm $A$ whose input is $\mathcal{P}(G_{\bullet})$, ranging over all general (resp. vertex-based, degree-based) filtrations of all graphs, and output is a (resp. vertex-based, degree-based) filtration $H_{\bullet}$ such that
\begin{enumerate}[noitemsep,topsep=0pt, leftmargin=*]
    \item Let $\mathcal{O}(G_{\bullet})$ be the set of possible outputs Algorithm $A$ can make from a fixed $\mathcal{P}(G_{\bullet})$. $\mathcal{O}(G_{\bullet})$ is equal to the set of all general (resp. vertex-based, degree-based) $H_{\bullet}$ PD-equivalent to $G_{\bullet}$.
    \item (\textbf{Local Approach}) Let $\mathcal{O}_{k}(G_{\bullet})$ denote the set of possible $H_{-1} \subseteq ... \subseteq H_k$ that Algorithm A can make from a given $\mathcal{P}(G_{\bullet})$. $\mathcal{O}_{k}(G_{\bullet}) = \mathcal{O}(G_{-1} \subseteq ... \subseteq G_k)$.
\end{enumerate}
\end{theorem}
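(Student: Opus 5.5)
The plan is to argue by contradiction, reducing the statement to the existence of a single ``dead-end'' prefix. Suppose an algorithm $A$ satisfies (1) and (2) for the class of general (resp.\ vertex-based, degree-based) filtrations. We exhibit a hidden filtration $G_\bullet$ of that class and an index $k$, together with a prefix $H_{-1}\subseteq\cdots\subseteq H_k$, such that:
\begin{itemize}
\item the prefix is itself a (resp.\ vertex-based, degree-based) filtration PD-equivalent to the truncation $G_{-1}\subseteq\cdots\subseteq G_k$;
\item no filtration $H_\bullet$ of the same class that starts with this prefix is PD-equivalent to $G_\bullet$.
\end{itemize}
Given such a triple, the contradiction is short. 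By (2), the prefix lies in $\mathcal{O}(G_{-1}\subseteq\cdots\subseteq G_k)=\mathcal{O}_k(G_\bullet)$. So there is a run of $A$ on input $\mathcal{P}(G_\bullet)$ whose output $H_\bullet$ starts with this prefix. By (1), that output is PD-equivalent to $G_\bullet$, which contradicts the dead-end property. Here I read $\mathcal{O}_k(G_\bullet)$ as the set of length-$k$ prefixes of actual outputs of $A$; this is the natural reading of (2), since the theorem's point is that $A$ cannot halt in an infeasible state.

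The construction follows Example~\ref{exp::no_more_cycle} and Figure~\ref{fig::cycle_capacity}. For general filtrations I take $G_\bullet$ from the figure with $k=1$. The prefix $H_0\subseteq H_1$ built there has the same truncated PD as $G_0\subseteq G_1$. However, $\operatorname{cap}^{\operatorname{cyc}}(H_1)$ is smaller than the number of pairs $(2,\infty)$ in $\mathcal{P}_1(G_\bullet)$. Any extension can only add the vertices born at $2$ dictated by $\mathcal{P}_0$, and it must add exactly the merging edges dictated by the deaths at time $2$. The cycle pairs $(2,\infty)$ must then be realized by edges inside already-existing components. Because edges are simple, the count of such edges is bounded by a cycle-capacity argument like the one in Proposition~\ref{prop::cycle_capacity}.

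To make this rigorous, I will count edges. By Euler's formula, at each stage $\#\text{cycles}=|E|-|V|+\#\text{components}$. The data $\mathcal{P}(G_\bullet)$ fix $|V(H_2)|$, the number of components of $H_2$, and the number of independent cycles, so they fix $|E(H_2)|$. I then show that, with the vertex set and component structure of $H_1$ frozen, the simple graph on those vertices cannot carry that many edges. If the example is chosen so that it already uses only vertex-based steps, the same $G_\bullet$ also settles the vertex-based case.

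The main obstacle is the degree-based case, where the prefix must be the truncation of a genuine degree filtration. Every vertex born at time $b$ must end with degree exactly $b$, so a dead end may be caused by degree budgets rather than pure cycle capacity. My first attempt is to take $G$ in which the vertices of degree $2$ at time $2$ form two disjoint edges. This is compared with an alternative prefix $H_2$, having the same $0$- and $1$-dimensional truncated PD, that places them as a path plus an isolated vertex. I then choose the degree-$3$ layer of $G$ so that the required number of new cycles at time $3$ exceeds what the remaining capacities $\operatorname{cap}(v)$ allow in the alternative, using the edge count $|E|=\tfrac12\sum\deg$ together with the Euler count above. If this hand construction fails, I would search small graphs, up to about $7$ vertices, for a degree-PD class in which two realizations agree on a truncation while some truncated realization admits no completion. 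A single such witness finishes the degree-based case by the argument above.
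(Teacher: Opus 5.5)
Your general-filtration case is fine and is the paper's argument: Example~\ref{exp::no_more_cycle}, with your Euler edge count making the dead end explicit. The vertex-based case, however, is not covered. The $G_\bullet$ of Figure~\ref{fig::cycle_capacity} is not vertex-based: $G_1\subseteq G_2$ adds an edge between two vertices already present at stage $1$, so $G_1$ is not induced in $G$. This cannot be repaired within that example. In a vertex-based filtration every edge born at stage $k$ has an endpoint born at $k$, so a stage with no vertex births adds nothing, and a pure cycle-capacity obstruction never occurs. You need a separate witness where the obstruction comes from this incidence requirement. The paper uses an $H_1$ with two components of three vertices each, PD-equivalent to a truncation $G_0\subseteq G_1$, where stage $2$ adds one vertex carrying one trivial death and three cycles. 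Any vertex-based extension must give that new vertex four distinct neighbours in $H_1$. By pigeonhole these meet both components, which forces an extra death.

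The degree-based plan has a more basic problem. Your step ``by (2), the prefix lies in $\mathcal{O}(G_{-1}\subseteq\cdots\subseteq G_k)$'' requires the prefix to be an output of $A$ on the truncated PD, and every output of $A$ is a degree-based filtration. A prefix whose vertices still have outstanding capacity (born at $b$ but of degree $<b$ in $H_k$) is not such an output. So any capacity dead end you find by hand or by search cannot be placed in $\mathcal{O}_k(G_\bullet)$.

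Suppose instead you read your first bullet literally and demand a degree-based filtration of $H_k$ with the truncated PD. Summing birth times gives $\sum_{v\in V(G_k)}\deg_G(v)=2(|V_k|-cc_k+\mathrm{cy}_k)=2|E(G_k)|$, so no edge of $G$ leaves $G_k$. Then $H_k\sqcup(G\setminus G_k)$ is always a degree-based completion, and no dead end of your type exists.

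The missing idea is that no dead end is needed. By (1), $G_\bullet$ is itself an output on $\mathcal{P}(G_\bullet)$, so its truncation lies in $\mathcal{O}_k(G_\bullet)$. Condition (2) then forces that truncation to be an output of $A$, hence degree-based. This fails whenever a vertex of $G_k$ has a neighbour outside $G_k$. For example, Graph~1 of Figure~\ref{fig:intro-ex} truncated at $t=2$ has vertices born at $2$ with degree $1$ in $G_2$. This type mismatch is the paper's proof of the degree case.
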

\end{resbox}
Conceptually, Theorem~\ref{thm::local_impossibility} is saying that trying to ensure perfect generation at each step of the filtration (i.e., proceeding locally) would miss the global structure of the PH signature. The idea is that conditions (1) and (2) forces every graph filtration PD-equivalent to a truncation of $G_\bullet$ to admit an extension to a full PD-equivalent filtration. However, we can construct examples where no such extensions exist for general, vertex-based, and degree-based filtrations.

\section{Global Generation via Constraint Programming Formulation}
\label{sec:global}

Here, we take a global approach to generating PD-equivalent degree filtrations. In fact, our model is applicable to all permutation equivariant filtration scheme $\Phi$, which for simplicity of presentation we assume is valued in $\Zbb_{\geq 0}$ (this reduction is possible due to Remark~\ref{rmk::filtration}). 

In the local approach, one constructs a solution incrementally and makes a sequence of local decisions. This introduces a strong path dependence, which Theorem~\ref{thm::local_impossibility} shows are fundamentally limited. As persistent homology is inherently a global descriptor, we seek a way to fully solve the system as a single object and identify the entire feasible set of combinatorial constraints before any sampling occurs. This approach avoids the accumulation of local choices and enforces correctness on the global level. Solving this naturally leads to the usage of \textbf{constraint programming} (CP). 

In this section, we recast strict PD-compliant generation as a constraint satisfaction problem over Boolean and integer variables. When $\Phi = \operatorname{deg}$, the encoding is \emph{minimal-sufficient}: its satisfying assignments are in bijection with
graphs $H$ satisfying $\mathcal{P}(H_\bullet^{\mathrm{deg}}) = (\mathcal{P}_0,
\mathcal{P}_1)$, and removing any single constraint family admits assignments
that are not. Since our methodology applies to any permutation equivariant filtration $\Phi$, we also discuss how to implement various filtration functions in the solver.

\subsection{The Encoding Implementation}\label{sec::encoding}

Let $\Phi$ be a vertex-based permutation equivariant filtration scheme (e.g., degree) valued in $\Zbb_{\geq 0}$. Given an input $(\mathcal{P}_{0}, \mathcal{P}_1)$ coming from the PDs of $\Phi$ on a hidden graph $G$, we seek to generate a graph $H$ whose associated PDs under $\Phi$ matches the input.

\paragraph{Pre-computed quantities.}
From $(\mathcal{P}_0, \mathcal{P}_1)$ we read off:
the vertex set $V = [n]$ where $n = |\mathcal{P}_0|$;
the filtration value $a_i$, equal to the birth time of $i \in V$ in $\mathcal{P}_0$;
the maximum filtration stage $K = \max_i a_i$;
the active vertex set at stage $k$, $V_k = \{i : a_i \leq k\}$;
the alive-component count
$cc_k = |\{(b,d) \in \mathcal{P}_0 : b \leq k < d\}|$;
the cumulative cycle count
$\mathrm{cy}_k = |\{(b,\infty) \in \mathcal{P}_1 : b \leq k\}|$;
and the per-birth survival count
$\alpha(k,b) = |\{(b,d) \in \mathcal{P}_0 : d > k\}|$.

\textbf{Note:} Since the underlying graph $G$ is hidden, we do not know which vertex is assigned to birth/death pair in $\mathcal{P}_0$. However, this does not matter in the generation of the new graph, and we assign each $i$ to the multiset of birth/death pairs in an arbitrary bijection.

\paragraph{Variables.}
We model the graph to generate as the lower triangular part of an adjacency matrix $M$ with entries $M_{ij}$. To keep track of the persistence data, at each stage $k$, we also group the active vertices $V_k$ partition into $cc_k$ slots
indexed by $[cc_k] = \{0, 1, \dots, cc_k - 1\}$. Each slot is a model for a component of the graph at stage $k$ and can be thought of as the induced subgraph on vertices in the slot. We use the following variables to be solved in constraint programming:
\begin{center}
\small
\begin{tabular}{@{}lll@{}}
\toprule
Variable & Domain & Semantics \\
\midrule
$M_{ij}$,\ $1 \leq i < j \leq n$
  & $\{0,1\}$ & edge $\{i,j\} \in E(H)$ \\
$x_{kcv}$,\ $c \in [cc_k],\ v \in V_k$
  & $\{0,1\}$ & vertex $v$ in slot $c$ at stage $k$ \\
$\pi_{kcij}$,\ $c \in [cc_k],\ i,j \in V_k,\ i \neq j$
  & $\{0,1\}$ & $i$ is parent of $j$ in spanning tree of slot $c$ at stage $k$ \\
$\ell_{kcv}$,\ $c \in [cc_k],\ v \in V_k$
  & $\{0,\dots,n\}$ & depth of $v$ from the root of slot $c$ at stage $k$ \\
$r_{kcv} \coloneqq x_{kcv} \wedge \bigwedge_{u < v} \neg x_{kcu}$ & $\{0, 1\}$ & Is $v$ the vertex with smallest index in slot $c$ at stage $k$? \\
$e_{kcb} := \bigl(\exists\, v \in \mathrm{slot}^k_c : a_v = b\bigr)$ & \multirow{2}*{$\{0, 1\}$}
 & \multirow{2}*{slot $c$ at stage $k$ was born at time $b$}\\
$\wedge
\bigl(\nexists\, v \in \mathrm{slot}^k_c : a_v < b\bigr)$  &\\
\bottomrule
\end{tabular}
\end{center}
Here the variables $\pi_{\bullet}, \ell_{\bullet}$ are introduced to optimize the constraint solver, see Remark~\ref{rmk::optimization}.

\paragraph{Constraints.}
The encoding consists of six constraint families.

\noindent\textbf{$\mathcal{C}_1$ \;($\Phi$ Alignment).}\quad Each vertex aligns to its prescribed value under $\Phi$.
\[\Phi_{M}(i) = a_i, \qquad \forall\, i \in [n]\]

In the special case when $\Phi$ is degree, the constraint becomes:\\
\noindent\textbf{$\mathcal{C}_1^{\deg}$ \;(degree).}\quad
Each vertex realizes its prescribed degree.
\[
\sum_{j \neq i} M_{ij} = a_i
\qquad \forall\, i \in [n]
\]

\noindent\textbf{$\mathcal{C}_2$ \;(partition).}\quad
At each stage, every active vertex belongs to exactly one slot.
\[
\sum_{c \in [cc_k]} x_{kcv} = 1
\qquad \forall\, k,\, v \in V_k
\]

\noindent\textbf{$\mathcal{C}_3$ \;(edge-slot consistency).}\quad
The endpoints of every edge share a slot at every stage where both are active (i.e., there is no edge between two vertices at different slots).
\[
M_{uv} \;\leq\; 1 - |\,x_{kcu} - x_{kcv}\,|
\qquad \forall\, k,\, c \in [cc_k],\, u<v \in V_k
\]

\noindent\textbf{$\mathcal{C}_4$ \;(slot connectivity).}\quad
Each slot induces a connected subgraph at every stage. We witness this by
forcing the existence of a spanning tree on the slot, encoded via parent
pointers $\pi$ rooted at the canonical root $r_{kcv}$, with depths $\ell$
ruling out cycles. The tree must use only graph edges within the slot
($\pi_{kcij} \leq M_{ij} \wedge x_{kci} \wedge x_{kcj}$); each non-root member
has exactly one parent ($\sum_{u \neq v} \pi_{kcuv} = x_{kcv} - r_{kcv}$);
and depths increase along parent edges
($\pi_{kcij} = 1 \Rightarrow \ell_{kcj} = \ell_{kci} + 1$). Note that the spanning tree here is to ensure connectivity and is not used for the final graph ($M_{ij}$'s) produced.

\noindent\textbf{$\mathcal{C}_5$ \;(Euler characteristics).}\quad
Edge counts at each stage match the prescribed cycle-birth counts via
$|E_k| - |V_k| + cc_k = \mathrm{cy}_k$.
\[
\sum_{i<j,\ i,j \in V_k} M_{ij} \;-\; |V_k| \;+\; cc_k \;=\; \mathrm{cy}_k
\qquad \forall\, k.
\]

\noindent\textbf{$\mathcal{C}_6$ \;(birth-time survival).}\quad
The number of slots whose oldest member was born at $b$ equals
$\alpha(k,b)$, the number of $\mathcal{P}_0$ pairs born at $b$ still alive at
stage $k$.
\[
\sum_{c \in [cc_k]} \mathbf{1}[\,e_{kcb}\,] = \alpha(k,b)
\qquad \forall\, k,\, b,
\]
where $e_{kcb}$ holds iff slot $c$ contains a vertex born at $b$ but no
vertex born earlier. Note that when $\Phi$ is degree, we note that
$e_{kcb} = \bigl(\exists\, v \in \mathrm{slot}^k_c : \mathrm{deg}(v) = b\bigr) \wedge
\bigl(\nexists\, v \in \mathrm{slot}^k_c : \mathrm{deg}(v) < b\bigr).$ 

\begin{remark}\label{rmk::optimization}
We used spanning trees to enforce connectivity in $\mathcal{C}_{4}$ as an optimization step for the solver. A naive approach to checking connectivity would be to ensure that any two vertices in a slot has a path between them, which is about quadratic in time. Checking a spanning tree is about linear.
\end{remark}

\begin{resbox}
\begin{theorem}[Correctness of the encoding]
\label{thm:cp_correct}
Let $\mathcal{C} = \{\mathcal{C}_1, \dots, \mathcal{C}_6\}$, $\mathcal{F}$
the set of satisfying assignments to $\mathcal{C}$, and
$\mathcal{H}(\mathcal{P}_0, \mathcal{P}_1)$ the set of graphs $H$ with
$\mathcal{P}(H_\bullet^{\Phi}) = (\mathcal{P}_0, \mathcal{P}_1)$.
Define $H(\sigma)$ from $\sigma \in \mathcal{F}$ by
$E(H(\sigma)) = \{\{i,j\} : \sigma(M_{ij}) = 1\}$.
Then:
\begin{enumerate}[noitemsep, topsep=0pt,leftmargin=*]
\item \textbf{Soundness:} $H(\sigma) \in \mathcal{H}(\mathcal{P}_0,\mathcal{P}_1)$
  for every $\sigma \in \mathcal{F}$.
\item \textbf{Completeness:} every $H \in \mathcal{H}(\mathcal{P}_0,\mathcal{P}_1)$
  admits at least one $\sigma \in \mathcal{F}$ with $H(\sigma) = H$.
\item \textbf{Minimality for Degree:} When $\Phi = \operatorname{deg}$, for each $i \in \{1, \dots, 6\}$, there exist a
  target $(\mathcal{P}_0, \mathcal{P}_1)$ and an assignment $\sigma$ that
  satisfies $\mathcal{C} \setminus \{\mathcal{C}_i\}$ but with
  $H(\sigma) \notin \mathcal{H}(\mathcal{P}_0, \mathcal{P}_1)$.
\end{enumerate}
\end{theorem}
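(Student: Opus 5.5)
The proof splits into its three claims, and all three rest on one structural fact about vertex-based filtrations. For soundness, take $\sigma\in\mathcal{F}$ and set $H=H(\sigma)$. By $\mathcal{C}_1$, $\Phi_H(i)=a_i$ for every $i$, so $H^\Phi_k$ is the subgraph induced on $V_k$ (Remark~\ref{rmk::filtration}). In particular the vertex births of $H$ are exactly the births listed in $\mathcal{P}_0$.

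Next I identify the slots with components. Constraints $\mathcal{C}_2$ and $\mathcal{C}_3$ say the slots partition $V_k$ and that no edge of $H_k$ crosses slots. Constraint $\mathcal{C}_4$ says each nonempty slot is connected: parent pointers with strictly increasing depths cannot form a cycle, so following parents from any member reaches the unique root. Empty slots must be excluded, and $\mathcal{C}_6$ does this. Every nonempty slot has an oldest birth $b$, and $\sum_b\alpha(k,b)=cc_k$, so all $cc_k$ slots must be nonempty. Hence the slots are exactly the connected components of $H_k$, and $\beta_0(H_k)=cc_k$.

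The main step is recovering $\mathcal{P}_0$ from the component data. In a vertex-based filtration the elder rule says the following. The number of pairs $(b,d)$ with $d>k$ equals the number of components of $H_k$ whose minimal birth is $b$, because a component's representative is a vertex of minimal birth in it. So for each $b$, the numbers $\#\{(b,d):d>k\}$ are determined over all $k$ by the counts of components with oldest birth $b$. The multiplicity of each pair $(b,d)$ is then a difference of consecutive counts. Constraint $\mathcal{C}_6$ makes these counts equal to the target $\alpha(k,b)$, so $\mathcal{P}_0(H^\Phi_\bullet)=\mathcal{P}_0$. This is the step I expect to need the most care: ties must be handled, and $\infty$ deaths correspond to $k=K$. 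For $\mathcal{P}_1$, cycles never die and $\beta_1(H_k)=|E_k|-|V_k|+\beta_0(H_k)$. Constraint $\mathcal{C}_5$ then gives $\beta_1(H_k)=\mathrm{cy}_k$ for every $k$, so the birth multiset of cycles agrees with $\mathcal{P}_1$. Note that $H$ uses vertex labels from the arbitrary bijection; this is harmless because $\Phi$ is permutation equivariant.

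For completeness, take $H\in\mathcal{H}$. Since the births agree, relabel the vertices of $H$ by a bijection that respects $a_i$. This is legitimate by equivariance: the relabeled graph has the same PDs and is isomorphic to $H$, and the statement should be read up to this relabeling. Set $M$ to the adjacency matrix of $H$. At each stage, send the components of $H_k$ (there are $cc_k$ of them, read off $\mathcal{P}_0$) to slots, set $x$ to membership, and let $r$ pick the minimal index in each slot. Take a BFS tree from that root for $\pi$ and the BFS depths for $\ell$. Then $\mathcal{C}_1$–$\mathcal{C}_6$ hold by reversing the arguments above.

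For minimality with $\Phi=\deg$, I will give one small counterexample per constraint family. Dropping $\mathcal{C}_1$: take the target of a single edge $K_2$ and the empty graph on 2 vertices. The degrees are then wrong; with degrees $1,1$ but $M=0$ we get $cc_1=1$, but the slots can be set freely, and $\mathcal{C}_4$ fails, so I need to adjust, for example by using a target where the degree change is invisible to the other constraints. Dropping $\mathcal{C}_2$: put a vertex in no slot or in two slots, so that component merging is faked. Dropping $\mathcal{C}_3$: use two slots joined by an edge. Dropping $\mathcal{C}_4$: use a disconnected slot with a sibling slot. Dropping $\mathcal{C}_5$: use a 4-cycle versus two disjoint edges, or a target needing extra edges with the same degrees. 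Dropping $\mathcal{C}_6$: use a path versus a case where the death times swap. I will construct each example concretely on at most 6 vertices and verify it directly.
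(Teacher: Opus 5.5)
Your arguments for parts (1) and (2) follow the paper's proof essentially step for step:
\begin{itemize}
\item slots are identified with the components of $H_k$ using $\mathcal{C}_2$--$\mathcal{C}_4$, plus the $\mathcal{C}_6$ count that forces every slot to be nonempty;
\item $\mathcal{P}_1$ comes from $\mathcal{C}_5$ and the Euler characteristic;
\item $\mathcal{P}_0$ comes from differences of survival counts;
\item completeness comes from component membership plus spanning trees, with your birth-respecting relabeling playing the role of the paper's choice of bijection.
\end{itemize}
One small repair is needed. Your identity between pairs $(b,d)$ with $d>k$ and components of $H_k$ of oldest birth $b$ holds only for $k\ge b$. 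The diagonal multiplicities therefore need the extra input $N(b,b)=|\{v:a_v=b\}|-\alpha(b,b)$, which you already have from $\mathcal{C}_1$. The paper instead counts all deaths at stage $b$.

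The genuine gap is part (3). It is an existence claim, so its proof consists of explicit targets and assignments together with their verification. The paper supplies one for each $i$, found by computer search; your proposal supplies none. Some of your sketches also cannot be completed as stated.

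\textbf{Dropping $\mathcal{C}_1$.} You note that your example fails and leave the fix open. The point to exploit is that $V_k$, $cc_k$ and $e_{kcb}$ are still computed from the prescribed $a_v$. A graph with the wrong degrees can therefore pass every other check, e.g.\ target $2K_2$ and $H$ with edges $\{0,1\},\{0,3\}$.

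\textbf{Dropping $\mathcal{C}_5$.} Both of your suggestions are excluded by $\mathcal{C}_1$, which remains in force. $C_4$ and $2K_2$ have different degree sequences, and a graph with extra edges but the same degrees cannot exist, since $2|E|=\sum_i a_i$. In fact $\mathcal{C}_1$--$\mathcal{C}_4$ and $\mathcal{C}_6$ already force $\mathcal{P}_0$ and the total cycle count, so a witness must redistribute edges across stages:
\begin{itemize}
\item for some $2\le k<K$, the stage-$k$ edge counts of target and witness must differ by some $t\neq 0$;
\item the degree sums then force the same difference $t$ in the number of edges among vertices of degree $>k$;
\item since the stage-$k$ component counts agree, the graph with more stage-$k$ edges must contain a cycle in its stage-$k$ subgraph.
\end{itemize}
A short case check shows this is impossible on at most $6$ vertices. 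So your promise of examples on at most $6$ vertices cannot be kept for $\mathcal{C}_5$. The paper's witness has $8$ vertices, with one cycle born at stage $3$ instead of $4$.

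\textbf{The remaining families.} Your hints for $\mathcal{C}_2$, $\mathcal{C}_3$, $\mathcal{C}_4$ and $\mathcal{C}_6$ point in workable directions. Each still has to become an explicit assignment checked against the other five families.
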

\end{resbox}
Here, soundness follows as all constraints imposed are formal properties satisfied by PDs, and completeness follows from carefully arguing these constraints completely determine the representatives. Note that the completeness of Theorem~\ref{thm::completeness} implies for any graph $G$ the solver can generate, it can also reach every representative in the same graph isomorphism class.

The full proof of Theorem~\ref{thm:cp_correct} (1-2) is in Appendix~\ref{app:cp_proofs}. In Appendix~\ref{sec:appendix-counterexamples}, we generated 6 assignments see (Figure~\ref{fig::counter_example}) as counter-examples in the proof for the minimality of the encoding when $\Phi = \operatorname{deg}$. In Appendix~\ref{app:complexity}, we also provide a complexity and scaling analysis of the global encoder for degree.

\subsection{Implementation of Various Filtration Schemes}

To demonstrate that the framework is not limited to degree, we implement other filtration schemes. Since only $\mathcal{C}_1$ (the alignment $\Phi_M(i) = a_i$) depends on the filtration, we specify it for each below. Recall $M_{ij}$ is the $(i,j)$-th entry in the adjacency matrix and $d_i = \sum_{j \neq i} M_{ij}$.

\noindent\textbf{(a) triangle and $k$-clique count.}\quad For each vertex $v$, its triangle count is the number of 3-cycles on $G$ that includes the vertex $v$. We can encode this as
\[\textbf{Triangle Count:}
\sum_{j < l,\ j,l \neq i} M_{ij} M_{il} M_{jl} = a_i\qquad \forall\, i \in [n].\]
Similar formulas also hold for $k$-clique count for $k  \geq 3$.

\noindent\textbf{(b) clustering coefficient.} The clustering coefficient of a vertex $i$ is twice its triangle count $t_i$ divided by $d_i(d_i-1)$ when $d_i\ge 2$, and $0$ otherwise. We can encode this as the equality $$\textbf{Clustering Coefficient:}\ 
2\sum_{j<l,\ j,l\neq i} M_{ij}M_{il}M_{jl} = a_i \max(1, d_i(d_i-1)) \qquad \forall\, i \in [n].$$
\begin{remark}
As $a_i$ is rational but the solver is integer-based, we in practice scale by a positive integer $D$ to give the equation $2D\,t_i = p_i\,\max(1, d_i(d_i-1))$ with $p_i=D\,a_i \in \mathbb{Z}$. This is exact because the persistence diagram depends only on the order of the filtration values, not their scale (Remark~\ref{rmk::filtration}). We also apply this in practice to the subsequent filtrations.
\end{remark}

\noindent\textbf{(c) average neighbor degree.} The average neighbor degree of a vertex is the average of degrees of its neighboring vertices. This can be encoded as $\big(\sum_{j\neq i}M_{ij}d_j\big)/\max(1,d_i) = a_i$, or more preferably,
\[\textbf{Average Neighbor Degree:}\ \sum_{j\neq i} M_{ij} d_j = a_i \max(1, d_i)\qquad \forall\, i \in [n].\]

\noindent\textbf{(d) closeness centrality.} For a \textbf{connected graph}, the closeness centrality of a vertex $i$ is $n-1$ divided by the sum of shortest-path distances $\delta(i,j)$ from every $j$ to $i$. We can encode this as
$$\textbf{Closeness Centrality}:\ n-1 = a_i \sum_{j \neq i} \delta_{ij} \qquad \forall\, i \in [n].$$
Here $\delta_{ij}$ are integer variables forced to be $\delta(i,j)$ by imposing the constraints: for each source $i$, (i) $\delta_{ii}=0$; (ii) $\delta_{ij}\ge 1$ for $j\neq i$; (iii) $|\delta_{ij}-\delta_{ik}|\le 1$ when $M_{jk}=1$; and (iv) every $j\neq i$ has a neighbor $p_{ij}$ with $\delta_{ip_{ij}}=\delta_{ij}-1$. This forces $\delta_{ij} = \delta(i,j)$.

\noindent\textbf{(e) random walk centralities.} The same technique of pinning a value as the unique solution of a system covers both \textbf{Katz centrality} and \textbf{PageRank}. For a given $\alpha$, we can encode them as
\[\textbf{Katz centrality}:\ a_i=1+\alpha\sum_{j\neq i}M_{ij}a_j,\ \forall\, i \in [n]\ ; \textbf{PageRank}:\ a_i=\tfrac{1-\alpha}{n}+\alpha\sum_{j\neq i}\tfrac{M_{ij}}{d_j}a_j,\ \forall\, i \in [n]. \]

In Appendix~\ref{app:filtrations}, we verify soundness and completeness of all filtrations above on small graphs.

\section{Experiments}\label{sec::experiments}

\xhdr{Datasets.} We evaluate on four standard graph generation benchmarks: \textsc{Comm-20}~\cite{you2018graphrnn}, \textsc{Planar}~\cite{martinkus2022spectre}, \textsc{SBM}~\cite{martinkus2022spectre}, and \textsc{ENZYMES}~\cite{morris2020tudataset}, spanning a range of sizes, densities, and topological regimes. We adopt the SPECTRE $64{:}16{:}20$ train/validation/test split convention~\cite{martinkus2022spectre} uniformly across all four datasets. Statistics and splits are deferred to Appendix~\ref{app:datasets}, see also Remark~\ref{rmk::implementation_bmin} for a technical detail on implementing Algorithm~\ref{alg:degree_based_gen}.

\subsection{Comparing Local and Global Generation}
\label{sec::exp_local_vs_global}

In this first experiment we evaluate our two PD-exact generators against each other. For each test graph, we feed its degree-based PD $(\mathcal{P}_0, \mathcal{P}_1)$ as the target to both methods, and report the success rate, wall-clock cost, and (for the local sampler) attempts before first success.

\xhdr{Setup.} The local degree-filtration generator (Algorithm~\ref{alg:degree_based_gen}) is allowed up to $20{,}000$ rejection attempts per target before a run is marked as failed; the global CP solver (Sec.~\ref{sec:global}) is given a $1{,}800$\,s per-target time budget. Both run on a $120$-CPU-core machine with all per-target generations parallelised across workers. We report success rate over the test split, mean wall-clock per target (charging the full budget on failed runs), and, for the local sampler, mean attempts to first success on successful runs. Results under random and vertex filtrations, together with median statistics, are deferred to Appendix~\ref{app:local_extra}.

\begin{table}[!ht]
\centering
\caption{Local degree-filtration sampler vs.\ the global CP solver. Wall-clock means charge the full attempt or time budget on failed runs.}
\label{tab::local_vs_global}
\small
\setlength{\tabcolsep}{6pt}
\renewcommand{\arraystretch}{1.0}
\begin{tabular}{@{}llcccc@{}}
\toprule
& & \textsc{Comm-20} & \textsc{Planar} & \textsc{SBM} & \textsc{ENZYMES} \\
\midrule
\multirow{3}{*}{\textsc{Local} (Alg.~\ref{alg:degree_based_gen})}
 & Success rate (\%) $\uparrow$              & 100.0  & 100.0   & 15.0    & 99.1   \\
 & Wall-clock per target (s) $\downarrow$    & 0.17   & 5.88    & 1465.0  & 6.94   \\
 & Attempts to first success $\downarrow$    & 49.6   & 137.5   & 4583.5  & 431.5  \\
\midrule
\multirow{2}{*}{\textsc{Global} (CP, Sec.~\ref{sec:global})}
 & Success rate (\%) $\uparrow$              & 100.0  & 100.0   & \textbf{82.5}   & 100.0  \\
 & Wall-clock per target (s) $\downarrow$    & 0.30   & 16.35   & \textbf{654.8}  & 6.85   \\
\bottomrule
\end{tabular}
\end{table}

\xhdr{Discussion.}  Table~\ref{tab::local_vs_global} shows that  on \textsc{Comm-20}, \textsc{Planar}, and \textsc{ENZYMES} both methods succeed essentially everywhere, with the local sampler $1.7$--$2.8\times$ faster on the smaller datasets and on par on \textsc{ENZYMES}. However, in \textsc{SBM}, the local sampler succeeds on only $15\%$ of targets and exhausts the budget on the rest, while CP returns a valid graph on $82.5\%$ at less than half the wall-clock. 
Consistent with Theorem~\ref{thm::local_impossibility}, local moves fail when the PD encodes a tightly constrained global structure, which CP's global view recovers. The two methods are thus complementary. Appendix~\ref{app:prac} gives a practitioner's guide to choosing a priori among them and approximate neural generators.

\subsection{Comparison with Neural Generative Baselines}
\label{sec::exp_neural}

We now compare against neural diffusion baselines that approximate PD-conditioning.
\begin{figure}[t]
\centering
\includegraphics[width=0.80\textwidth]{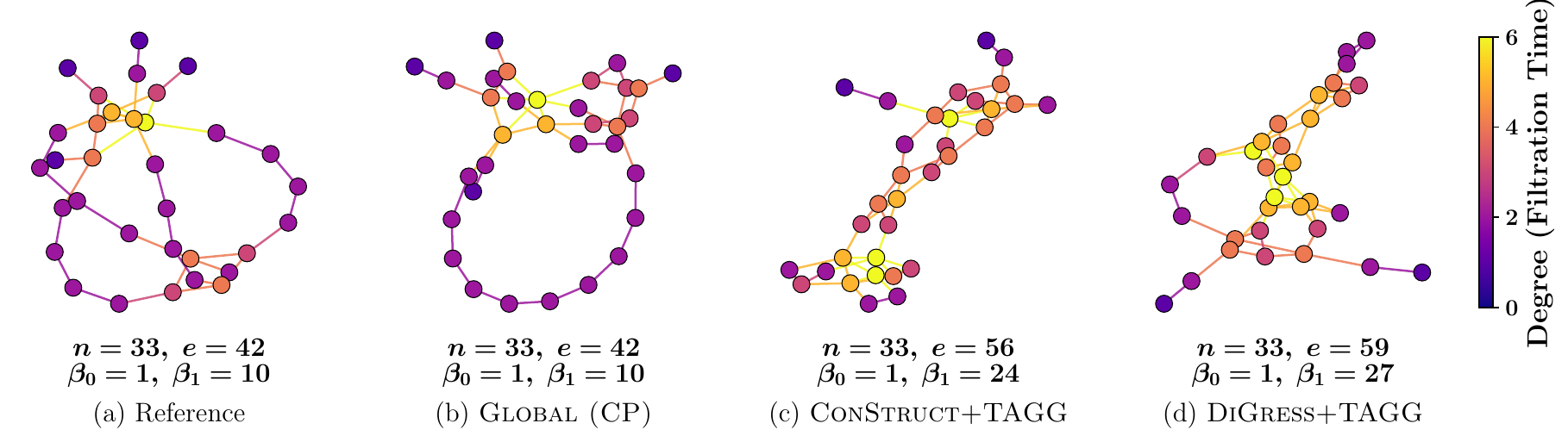}
\caption{Qualitative example on an \textsc{ENZYMES} test graph. 
Node colour encodes degree. CP matches $(n, |E|, \beta_0, \beta_1)$ exactly, while neural variants add excess cycles. 
More  in Appendix~\ref{app:qualitative}.}
\label{fig::qualitative}
\end{figure}

\xhdr{Setup.} We compare two backbones, \textsc{DiGress}~\cite{vignac2022digress} and \textsc{ConStruct}~\cite{madeira2024generative}, both augmented with \textsc{TAGG}'s~\cite{parktopology} topology-aware attention module (TAM) and $1$-Wasserstein PDM loss, yielding \textsc{DiGress+TAGG} and \textsc{ConStruct+TAGG}. Both variants are trained per-dataset with PD conditioning supplied via TAM and regularised by PDM. At inference we feed each test graph's own persistence-landscape vector $\mu_{G_{\mathrm{test}}}$ rather than the dataset average $\mu_{G'}$ used by the original \textsc{TAGG}, granting the neural variants the strongest possible per-target signal and a paired one-to-one correspondence with the reference. CP receives the same target PD as direct input. Neural variants are evaluated over $5$ random seeds; CP is deterministic. We measure topological fidelity by the mean paired $1$-Wasserstein distance under the degree filtration, $\bar W_1 = \tfrac{1}{N}\sum_i [W_1(\mathcal{P}_0(G^{(i)}_\mathrm{ref}), \mathcal{P}_0(H^{(i)})) + W_1(\mathcal{P}_1(G^{(i)}_\mathrm{ref}), \mathcal{P}_1(H^{(i)}))]$, with diagonal padding (the metric used as the PDM training loss). 
Details are deferred to Appendix~\ref{app:exp_neural}.

\begin{wraptable}{r}{0.5\textwidth}
\vspace{-12pt}
\centering
\caption{\textsc{Global} (CP) vs.\ TAGG-augmented neural baselines on degree-PD $\bar W_1$ ($0$ = exact match).}
\label{tab::cp_vs_tagg}
\footnotesize
\setlength{\tabcolsep}{3pt}
\renewcommand{\arraystretch}{1.0}
\begin{tabular}{@{}llcccc@{}}
\toprule
& & \textsc{Comm-20} & \textsc{Planar} & \textsc{SBM} & \textsc{ENZ.} \\
\midrule
\multirow{3}{*}{\textsc{CP}}
 & $\bar W_1$ $\downarrow$        & \textbf{0.00} & \textbf{0.00} & \textbf{0.00} & \textbf{0.00} \\
 & Time (s) $\downarrow$          & 0.30 & 16.35 & 654.8 & 6.85 \\
 & Train (h) $\downarrow$         & 0    & 0     & 0     & 0    \\
\midrule
\multirow{3}{*}{\shortstack[l]{\textsc{ConStr.}\\\textsc{+TAGG}}}
 & $\bar W_1$ $\downarrow$        & 9.35  & 43.23 & 434.48 & 17.35 \\
 & Time (s) $\downarrow$          & 0.88  & 4.73  & 17.11  & 4.92  \\
 & Train (h) $\downarrow$         & 9.4   & 24.3  & 59.8   & 61.6  \\
\midrule
\multirow{3}{*}{\shortstack[l]{\textsc{DiGress}\\\textsc{+TAGG}}}
 & $\bar W_1$ $\downarrow$        & 4.52  & 38.77 & 464.99 & 17.06 \\
 & Time (s) $\downarrow$          & 0.90  & 5.30  & 23.16  & 5.94  \\
 & Train (h) $\downarrow$         & 8.9   & 32.5  & 47.6   & 49.2  \\
\bottomrule
\end{tabular}
\vspace{-15pt}
\end{wraptable}

\xhdr{Discussion.} Table~\ref{tab::cp_vs_tagg} shows that CP achieves $\bar W_1 = 0$ on every dataset by construction, while the neural variants do not. 
(full table with std.\ dev.\ in Appendix~\ref{app:exp_neural}, Table~\ref{tab::cp_vs_tagg_full})
Figure~\ref{fig::qualitative} makes the same point per-graph: the neural variants merge two components into one and introduce excess cycles, while CP returns a different graph from the reference (many graphs share the same PD; see Appendix~\ref{app:diversity}) but matches every degree-PD invariant exactly. A second qualitative example, with a full per-graph breakdown of edge and cycle drift, is in Appendix~\ref{app:qualitative}.
In terms of runtime cost, neural variants are $3$--$38\times$ faster than CP at inference on larger datasets but require $9$--$62$ hours of per-dataset training; CP needs none. The two answer different questions: CP when exact PD-correctness is required and the constraint problem is tractable; neural variants when approximate correctness suffices and the training cost amortizes over many generations.

\section{Conclusion}

We initiated a principled study of strict PD-compliant graph generation: producing graphs whose persistence diagram under a given filtration matches a target exactly. Focusing on the degree filtration, we developed two complementary generators:  a local rejection sampler with soundness and completeness guarantees, and a global constraint-programming formulation with a minimal-sufficient encoding. Furthermore, we extended the global formulation to any vertex-based permutation equivariant filtration scheme. We showed that the local view, while cheap on easy instances, is fundamentally limited (Theorem~\ref{thm::local_impossibility}) when the PD encodes tight global structure. Empirically, both methods produce graphs realizing the target PD exactly on standard benchmarks, while neural diffusion baselines with PD-conditioning regularizers leave a structural gap that more training does not close. We see this work as a first step: natural extensions include richer filtration schemes, controlled relaxations of strict compliance (e.g.\ bottleneck-ball targets), and using the constraint encoding as a building block inside learned generators.

\newpage
\begin{ack}
This project was conceived during discussions at NeurIPS 2025, with early ideas dating back to when MJ and IR were participating during 2025 Aalto Science Institute international summer research programme at Aalto University. MJ is partially supported by the National Science Foundation Graduate Research Fellowship (DGE-2236662). IR and VG acknowledge support from Saab-WASP (grant 411025). VG also acknowledges the Jane and Aatos Erkko foundation (grant 7001703) for their support. We are grateful to the anonymous program chair, area chairs and the reviewers for their constructive feedback and service. MJ would especially like to thank Thomas Brazelton who inspired MJ to think about the exact topological compliance for graphs. MJ would also like to thank Erin Chambers for helpful conversations as well as Yuqin Kewang, Guangyu Meng, and Shankha Mukherjee for support during rebuttal. 
\end{ack}

{
\small
\bibliographystyle{abbrv}
\bibliography{references, references_camera_ready}
}


 \appendix

\newpage

\begin{center}
{\Large\bfseries\ztitle} \\
{\Large\bfseries(Appendix)}
\end{center}

\section{Limitations and Future Directions}\label{sec::limitations}

There are interesting graph properties that are not preserved under PD-equivalent graphs. For example, planarity and information about clustering are not necessarily invariants under PD-equivalence. The generation methods discussed in this paper are not guaranteed to maintain them either. As a future direction, we could aim to generate graphs that preserves both topological properties and other non-topological properties.  In Appendix~\ref{app:diversity}, we tested how diverse graphs in a single degree-based PD-equivalence class can be. Indeed, within one PD-equivalence class, clustering, triangle counts, spectra, and planarity all vary.

There are also a number of future directions that would be interesting to explore, for which our work did not focus on.

\xhdr{Higher-order PDs.} We restricted to filtrations on graphs, where the PDs of dimension $\geq 2$ are empty. One may be interested to extend this to filtrations on simplicial complexes. As there are higher simplices, tracking their effects would require matrix-reduction procedures, which increases both computational cost and the chance of infeasible states. To handle higher-order PDs on simplicial complexes, we would need to incorporate additional constraints in Section~\ref{sec:global}. We anticipate it would be challenging to characterize simple combinatorial constraints for the higher-order PDs, and the CP solver would certainly take longer to solve.

\xhdr{Scalability.} Exact global generation can be computationally expensive, and the CP formulation is not expected to scale uniformly to large or difficult instances. In Appendix~\ref{app:complexity}, we noted that the encoding size is polynomial; solving is NP-hard in general (it subsumes SAT), so no closed-form runtime bound applies. In regimes where both exact methods fail (harder SBM instances are the concrete example), neural sampling is the only tractable choice, at the cost of exact PD compliance.

\xhdr{General filtrations.} The soundness and completeness of the encoding in Theorem~\ref{thm:cp_correct} is independent of the given $\Phi$. However, solve cost depends on the encoding complexity: degree is linear, whereas triangle and clustering add $O(n^3)$ auxiliaries and are costlier (see Appendix~\ref{app:filtrations}). One could imagine the costs would vary across general filtrations.

\xhdr{Neural interface.} Exact persistence-diagram agreement does not imply that the generated graphs preserve other relevant properties. Coupling our methods with a neural generator to preserve properties topological summaries miss, such as planarity and spectral data, is a promising direction for future work, as is a downstream end-to-end demonstration. Since one motivation for exact compliance was to find counter-examples, one may ask if neural interface (along the themes of~\cite{wagner2021constructions}) can help to find such examples.

\section{Broader Impact Statement}\label{app:impact}

This paper presents work whose goal is to advance the problem of strictly Persistence-Diagram compliant graph generations. As highlighted in the introduction, topological descriptors have a wide variety of usage in biology, infrastructure networks, and more. There are many potential societal consequences of our work, none of which we feel must be specifically highlighted here.

\section{Declaration of LLM Usage}\label{app::llm}

We used large language models (LLMs) for help in the coding and visualization part of the paper. We also consulted LLMs in the writing of the paper, where we double checked the written statements. The proofs in Appendix~\ref{app::loc_proofs} and Appendix~\ref{app:cp_proofs} were written by human.

\newpage
\section{Proofs in Section~\ref{sec::local}}\label{app::loc_proofs}

\subsection{Proofs in Section~\ref{sec::general_fil}}

\begin{proof}[Proof of Theorem~\ref{thm::general_filtration}]
For the first statement, suppose the algorithm terminates at the end, then we have created a sequence of subgraphs at each stage as:
\[H_{\bullet}: H_{-1} = \emptyset \subseteq H_0 \subseteq ... \subseteq H.\]
Here we verify that $\mathcal{P}(H_{\bullet}) = (\mathcal{P}_0, \mathcal{P}_1)$. Indeed, at each stage $k$, each vertex addition step adds in un-filled tuples of the form $(k, -)$ for each $(k, d) \in \mathcal{P}_0$. For each $(b, k) \in \mathcal{P}_0$ processed at the component merging step, the edge added in the component merging step goes from a component born at time $b$ to one born at time $\leq b$, which has the effect of killing off the prior component. If they both happened to have the same birth time, we can without loss still kill off the first component. Thus, we see that the component merging step perfectly aligns with the $(b, k)$'s processed in $\mathcal{P}_0$. The same discussion applies to the cycles $(k, \infty)$ processed in the cycle addition step.\\

Thus, we see that as long as the algorithm terminates, then by design the result filtration $H_{\bullet}$ has persistence pairs strictly equal to $(\mathcal{P}_0, \mathcal{P}_1)$.\\

For the second statement, by induction the proof reduces to the following case - suppose we have produced a sequence of subgraphs from the algorithm
\[(G'_k)_{\bullet}: \emptyset = G'_{-1} \subseteq G'_{0} \subseteq ... \subseteq G'_{k},\quad \mathcal{P}^i((G'_{k})_{\bullet}) =  \mathcal{P}^i(G_{\bullet}) \text{ for } -1 \leq i \leq k,\]
Suppose we have $G'_{k+1} \supseteq G'_k$ such that $\mathcal{P}^{k+1}((G'_{k+1})_{\bullet}) = \mathcal{P}^{k+1}(G_{\bullet})$, we seek to show that $G'_k$ can be extended to $G'_{k+1}$ \textbf{using the moves described in the algorithm}. Note here the base case $k = -1$ is vacuously true.\\

Indeed, let $H$ be $G'_{k+1} - G'_{k}$ (which could have open edges missing a vertex) and suppose $H$ has $m$ edges.\\

Although in the step $G'_k \subseteq G'_{k+1}$, the entire $H$ is added at once, we can algorithmically think of this as adding individual vertices and edges one at a time. The theory of persistent homology ensures that the final persistence pairs produced at the end are well-defined independent of the order of simplices added, as long as whenever an edge $e = (u, v)$ is added, the vertices $u$ and $v$ are added first. Thus we can think of the step $G'_{k} \subseteq G'_{k+1}$ as
\[G'_{k} \subseteq G^{'}_{k, 0} \subseteq G'_{k, 1} \subseteq ... \subseteq G'_{k, m} = G'_{k+1}\]
where $G'_{k} \subseteq G'_{k, 0}$ adds all the vertices of $H$ in, and for $i \geq 0$, $G'_{k, i} \subseteq G'_{k, i+1}$ adds in exactly one edge from $H$. The step $G'_k \subseteq G'_{k, 0}$ is exactly the first two steps of the algorithm.\\

Now from the step $G'_{k,0} \to G'_{k, 1}$, an edge either kills a tuple in $\mathcal{P}_0$ or makes a tuple in $\mathcal{P}_1$. If the edge $e = (v, w)$ adds a cycle, then $v, w$ belongs to the same component at $G'_{k,0}$. Observe that in general if two vertices $v, w$ are in the same component of graph $A$, and $A \subseteq B$, then $v, w$ are in the same component of $B$. Thus, it is possible that an edge created in the \textbf{cycle creation} step of the algorithm links $v$ and $w$. If the edge fills a tuple $(b, -)$ to $(b, k+1)$, then it is drawn from an undead component containing $v$ whose birth time is at $b$ to a component containing $w$ that is born earlier. Now observe that in general if the component of $v \in A$ has birth time $b$, and $A \subseteq B$, then $v \in B$ also has birth time $\leq b$ (i.e. the birth time can only decrease). Thus, this can happen in the first tuple encountered in the \textbf{componet merging} step.

We can inductively align each step $G'_{k, i} \subset G'_{k, i+1}$ such that the edge $e$ added, if contributing to the death time of a component, is the $j$-th step in the for loop of the \textbf{component merging} step, and $e$ is the $j$-th component-killing-edge added. Indeed, by the second observation in the last paragraph, it suffices to note that the cycle-creating-edges does not link different components. Thus, we conclude the order of component-killing-edges added in $G'_{k, 0} \to G'_{k, 1}$ can be realized as the algorithmic steps occurring in the \textbf{component merging} step. As argued, above, the cycle-creating edges can all be deferred to the \textbf{cycle creating} step and are hence accounted as well. This concludes the proof.
\end{proof}

\begin{proof}[Proof of Proposition~\ref{prop::cycle_capacity}]
Suppose by induction we have iteratively built a graph $H_{k-1}$. If we just view $H_{k-1}$ as the final graph, then by the first part of Theorem~\ref{thm::general_filtration}, we know that $\mathcal{P}(H_{-1} \subseteq H_0 \subseteq ... H_{k-1}) = \mathcal{P}^k(G_{\bullet})$ ($\dagger$).\\

Now, clearly the vertex addition step can always be carried out, as we are just adding disjoint vertices to the graph. The \textbf{component merging} step can always be carried out. Indeed, by $(\dagger)$, we know the components in $H_{k-1}$ and the components in $G_{k-1}$ have the same multi-set of birth times. Thus, we can always find the desired edge between two components in the component merging step.\\

Finally, for the \textbf{cycle addition} step: We first note that, in general, for non-empty connected graphs $A$ and $B$ where $A$ has more than $1$ vertex, let $C$ denote linking $A$ and $B$ by an edge. Observe that $C$ always has room for cycles since we can draw an edge between any vertex in $A$ not linked to $B$ to vertices in $B$.\\

Now we observe that for a graph $A$ with isolated vertices, write $A = A' \sqcup A''$ where $A'$ has no isolated vertices and $A''$ is a discrete graph (no edges). Then clearly $\operatorname{cap}^{\operatorname{cyc}}(A) = \operatorname{cap}^{\operatorname{cyc}}(A')$ as isolated vertices cannot support cycles.\\    

Now let $H'_{k} \supseteq H_{k-1}$ be the new graph extending $H_{k-1}$ that is created after the \textbf{component merging} step. Observe that all the edges $e$ added in the \textbf{component merging step} fall into one of the following two scenarios (1) $e$ is drawn between two isolated vertices or (2) $e$ is drawn between two connected graphs $A$ and $B$ where one of them has more than $1$ vertex. Regardless, we see that the cycle capacity of the graph can only increase, and $\operatorname{cap}^{\operatorname{cyc}}(H'_k) \geq \operatorname{cap}^{\operatorname{cyc}}(H_k)$. Finally, the \textbf{cycle addition} step can succeed if and only if $\operatorname{cap}^{\operatorname{cyc}}(H'_k)$ is $\geq$ the number of tuples of the form $(k, \infty)$ in $\mathcal{P}_1$. This holds since it holds for $\operatorname{cap}^{\operatorname{cyc}}(H_k)$.
\end{proof}

\begin{proof}[Proof of Corollary~\ref{cor::general_trees}]
    By Proposition~\ref{prop::cycle_capacity}, we know that we can successfully build $H_{k+1}$ from a given $H_k$ if $\operatorname{cap}^{\operatorname{cyc}}(H_k)$ is greater than or equal to the number of tuples of the form $(k+1, \infty)$ in $\mathcal{P}_1$. If $\mathcal{P}_1(G_{\bullet})$ is the empty list, this number is in particular zero. Since the cycle capacity of a graph is non-negative, it is above greater than or equal to $0$. Thus, we can always successfully build $H_{k+1}$ from a given $H_k$ here. The statement then follows from induction.
\end{proof}

\subsection{Proofs in Section~\ref{sec::vertex_based}}

\begin{proof}[Proof of Proposition~\ref{prop::trivial_death_vertex_based}]
It suffices to show that any non-trivial vertex death at a time must also be accompanied with trivial deaths. We suppose for contradiction that this statement is false.

Let $G_{\bullet}$  be a vertex-based filtration of such counter-example at stage $G_k \subseteq G_{k+1}$. Now suppose the non-trivial death occurs by an edge $e = (v, w)$ in the steps building $G_k$ to $G_{k+1}$.  Without loss, we can also take $v \in V(G_k)$ and $w \in V(G_{k+1} - G_k)$, because $f$ is a vertex-based filtration. If $w$ has not been linked to any edge yet, this can only constitute a trivial death as $w$ is born later than $v$. Thus, it must be the case that $w$ had an edge connected to it already. In the first time $w$ is connected to some other vertex $v_0$, either $w$ dies because $v_0$ is younger (giving a trivial pair) or $w$ and $v_0$ are born at the same time (still giving a trivial pair). Thus, we see we would get a trivial pair regardless.
\end{proof}
\begin{proof}[Proof of Theorem~\ref{thm::vertex_filtration}]
By design, Algorithm~\ref{alg:vertex_based_gen} realizes a subset of possible steps taken in Algorithm~\ref{alg:pd_to_graph}. Thus Theorem~\ref{thm::general_filtration} implies that the final $H_{\bullet}$, if created, has $\mathcal{P}(H_{\bullet}) = (\mathcal{P}_0, \mathcal{P}_1)$. Now we wish to ensure that $H_{\bullet}$ is a vertex-based filtration. Indeed, in the \textbf{component merging} step and the \textbf{cycle creation} step, every edge that is added is conditioned have at least one of its vertices being supported in the newly created vertices at that stage. Thus, the filtration is vertex-based.\\

For any vertex-based filtration $H'_{\bullet}$ that is PD-equivalent to $G_{\bullet}$, we now argue how Algorithm~\ref{alg:vertex_based_gen} can create this. Indeed, if the \textbf{minimum vertex pivot} step is not imposed, then clearly Theorem~\ref{thm::general_filtration} implies that the algorithm can recreate $H'_{\bullet}$, as we had only imposed the minimum constraints to make them vertex-based. Thus, it suffices to show that this minimum vertex pivot can always be realized as the first step in $H'_{k} \subseteq H'_{k+1}$ for any vertex-based filtration (provided a vertex death has happened).\\

Indeed if vertex deaths occurred, then the component merging step has to come with the addition of new edges $\{e_1, ..., e_r\}$. Now look at all components that the vertices of $e_i$'s belong in for all $i$ and find the one with smallest birth time. Since the list of edges is non-empty, we can find a possibly non-unique component.\\

Now we claim we can perform the minimum vertex pivot there. Indeed, let $B$ be the birth time of this component. By construction we know that $B \leq b_{min}$, as $b_{min}$ takes the minimum over a subset of the edges. If $B < b_{min}$, then this can always be realize in the minimum vertex generation step of the algorithm. If $B = b_{min}$, we need to check that not all components born at $b_{min}$ are set to die at $k+1$. Indeed, suppose for contradiction they are, then to kill off the last one, you need to make an edge from a component born earlier than $b_{min}$, which implies $B < b_{min}$. Thus, we see the step for minimum vertex pivot is always possible and can go first.\\

Finally for the last sentence: clearly we can always do vertex addition, and the ability to do minimum vertex pivot follows from Proposition~\ref{prop::trivial_death_vertex_based}. The ability to succeed in component merging follows similarly as in Proposition~\ref{prop::cycle_capacity} from induction and the fact that the newly created vertices from stage $k-1$ to stage $k$ are the same in the hidden filtration and the filtration we are creating.
\end{proof}

\begin{proof}[Proof of Corollary~\ref{cor::vertex_trees}]
This again follows similarly as to the proof of Corollary~\ref{cor::vertex_trees}. Since we do not need to add cycles if $\mathcal{P}_1$ is empty, Theorem~\ref{thm::vertex_filtration} implies the algorithm always terminates.
\end{proof}

\subsection{Proofs in Section~\ref{sec::degree_based}}

\begin{proof}[Proof of Theorem~\ref{thm::degree_filtration}]
By design, Algorithm~\ref{alg:degree_based_gen} realizes a subset of possible steps taken in Algorithm~\ref{alg:vertex_based_gen}. Thus Theorem~\ref{thm::vertex_filtration} implies that the final $H_{\bullet}$, if created, is vertex-based and has $\mathcal{P}(H_{\bullet}) = (\mathcal{P}_0, \mathcal{P}_1)$. This in particular implies $H$ and $G$ have the same number of vertices born at each stage, and the same number of edges added at each stage ($\dagger$).\\

For the first statement, it suffices to show that the filtration is degree-based. Since this is already vertex based, it suffices to show that the degree of each vertex added at stage $k$ is $k$. This is exactly what the capacity assigned to each vertex means. If the algorithm can terminate at the end, then the capacity of each node has to be $0$ by ($\dagger$) and pigeonhole's principle. Thus, a node born at time $k$ will obtain $k$ many edges incident to it in the final graph. This gives the desired degree-based filtration.\\

For the second sentence: A degree-based filtration is in particular a vertex-based filtration. By Theorem~\ref{thm::vertex_filtration}, it can be realized by Algorithm~\ref{alg:vertex_based_gen}. On the other hand, any realization of this in Algorithm~\ref{alg:vertex_based_gen} automatically passes the capacity check, as the original filtration is a degree-based filtration. Thus, it can be realized by Algorithm~\ref{alg:degree_based_gen}.
\end{proof}

\subsection{Proofs in Section~\ref{sec::local_limit}}
\label{sec::local_limit_proofs}

\begin{proof}[Proof of Theorem~\ref{thm::local_impossibility}]
We split the proof into the following cases.

\textbf{For General Filtrations:} Suppose for contradiction that such Algorithm A exists. Consider $G_{\bullet} = (G_{-1} \subseteq G_0 \subseteq G_1 \subseteq G_2)$ as given in Figure~\ref{fig::cycle_capacity}. By (1) and (2), we have that $(H_{-1} \subseteq H_0 \subseteq H_1) \in \mathcal{O}_{1}(G_{\bullet})$. By (1), we know that there must exist some $H_2$ extending $H_1$ such that $\mathcal{P}(H_{-1} \subseteq H_0 \subseteq H_1 \subseteq H_{2}) = \mathcal{P}(G_{\bullet})$. However, as indicated by the Example~\ref{exp::no_more_cycle}, no such example can exist.

\textbf{For Vertex-Based Filtrations}: Suppose for contradiction that such Algorithm A exists. Consider $G_{\bullet}$ in the following diagram:
\[
\input{Figures/vertex_based_capacity}
\]
We have that $\mathcal{P}(G_{-1} \subseteq G_0 \subseteq G_1) = \mathcal{P}(H_{-1} \subseteq H_0 \subseteq H_1)$. Thus by (1) and (2), there should exist $H_2$ extending $H_{-1} \subseteq H_0 \subseteq H_1$ such that $\mathcal{P}(H_{-1} \subseteq H_0 \subseteq H_1 \subseteq H_2) = \mathcal{P}(G_{\bullet})$. We see from the persistence pair that this requires the addition of 4 edges (one can count the number of edges added based on how many vertex deaths and cycle births happened). We see from $\mathcal{P}(G_{\bullet})$ that 3 cycles are made and 1 trivial vertex death occurred in going from $G_1 \subseteq G_2$. However, by pigeonhole's principle, there is no way to adjoin a new vertex $v$ to $H_1$ along with 4 edges from $H_1$ to $v$ such that this happens.

\textbf{For Degree-Based Filtrations:} Suppose for contradiction that such Algorithm A exists. Observe that (1) and (2) implies a sub-filtration of a degree-based filtration should be a degree-based filtration. However, this is not true. For example, consider the degree based filtration of either Graph 1 or Graph 2 in Figure~\ref{fig:intro-ex} and truncate up to $t = 2$. There are degree 1 vertices that are spawned at $t = 2$, which is a contradiction.
\end{proof}

\subsection{Additional Remark for Section~\ref{sec::local}}

\begin{remark}
Instead of asking Algorithm~\ref{alg:pd_to_graph} to generate PD-equivalent graph filtrations, we could ask Algorithm~\ref{alg:pd_to_graph} to defer the cycle addition to a later stage if it ran out of cycles to add. This would typically make a graph filtration that is no longer PD-equivalent, but we would expect it to be a closer approximation of the original graph filtration than many other models in the literature.
\end{remark}

\newpage
\section{Pseudocode of the Local Generators}\label{app:algorithms}

Here, we provide the full pseudocode for the three local generators of Section~\ref{sec::local}. Algorithm~\ref{alg:pd_to_graph} is the general-filtration generator. In Algorithm~\ref{alg:vertex_based_gen}, the changes with respect to Algorithm~\ref{alg:pd_to_graph} are shown in \textcolor{violet}{violet}. Later, in Algorithm~\ref{alg:degree_based_gen}, the further changes with respect to Algorithm~\ref{alg:vertex_based_gen} are shown in \textcolor{teal}{teal}.

\begin{algorithm}[H]
\small
\caption{\textbf{General filtration} generator. PD-equivalent under any general filtration.}
\label{alg:pd_to_graph}
\KwIn{$(\mathcal{P}_0,\mathcal{P}_1)$ persistence pairs from a hidden filtration}
\KwOut{$H, f_V, f_E$ specifying the filtration of the graph $H$, or \textnormal{\textsc{Fail}}}
\SetKwFor{For}{for}{}{}
\SetKwFor{ForEach}{foreach}{}{}
\textbf{Pre-process:} $B_0(k)\!=\!\{(k,d)\!\in\!\mathcal{P}_0\}$, $D_0(k)\!=\!\{(b,k)\!\in\!\mathcal{P}_0\}$, $B_1(k)\!=\!|\{(k,\infty)\!\in\!\mathcal{P}_1\}|$\;
\textbf{Initialize:} $H\!\gets\!\emptyset$; union-find $\mathcal{U}$; per-node arrays $\mathbf{e}$ (exists), $\mathbf{c}$ (comp-id), $\mathbf{b}$ (comp-birth); adjacency $A$\;
\For{$k\!=\!0,\dots,n$\quad\normalfont\textbf{(filtration steps)}}{
    \ForEach{$(k,d)\!\in\!B_0(k)$\quad\normalfont\textbf{(Vertex Add.)}}{
        add $v$; $f_V(v)\!\gets\!k$; $\mathbf{e}[v]\!\gets\!1$; $\mathbf{c}[v]\!\gets\!v$; $\mathbf{b}[v]\!\gets\!k$\;
    }
    \ForEach{$(b,k)\!\in\!\textit{shuffle}(D_0(k))$\ \normalfont\textbf{(Comp.\ Merge)}}{
        $I_{\mathrm{m}}[i,j]\!\gets\!\mathbf{e}[i]\!\wedge\!\mathbf{e}[j]\!\wedge\!(\mathbf{b}[i]\!=\!b)\!\wedge\!(\mathbf{b}[j]\!\le\!b)\!\wedge\!(\mathbf{c}[i]\!\neq\!\mathbf{c}[j])\!\wedge\!\neg A[i,j]$\;
        $(u,v)\!\sim\!\mathrm{Unif}(I_{\mathrm{m}})$;\, $A[u,v]\!\gets\!1$;\, $f_E(u,v)\!\gets\!k$;\, $\mathcal{U}.\mathrm{union}(u,v)$;\, $\mathbf{b}[\mathcal{U}.\mathrm{find}(u)]\!\gets\!\min(\mathbf{b}[u],\mathbf{b}[v])$\;
    }
    \For{$j\!=\!1,\dots,B_1(k)$\quad\normalfont\textbf{(Cycle Add.)}}{
        $I_{\mathrm{c}}[i,j]\!\gets\!\mathbf{e}[i]\!\wedge\!\mathbf{e}[j]\!\wedge\!(\mathbf{c}[i]\!=\!\mathbf{c}[j])\!\wedge\!\neg A[i,j]$\;
        \lIf{$I_{\mathrm{c}}\!=\!\mathbf{0}$}{\Return{\textnormal{\textsc{Fail}}}}
        $(u,v)\!\sim\!\mathrm{Unif}(I_{\mathrm{c}})$;\, $A[u,v]\!\gets\!1$;\, $f_E(u,v)\!\gets\!k$\;
    }
}
\Return $H, f_V, f_E$\;
\end{algorithm}

\begin{algorithm}[H]
\small
\caption{\textbf{Vertex-based filtration} generator. Changes with respect to Alg.~\ref{alg:pd_to_graph} are in \textcolor{violet}{violet}.}
\label{alg:vertex_based_gen}
\KwIn{$(\mathcal{P}_0,\mathcal{P}_1)$ persistence pairs from a hidden \textcolor{violet}{vertex-based} filtration}
\KwOut{$H, f_V, f_E$ specifying the \textcolor{violet}{vertex-based} filtration of the graph $H$, or \textnormal{\textsc{Fail}}}
\SetKwFor{For}{for}{}{}
\SetKwFor{ForEach}{foreach}{}{}
\textbf{Pre-process:} $B_0(k)\!=\!\{(k,d)\!\in\!\mathcal{P}_0\}$, $D_0(k)\!=\!\{(b,k)\!\in\!\mathcal{P}_0\}$, $B_1(k)\!=\!|\{(k,\infty)\!\in\!\mathcal{P}_1\}|$\;
\textbf{Initialize:} $H\!\gets\!\emptyset$; union-find $\mathcal{U}$; per-node arrays $\mathbf{e}$ (exists), $\mathbf{c}$ (comp-id), $\mathbf{b}$ (comp-birth)\textcolor{violet}{, $\mathbf{n}$ (newness)}; adjacency $A$\;
\For{$k\!=\!0,\dots,n$\quad\normalfont\textbf{(filtration steps)}}{
    \textcolor{violet}{$\mathbf{n}\!\gets\!\mathbf{0}$}\;
    \ForEach{$(k,d)\!\in\!B_0(k)$\quad\normalfont\textbf{(Vertex Add.)}}{
        add $v$; $f_V(v)\!\gets\!k$; $\mathbf{e}[v]\!\gets\!1$; $\mathbf{c}[v]\!\gets\!v$; $\mathbf{b}[v]\!\gets\!k$\textcolor{violet}{; $\mathbf{n}[v]\!\gets\!1$}\;
    }
    \If{\textcolor{violet}{$D_0(k)\!\neq\!\emptyset$}\quad\normalfont\textcolor{violet}{\textbf{(Min.\ Vertex Pivot)}}}{
        \textcolor{violet}{$b_{\min}\!\gets\!\min\{b\!:\!(b,k)\!\in\!D_0(k)\}$}\;
        \textcolor{violet}{$\beta\!=\!b_{\min}\!-\!1$ if all components with birth $b_{\min}$ are dying at $k$, else $\beta\!=\!b_{\min}$}\;
        \textcolor{violet}{$I_{\mathrm{p}}[i,j]\!\gets\!(\mathbf{b}[i]\!\le\!\beta)\!\wedge\!\neg\mathbf{n}[i]\!\wedge\!\mathbf{n}[j]\!\wedge\!\neg A[i,j]$}\;
        \textcolor{violet}{$(u,v)\!\sim\!\mathrm{Unif}(I_{\mathrm{p}})$;\, $A[u,v]\!\gets\!1$;\, $f_E(u,v)\!\gets\!k$;\, $\mathcal{U}.\mathrm{union}(u,v)$;\, $\mathbf{b}[\mathcal{U}.\mathrm{find}(u)]\!\gets\!\min(\mathbf{b}[u],\mathbf{b}[v])$}\;
    }
    \ForEach{$(b,k)\!\in\!\textit{shuffle}(D_0(k))$\ \normalfont\textbf{(Comp.\ Merge)}}{
        $I_{\mathrm{m}}[i,j]\!\gets\!\mathbf{e}[i]\!\wedge\!\mathbf{e}[j]\!\wedge\!(\mathbf{b}[i]\!=\!b)\!\wedge\!(\mathbf{b}[j]\!\le\!b)\!\wedge\!(\mathbf{c}[i]\!\neq\!\mathbf{c}[j])\!\wedge\!\neg A[i,j]\textcolor{violet}{\!\wedge\!(\mathbf{n}[i]\!\vee\!\mathbf{n}[j])}$\;
        $(u,v)\!\sim\!\mathrm{Unif}(I_{\mathrm{m}})$;\, $A[u,v]\!\gets\!1$;\, $f_E(u,v)\!\gets\!k$;\, $\mathcal{U}.\mathrm{union}(u,v)$;\, $\mathbf{b}[\mathcal{U}.\mathrm{find}(u)]\!\gets\!\min(\mathbf{b}[u],\mathbf{b}[v])$\;
    }
    \For{$j\!=\!1,\dots,B_1(k)$\quad\normalfont\textbf{(Cycle Add.)}}{
        $I_{\mathrm{c}}[i,j]\!\gets\!\mathbf{e}[i]\!\wedge\!\mathbf{e}[j]\!\wedge\!(\mathbf{c}[i]\!=\!\mathbf{c}[j])\!\wedge\!\neg A[i,j]\textcolor{violet}{\!\wedge\!(\mathbf{n}[i]\!\vee\!\mathbf{n}[j])}$\;
        \lIf{$I_{\mathrm{c}}\!=\!\mathbf{0}$}{\Return{\textnormal{\textsc{Fail}}}}
        $(u,v)\!\sim\!\mathrm{Unif}(I_{\mathrm{c}})$;\, $A[u,v]\!\gets\!1$;\, $f_E(u,v)\!\gets\!k$\;
    }
}
\Return $H, f_V, f_E$\;
\end{algorithm}

\newpage
\begin{algorithm}[H]
\small
\caption{\textbf{Degree filtration} generator. Changes with respect to Alg.~\ref{alg:pd_to_graph} are in \textcolor{violet}{violet}; further changes with respect to Alg.~\ref{alg:vertex_based_gen} are in \textcolor{teal}{teal}.}
\label{alg:degree_based_gen}
\KwIn{$(\mathcal{P}_0,\mathcal{P}_1)$ persistence pairs from a hidden \textcolor{teal}{degree-based} filtration}
\KwOut{$H, f_V, f_E$ specifying the \textcolor{teal}{degree-based} filtration of the graph $H$, or \textnormal{\textsc{Fail}}}
\SetKwFor{For}{for}{}{}
\SetKwFor{ForEach}{foreach}{}{}
\textbf{Pre-process:} $B_0(k)\!=\!\{(k,d)\!\in\!\mathcal{P}_0\}$, $D_0(k)\!=\!\{(b,k)\!\in\!\mathcal{P}_0\}$, $B_1(k)\!=\!|\{(k,\infty)\!\in\!\mathcal{P}_1\}|$\;
\textbf{Initialize:} $H\!\gets\!\emptyset$; union-find $\mathcal{U}$; per-node arrays $\mathbf{e}$ (exists), $\mathbf{c}$ (comp-id), $\mathbf{b}$ (comp-birth)\textcolor{violet}{, $\mathbf{n}$ (newness)}\textcolor{teal}{, $\boldsymbol{\kappa}$ (capacity)}; adjacency $A$\;
\For{$k\!=\!0,\dots,n$\quad\normalfont\textbf{(filtration steps)}}{
    \textcolor{violet}{$\mathbf{n}\!\gets\!\mathbf{0}$}\;
    \ForEach{$(k,d)\!\in\!B_0(k)$\quad\normalfont\textbf{(Vertex Add.)}}{
        add $v$; $f_V(v)\!\gets\!k$; $\mathbf{e}[v]\!\gets\!1$; $\mathbf{c}[v]\!\gets\!v$; $\mathbf{b}[v]\!\gets\!k$\textcolor{violet}{; $\mathbf{n}[v]\!\gets\!1$}\textcolor{teal}{; $\boldsymbol{\kappa}[v]\!\gets\!k$}\;
    }
    \If{\textcolor{violet}{$D_0(k)\!\neq\!\emptyset$}\quad\normalfont\textcolor{violet}{\textbf{(Min.\ Vertex Pivot)}}}{
        \textcolor{violet}{$b_{\min}\!\gets\!\min\{b\!:\!(b,k)\!\in\!D_0(k)\}$}\;
        \textcolor{violet}{$\beta\!=\!b_{\min}\!-\!1$ if all components with birth $b_{\min}$ are dying at $k$, else $\beta\!=\!b_{\min}$}\;
        \textcolor{violet}{$I_{\mathrm{p}}[i,j]\!\gets\!(\mathbf{b}[i]\!\le\!\beta)\!\wedge\!\neg\mathbf{n}[i]\!\wedge\!\mathbf{n}[j]\!\wedge\!\neg A[i,j]$}$\textcolor{teal}{\!\wedge\!(\boldsymbol{\kappa}[i]\!>\!0\!\wedge\!\boldsymbol{\kappa}[j]\!>\!0)}$\;
        \textcolor{teal}{\lIf{$I_{\mathrm{p}}\!=\!\mathbf{0}$}{\Return{\textnormal{\textsc{Fail}}}}}
        \textcolor{violet}{$(u,v)\!\sim\!\mathrm{Unif}(I_{\mathrm{p}})$;\, $A[u,v]\!\gets\!1$;\, $f_E(u,v)\!\gets\!k$;\, $\mathcal{U}.\mathrm{union}(u,v)$;\, $\mathbf{b}[\mathcal{U}.\mathrm{find}(u)]\!\gets\!\min(\mathbf{b}[u],\mathbf{b}[v])$}\textcolor{teal}{;\, $\boldsymbol{\kappa}[u]\!\mathrel{-}\!=\!1$;\, $\boldsymbol{\kappa}[v]\!\mathrel{-}\!=\!1$}\;
    }
    \ForEach{$(b,k)\!\in\!\textit{shuffle}(D_0(k))$\ \normalfont\textbf{(Comp.\ Merge)}}{
        $I_{\mathrm{m}}[i,j]\!\gets\!\mathbf{e}[i]\!\wedge\!\mathbf{e}[j]\!\wedge\!(\mathbf{b}[i]\!=\!b)\!\wedge\!(\mathbf{b}[j]\!\le\!b)\!\wedge\!(\mathbf{c}[i]\!\neq\!\mathbf{c}[j])\!\wedge\!\neg A[i,j]$\newline\hspace*{2em}$\textcolor{violet}{\wedge\!(\mathbf{n}[i]\!\vee\!\mathbf{n}[j])}\textcolor{teal}{\!\wedge\!(\boldsymbol{\kappa}[i]\!>\!0\!\wedge\!\boldsymbol{\kappa}[j]\!>\!0)}$\;
        \textcolor{teal}{\lIf{$I_{\mathrm{m}}\!=\!\mathbf{0}$}{\Return{\textnormal{\textsc{Fail}}}}}
        $(u,v)\!\sim\!\mathrm{Unif}(I_{\mathrm{m}})$;\, $A[u,v]\!\gets\!1$;\, $f_E(u,v)\!\gets\!k$;\, $\mathcal{U}.\mathrm{union}(u,v)$;\, $\mathbf{b}[\mathcal{U}.\mathrm{find}(u)]\!\gets\!\min(\mathbf{b}[u],\mathbf{b}[v])$\textcolor{teal}{;\, $\boldsymbol{\kappa}[u]\!\mathrel{-}\!=\!1$;\, $\boldsymbol{\kappa}[v]\!\mathrel{-}\!=\!1$}\;
    }
    \For{$j\!=\!1,\dots,B_1(k)$\quad\normalfont\textbf{(Cycle Add.)}}{
        $I_{\mathrm{c}}[i,j]\!\gets\!\mathbf{e}[i]\!\wedge\!\mathbf{e}[j]\!\wedge\!(\mathbf{c}[i]\!=\!\mathbf{c}[j])\!\wedge\!\neg A[i,j]$\newline\hspace*{2em}$\textcolor{violet}{\wedge\!(\mathbf{n}[i]\!\vee\!\mathbf{n}[j])}\textcolor{teal}{\!\wedge\!(\boldsymbol{\kappa}[i]\!>\!0\!\wedge\!\boldsymbol{\kappa}[j]\!>\!0)}$\;
        \lIf{$I_{\mathrm{c}}\!=\!\mathbf{0}$}{\Return{\textnormal{\textsc{Fail}}}}
        $(u,v)\!\sim\!\mathrm{Unif}(I_{\mathrm{c}})$;\, $A[u,v]\!\gets\!1$;\, $f_E(u,v)\!\gets\!k$\textcolor{teal}{;\, $\boldsymbol{\kappa}[u]\!\mathrel{-}\!=\!1$;\, $\boldsymbol{\kappa}[v]\!\mathrel{-}\!=\!1$}\;
    }
}
\Return $H, f_V, f_E$\;
\end{algorithm}

\newpage
\section{Proof for Correctness of Encoding (Section~\ref{sec:global})}
\label{app:cp_proofs}

In this section, we will prove the correctness of the encoder in Section~\ref{sec:global} (Theorem~\ref{thm:cp_correct}). \textbf{We first prove the soundness of the encoder}, and we will break down the steps in a sequence of lemmas.\\

We first prove Lemma~\ref{lem::ancestry} as a preliminary step. 

\begin{lemma}\label{lem::ancestry}
Constraints $\mathcal{C}_1$ to $\mathcal{C}_6$ implies 
\noindent\textbf{$\mathcal{C}_7$ \;(ancestry)}: Each slot at stage $k{-}1$ has exactly one successor at stage $k$, and the
successor inherits all its vertices.
\end{lemma}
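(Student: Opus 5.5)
We will show that, under $\mathcal{C}_2$--$\mathcal{C}_4$, the slots at each stage are exactly the connected components of the induced subgraph $H_k \coloneqq H(\sigma)[V_k]$. The ancestry statement then follows from the monotonicity $V_{k-1} \subseteq V_k$.

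\textbf{Step 1 (slots are unions of components).} Fix a stage $k$. By $\mathcal{C}_2$, the slots $\{v : x_{kcv}=1\}$, $c \in [cc_k]$, partition $V_k$. By $\mathcal{C}_3$, whenever $M_{uv}=1$ with $u,v \in V_k$, the vertices $u$ and $v$ lie in the same slot. Hence no edge of $H_k$ crosses between slots, and every connected component of $H_k$ is contained in a single slot.

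\textbf{Step 2 (slots are connected).} Consider a nonempty slot $c$. By $\mathcal{C}_4$, every non-root member $v$ has exactly one parent $u$ with $\pi_{kcuv}=1$. That parent lies in the slot and is adjacent to $v$ in $H(\sigma)$, and its depth satisfies $\ell_{kcu}=\ell_{kcv}-1$. Following parent pointers therefore strictly decreases depth. Depths are bounded below by $0$, so the chain must end at a vertex with no parent, and the only such vertex is the root $r_{kcv}$. Thus every member of the slot is joined to the root by a path inside the slot, so the slot induces a connected subgraph.

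There is a subtle point: $\mathcal{C}_4$ only says anything about nonempty slots. If some slot $c$ were empty, then $r_{kcv}=0$ for all $v$, and $\mathcal{C}_4$ imposes nothing. I expect handling this to be the main obstacle, and I would resolve it by counting.
\begin{itemize}
\item By Steps 1--2, the nonempty slots are exactly the components of $H_k$. So the number of components $\beta_0(H_k)$ is at most $cc_k$, with equality if and only if no slot is empty.
\item $\mathcal{C}_5$ forces $|E_k|-|V_k|+cc_k=\mathrm{cy}_k \ge 0$. Together with the Euler relation $|E_k|-|V_k|+\beta_0(H_k)=\beta_1(H_k)$, this gives $cc_k-\beta_0(H_k)=\mathrm{cy}_k-\beta_1(H_k)$. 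That alone does not exclude empty slots.
\item Instead I would use $\mathcal{C}_6$. Each vertex has a birth $a_v$, so every nonempty slot has a well-defined minimal birth $b$ and satisfies exactly one $e_{kcb}$, while an empty slot satisfies none. Summing $\mathcal{C}_6$ over $b$ gives
\[
\sum_b \alpha(k,b) \;=\; \#\{\text{pairs }(b,d)\in\mathcal{P}_0 : b\le k<d\} \;=\; cc_k,
\]
since pairs with $b>k$ are not alive at stage $k$ (I will check that $\alpha$ is restricted to $b\le k$). So exactly $cc_k$ slots are nonempty, and none is empty.
\end{itemize}
Consequently, at every stage the slots are precisely the $cc_k$ components of $H_k$.

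\textbf{Step 3 (ancestry).} Since $H_{k-1}$ is the induced subgraph of $H_k$ on $V_{k-1}\subseteq V_k$, every component $A$ of $H_{k-1}$ is connected in $H_k$. It therefore lies in a unique component of $H_k$, that is, a unique slot at stage $k$; this is its successor, and it contains all vertices of $A$. Existence and uniqueness of the successor follow from the partition property $\mathcal{C}_2$ applied at stage $k$. This establishes $\mathcal{C}_7$. Note that $\mathcal{C}_1$ is not needed here, except that it fixes the births $a_v$ used in $\mathcal{C}_6$.
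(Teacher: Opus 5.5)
Your proof is correct, but it takes a different route from the paper's. The paper argues directly by contradiction. If two vertices $v,w$ share a slot at stage $k-1$ but fall into different slots at stage $k$ (using $\mathcal{C}_2$), then the connectivity of the stage-$(k-1)$ slot from $\mathcal{C}_4$ gives a path from $v$ to $w$. Some edge on that path must cross between slots at stage $k$, which contradicts $\mathcal{C}_3$.

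That argument is short and local, and it uses only $\mathcal{C}_2$--$\mathcal{C}_4$. It tacitly assumes slots are nonempty, though: an empty slot at stage $k-1$ would have no successor. It also takes the $\mathcal{C}_4$ path for granted without unpacking the parent/depth pointers.

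You instead first prove the stronger fact that at each stage the slots are exactly the connected components of $H_k$, and then read off ancestry from $V_{k-1}\subseteq V_k$. This is essentially the content of the paper's next lemma (Lemma~\ref{lem::properties_encoder}, parts 2--4), including the same $\mathcal{C}_6$ counting argument for nonemptiness. Your route costs an extra appeal to $\mathcal{C}_6$. In exchange, ``exactly one successor'' holds literally for every slot, and slot connectivity is derived rigorously from the depth-decreasing parent chain.

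On your parenthetical worry about $\alpha$: it is enough to sum $\mathcal{C}_6$ over $b\le k$ only. Every vertex in a stage-$k$ slot has $a_v\le k$, so each nonempty slot's minimal birth is at most $k$. Hence the left side still counts all nonempty slots and the right side is $cc_k$. This is also how the paper itself applies $\mathcal{C}_6$.
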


\begin{proof}[Proof of Lemma~\ref{lem::ancestry}]
Suppose for contradiction that $\mathcal{C}_{7}$ does not hold, then there exists two vertices $v, w$ at slot $c$ at stage $k-1$ such that $v$ and $w$ belong in two different slots $d$ and $d'$ at stage $k$ (Here we implicitly used $\mathcal{C}_{2}$).\\

By $\mathcal{C}_{4}$, we know there is a sequence of edges connecting $v$ to $w$ and write $v \to v_1 \to ... \to v_n \to w$ the path. Since $v$ and $w$ belong in two different component, there must exist a pair of adjacent vertices in the path here that belong in two different slots.\\

Thus, we without loss reduce to the case where $v < w$ share an edge, so $M_{vw} = 1$. Now by $\mathcal{C}_{3}$, we know that at $k+1$, we must have
\[M_{vw} = 1 \leq 1 - |x_{k+1,cv} - x_{k+1,cw}|\]
for all $c \in [\operatorname{cc}_{k}]$. However, since $v$ and $w$ belong in two different slots, there is a slot $d$ such that $x_{k+1, dv} = 1$ and $x_{k+1, dw} = 0$, which implies
\[M_{vw} = 1 \leq 1 - |1 - 0| = 0, \]
which is a contradiction.
\end{proof}

\begin{definition}\label{def::matrix_graph_translation}
Let $\{M_{ij}\}_{i < j}$ be the data specifying a graph $H$, whose vertex set is still $V = [n]$. We write $M(k)$ to denote the collection of $M_{ij}$'s such that $\Phi_{M}(i), \Phi_{M}(j) \leq k$. In other words, $M(k)$ specifies the induced subgraph $H_k$ of $H$ of vertices whose value is $\leq k$.\\

On the other hand, given the additional data $x_{kcv}$ produced by the constraints. For each $k$, we can define $H_k^{\operatorname{cp}}$ as the subgraph of $H$ whose vertices are the active vertices $V_k \subseteq V$, an edge $e = (v, w) \in H_k^{\operatorname{cp}}$ if $v, w$ belongs to the same slot at stage $k$ and $M_{vw} = 1$.
\end{definition}

We also need the following lemma.
\begin{lemma}\label{lem::properties_encoder}
Following the notation in Definition~\ref{def::matrix_graph_translation}, the data of $M_{ij}$ and $x_{kcv}$ produced by constraints $\mathcal{C}_{1}, ..., \mathcal{C}_{6}$ satisfies the following:
\begin{enumerate}
    \item The active vertices at stage $k$ are all vertices $v$ with $\Phi_{M}(v) = a_v \leq k$.
    \item Every slot at stage $k$ has to have at least 1 vertex.
    \item The number of slots at stage $k$ is the number of components of $H_k$.
    \item The vertices in each slot $c$ at stage $k$ correspond to the vertices in each component of $H_k$.
\end{enumerate}
\end{lemma}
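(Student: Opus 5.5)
The four items are proved in order, each using the previous ones together with the constraint families $\mathcal{C}_1$–$\mathcal{C}_6$.

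\textbf{Item 1.} This is immediate from $\mathcal{C}_1$. The active set was defined as $V_k=\{i : a_i\le k\}$, and $\mathcal{C}_1$ forces $\Phi_M(i)=a_i$. Hence $V_k$ is exactly the vertex set of the induced subgraph $H_k$ of Definition~\ref{def::matrix_graph_translation}.

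\textbf{Item 2.} We use $\mathcal{C}_6$. Summing $\mathcal{C}_6$ over all birth values $b$ gives
\[
\sum_b \sum_c \mathbf{1}[e_{kcb}] = \sum_b \alpha(k,b) = |\{(b,d)\in\mathcal{P}_0 : b\le k<d\}| = cc_k .
\]
Here $\alpha(k,b)$ is nonzero only for $b\le k$, which is implicit in the indexing. For a fixed slot $c$, at most one $b$ can make $e_{kcb}$ true, namely the minimal filtration value occurring in the slot. So the left side counts nonempty slots, and no slot can contribute twice. With $cc_k$ slots in total and a count of exactly $cc_k$, every slot is nonempty.

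\textbf{Items 3 and 4.} These are proved together by identifying slots with components.
\begin{itemize}
\item By $\mathcal{C}_2$ the slots partition $V_k$.
\item By $\mathcal{C}_3$, any edge of $H_k$ has both endpoints in the same slot. So each component of $H_k$ lies inside a single slot, and the slot partition is coarser than the component partition.
\item By $\mathcal{C}_4$, each slot carries a spanning tree built from edges of $M$ between members of the slot. Both endpoints of each tree edge lie in $V_k$, so the tree consists of edges of $H_k$ and each slot is connected in $H_k$.
\end{itemize}
Together, each slot is a union of components and is itself connected, so each slot is exactly one component. This gives a bijection between slots and components, proving item 4, and item 3 follows from item 2 (no empty slots).

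The step I expect to need care is the spanning-tree argument in $\mathcal{C}_4$. We must check that parent pointers with strictly increasing depths, with exactly one parent for every non-root member, force every member to be connected to the root $r_{kcv}$. The argument: follow parent pointers backwards from any member. Depth strictly decreases along the way and is bounded below by $0$, so the walk terminates. It can only stop at a vertex with no parent, which by the constraint $\sum_u \pi_{kcuv}=x_{kcv}-r_{kcv}$ must be the root. We also need the root to exist, which is true because the slot is nonempty by item 2, so $r_{kcv}$ picks out its minimal-index vertex.

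Item 3 could alternatively be obtained by counting through $\mathcal{C}_5$, but that route is unnecessary.
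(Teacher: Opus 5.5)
Your proof is correct and follows the paper's argument. Item 2 comes from summing $\mathcal{C}_6$ over $b \le k$, with each slot contributing at most one $b$. Items 3--4 come from $\mathcal{C}_2$, $\mathcal{C}_3$ (edges never cross slots), $\mathcal{C}_4$ (slots are connected) and nonemptiness. You package these as a direct ``coarser partition with connected, nonempty blocks'' argument. The paper instead argues item 3 by contradiction in both directions and then builds the injection/bijection for item 4. Your explicit check that the parent-pointer/depth encoding really certifies connectivity is a useful addition that the paper takes for granted.

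Drop the closing aside, though. $\mathcal{C}_5$ only ties $cc_k$ to the prescribed $\mathrm{cy}_k$, not to the actual cycle rank of $H_k$, so it cannot yield item 3 on its own. The paper's $\mathcal{C}_3$ counterexample is a concrete instance: it satisfies $\mathcal{C}_5$ with two slots on the connected graph $C_6$.
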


\begin{proof}
    Throughout the proof, we implicitly use $\mathcal{C}_2$ to ensure that every vertex belongs in exactly 1 slot.\\
    
    For the first statement, we first note that $\Phi_{M}(v) = a_v$ is specified by $\mathcal{C}_1$, and the definition of $V_k$ (the active vertex set at stage $k$) specifies that they are examining vertices $v$ with $a_v \leq k$. In particular, we see that $V_k$ and $H_k$ shares the same vertex set.\\

    For the second statement. The number of slots in $cc_k$ is the number $N$ of pairs $(b, d)$ there are such that $b \leq k < d$ from the input PDs. Now we have the equality
    \[N = \sum_{b \leq k} \# \{(b,d)\ |\ d > k\} = \sum_{b \leq k} \sum_{c \in [cc_k]} \mathbf{1}[e_{kcb}],\]
    where the second equality follows from $\mathcal{C}_{6}$. For a fixed $c$ at stage $k$, clearly $e_{kcb}$ can only be $1$ for at most one choice of $b$. Thus, we see that
    \[\sum_{b \leq k} \sum_{c \in [cc_k]} \mathbf{1}[e_{kcb}] \leq |cc_k| = N\]
    with equality if and only if for each $c$, $e_{kcb} = 1$ for some $b$. The fact that $e_{kcb} = 1$ ensures the existence of a vertex in slot $c$ at stage $k$.\\

    For the third statement. Suppose the number of components in $H_k$ is less than $|cc_k|$. Since every slot has to have at least 1 vertex, there exists two vertices $v, w$ in a component of $H_k$ such that $v, w$ belongs to two different slots. By choosing a path if necessary as before, we can without loss pretend that $v, w$ are adjacent to each other. Since $v, w$ are in different slots, there is a slot $c$ such that $x_{kcv} = 1$ and $x_{kcw} = 0$. Then we have that by $\mathcal{C}_{3}$,
    \[M_{vw} = 1 \leq 1 - |x_{kcv} - x_{kcw}| = 1 - |1-0| = 0,\]
    which is a contradiction.\\

    Now suppose the number of components in $H_k$ is greater than $|cc_k|$. Since each component of $H_k$ has at least 1 vertex, it must be the case that two vertices in different components are assigned to the same slot, which is impossible by $\mathcal{C}_{4}$. This proves the third statement.\\

    For the fourth statement, by $\mathcal{C}_{4}$ and (2) we know the vertices in each slot $c$ at stage $k$ is connected and non-empty, so each slot must belong to some component in $H_k$. By $\mathcal{C}_3$ and similar argument as before, this assignment is an injection. By (3), we know this map is a bijection. Thus, we know that the induced subgraph on vertices of $c \in \operatorname{cc}_k$ is a subgraph of a component of $H_k$. The only ambiguity is if $H_k$ contains a vertex $v_+$ not in slot $c$. However, $\mathcal{C}_2$ requires $v_+$ to be in some slot, if it is in any slot other than $c$, the assignment is no longer an injection. This concludes the fourth statement.
\end{proof}

\textbf{Recall:} In Definition~\ref{def::filtration_scheme}, we write $H_{\bullet}^{\Phi}$ to denote the sequence of subgraphs specified by the function $\Phi_{H}$. 

Using Lemma~\ref{lem::ancestry} and Lemma~\ref{lem::properties_encoder}, we conclude that:
\begin{corollary}\label{cor::constraint_agrees_with_filtration}
$H_k = H^{\operatorname{cp}}_k$ for all $k$. Since the $H_k$'s form the subgraphs in the vertex based filtration function $\Phi_{M} = \Phi_{H}$, this shows that
\[\mathcal{P}(\emptyset \subseteq H_{0}^{\operatorname{cp}} \subseteq ... \subseteq H_n^{cp} = H) = \mathcal{P}(H_{\bullet}^{\Phi}). \] 
\end{corollary}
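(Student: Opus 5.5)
The plan is to prove the two claims of Corollary~\ref{cor::constraint_agrees_with_filtration} in turn: first the equality of graphs $H_k = H_k^{\operatorname{cp}}$ for each $k$, and then the equality of persistence diagrams, which will follow once the two filtrations are known to coincide.

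\textbf{Vertex sets.} Fix a stage $k$. By Lemma~\ref{lem::properties_encoder}(1), $V(H_k^{\operatorname{cp}}) = V_k = \{v : a_v \leq k\}$. By $\mathcal{C}_1$ we have $a_v = \Phi_M(v) = \Phi_H(v)$, so this set is exactly the vertex set of the induced subgraph $H_k$. Hence the two graphs have the same vertices.

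\textbf{Edge sets.} By Definition~\ref{def::matrix_graph_translation}, $H_k^{\operatorname{cp}}$ is a subgraph of $H_k$. For the converse, I take an edge $\{v,w\}$ of $H_k$, so $v,w \in V_k$ and $M_{vw}=1$. The edge places $v$ and $w$ in the same component of $H_k$. By Lemma~\ref{lem::properties_encoder}(4), the slots at stage $k$ correspond bijectively to the components of $H_k$ as vertex sets, so $v$ and $w$ lie in the same slot. The definition of $H_k^{\operatorname{cp}}$ then puts $\{v,w\}$ in $H_k^{\operatorname{cp}}$. One could instead apply $\mathcal{C}_3$ directly: it forbids $M_{vw}=1$ across distinct slots. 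Either way, $H_k = H_k^{\operatorname{cp}}$. Lemma~\ref{lem::ancestry} is not needed for this step. Its role is to guarantee that the slot structure is compatible with the inclusions $H_{k-1}^{\operatorname{cp}} \subseteq H_k^{\operatorname{cp}}$, and that is automatic once each $H_k^{\operatorname{cp}}$ is identified with the nested sequence $H_k$.

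\textbf{Persistence diagrams.} Since $\Phi$ is vertex-based and valued in $\Zbb_{\geq 0}$, the sequence $H^{\Phi}_\bullet$ is exactly $\emptyset \subseteq H_0 \subseteq \dots \subseteq H_K = H$, where $K$ is the largest filtration value. This uses Remark~\ref{rmk::filtration}: an edge enters at the maximum of its endpoint values, which is the same as saying each stage is the induced subgraph on the active vertices. For $k \geq K$ we have $H_k = H$, so the stages beyond $K$ are constant; they create no new features and so do not change the diagram. Because the two filtrations are equal stage by stage, their persistence diagrams coincide.

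The only delicate point is the appeal to Lemma~\ref{lem::properties_encoder}(4) in the edge step. It requires that slots equal components as vertex sets, not merely that each slot sits inside some component. Part (4) of that lemma gives exactly this, so no gap remains.
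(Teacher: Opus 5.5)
Your argument is correct and is essentially the paper's: the paper only states that the corollary follows from Lemma~\ref{lem::ancestry} and Lemma~\ref{lem::properties_encoder}, and you make this explicit using parts (1) and (4) of the latter. Your alternative direct appeal to $\mathcal{C}_3$ also works, provided you pair it with $\mathcal{C}_2$ so that both endpoints lie in some slot. You are also right that Lemma~\ref{lem::ancestry} is not needed, since the nestedness of the $H_k^{\operatorname{cp}}$ follows once $H_k^{\operatorname{cp}} = H_k$.
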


We now prove the soundness of the encoding in Theorem~\ref{thm:cp_correct}. (Note the notation change from $H$ to $H(\sigma)$ below)
\begin{theorem}[Soundness of Encoding]\label{thm::soundness}
Given a fixed $(\mathcal{P}_0, \mathcal{P}_1)$ from a hidden graph $G$. Let $\mathcal{C} = \{\mathcal{C}_1, ..., \mathcal{C}_6\}$ and $\mathcal{F}$ the set of assignment satisfying assignment to $\mathcal{C}$. For every $\sigma \in \mathcal{F}$, the graph $H(\sigma)$ specified has $\mathcal{P}(H(\sigma)_{\bullet}^{\Phi}) = (\mathcal{P}_0, \mathcal{P}_1)$.
\end{theorem}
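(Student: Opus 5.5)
By Corollary~\ref{cor::constraint_agrees_with_filtration}, $\mathcal{P}(H(\sigma)^{\Phi}_{\bullet})$ equals the persistence diagram of the slot-filtration $\emptyset \subseteq H^{\operatorname{cp}}_0 \subseteq \dots \subseteq H^{\operatorname{cp}}_K$. It therefore suffices to show that this diagram is $(\mathcal{P}_0,\mathcal{P}_1)$. The plan is to pin the diagram down through two families of counts, both of which can be read off the constraints. The first family is the number of components alive at stage $k$ whose oldest vertex was born at $b$. The second is the cumulative number of independent cycles born by stage $k$.

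First, \textbf{births in $\mathcal{P}_0$}. By Lemma~\ref{lem::properties_encoder}(1), $V(H_k)=V_k=\{i: a_i\le k\}$, and the $a_i$ were read off as the birth coordinates of $\mathcal{P}_0$. Hence the multiset of vertex birth times of $H(\sigma)^\Phi_\bullet$ equals the multiset of first coordinates of $\mathcal{P}_0$.

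Second, \textbf{deaths in $\mathcal{P}_0$}. Under the elder rule, a component of $H_k$ is represented by its oldest vertex, so its representative's birth is the minimum of $a_v$ over the component. By Lemma~\ref{lem::properties_encoder}(4), the components of $H_k$ are exactly the slots at stage $k$. The indicator $e_{kcb}$ says precisely that slot $c$ has minimum birth $b$. So by $\mathcal{C}_6$, for every $k,b$ the number of pairs $(b,d)$ in $\mathcal{P}(H(\sigma)^\Phi_\bullet)$ with $d>k$ equals $\alpha(k,b)$, the same count as in $\mathcal{P}_0$. The numbers $\#\{(b,d): d>k\}$ for all $k$ determine the multiset of deaths attached to births at $b$ (take differences in $k$, with $d=\infty$ counted at $k=K$). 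Hence the two $0$-dimensional diagrams agree.

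Third, \textbf{$\mathcal{P}_1$}. For a graph, the number of independent cycles present at stage $k$ equals $\dim H_1(H_k)=|E(H_k)|-|V(H_k)|+\#\pi_0(H_k)$. Lemma~\ref{lem::properties_encoder}(3) gives $\#\pi_0(H_k)=cc_k$, and $E(H_k)$ consists of the $M_{ij}=1$ with $i,j\in V_k$. So $\mathcal{C}_5$ says the cumulative number of cycle births up to stage $k$ is $\mathrm{cy}_k$ for every $k$. Differencing in $k$ recovers the multiset of $(b,\infty)$, which is $\mathcal{P}_1$.

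The main obstacle is the second step. One has to justify that the elder-rule pairing of Definition~\ref{def::persistence_pairs} really records a death of the younger representative exactly when its component ceases to be a separate component with that minimum birth. One also has to handle ties at equal birth times. I would argue via Lemma~\ref{lem::ancestry}: each slot at stage $k-1$ flows into a unique slot at stage $k$, so the surviving representative of a merged slot is its minimal-birth vertex. The number of components with minimum birth $b$ alive at $k$ then equals the number of $\mathcal{P}_0$-pairs born at $b$ that are still alive at $k$, with ties broken arbitrarily and without affecting the multiset.
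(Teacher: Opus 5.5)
Your proof is correct and follows the paper's skeleton: reduce to the slot filtration via Corollary~\ref{cor::constraint_agrees_with_filtration}, obtain $\mathcal{P}_1$ from $\mathcal{C}_5$ plus the Euler characteristic, and obtain the off-diagonal part of $\mathcal{P}_0$ by differencing the survival counts of $\mathcal{C}_6$ in $k$. You genuinely diverge only on the diagonal pairs $(b,b)$. The paper handles these separately: it writes $N(d,d)$ as the total number of vertex deaths at stage $d$ minus $\sum_{b<d}N(b,d)$. It then matches the total death count using the stage-wise vertex, edge, component and cycle counts, compared against $G^{\Phi}_\bullet$. In your differencing, the pairs $(b,b)$ come from the step from $k=b-1$ to $k=b$. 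The value at $k=b-1$ is the total number of pairs born at $b$, and $\mathcal{C}_6$ does \emph{not} supply it. That constraint is only meaningful for $b\le k$: for $k<b$ its left side is $0$ while $\alpha(k,b)$ is the full birth count. So your sentence ``by $\mathcal{C}_6$, for every $k,b$'' should be split into two cases. For $k\ge b$ the count follows from $\mathcal{C}_6$ and the elder rule; for $k<b$ it follows from your birth-multiset step. With that made explicit, your route is more uniform. On top of the slot--component identification, the $0$-dimensional diagram needs only the birth multiset and $\mathcal{C}_6$, with no appeal to $\mathcal{C}_5$ or to the hidden $G$; the paper instead leans on edge and cycle bookkeeping for the diagonal. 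Finally, the elder-rule fact you flag as the main obstacle is a standard invariant. It says the representatives alive after stage $k$ are one per component of $H_k$, each of minimal birth in its component, and ties do not affect the count. It follows by induction on edge insertions in $H^{\Phi}_\bullet$ itself, so the ancestry lemma is not needed; the paper uses this fact without comment.
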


\begin{proof}
By Corollary~\ref{cor::constraint_agrees_with_filtration}, we have that
\[\mathcal{P}(H(\sigma)_{\bullet}^{\Phi}) = \mathcal{P}(H(\sigma)_{\bullet}^{cp} \coloneqq \emptyset \subseteq H(\sigma)_{0}^{\operatorname{cp}} \subseteq ... \subseteq H(\sigma)_n^{cp} = H(\sigma)). \]
It suffices to prove the right hand side agrees with $(\mathcal{P}_0, \mathcal{P}_1)$.\\

For the ease of notation, we will write $\mathcal{P}(\emptyset \subseteq H(\sigma)_{0}^{\operatorname{cp}} \subseteq ... \subseteq H(\sigma)_n^{cp} = H(\sigma)) = (\mathcal{P}_0^{cn}, \mathcal{P}_1^{cn})$.\\

\textbf{For 1-dimensional PDs:} $\mathcal{C}_{5}$ ensures that the number of cycles born at each stage $k$ for $H(\sigma)_{\bullet}^{cp}$ is the same as the number of cycles born at each stage $k$ in $\mathcal{P}_1$.\\

Note that this also implies $H(\sigma)_k^{cp}$ and $G^{\Phi}_k$ has the same number of edges. As by construction they already have the same number of vertices and components.

\textbf{For 0-dimensional PDs:} It suffices to prove the following: For each fixed $(b, d)$, the number $N(b, d)$ of pairs of the form $(b, d)$ in $\mathcal{P}_0$ and $\mathcal{P}_0^{cn}$ are the same.\\

For a fixed $b < d$, $N(b, d)$ can be computed as
\[N(b, d) = \# \{\text{components born at $b$ and still alive at $d-1$}\} - \# \{\text{components born at $b$ and still alive at $d$}\}.\]
This quantity is the same for both sides by $\mathcal{C}_6$.\\

For $b = d$, $N(d, d)$ can be computed as
\[N(d, d) = \# \{\text{vertex deaths at stage $d$}\} - \sum_{b < d} N(b, d).\]
Since $H(\sigma)_{k}^{cp}$ and $G_{k}^{\Phi}$ has the same number of edges, vertices, components, and cycles for all $k$, we know that both sides have the same number $N$ of vertex deaths that occurred at stage $d$. We established above that the quantity $N(b, d)$ is the same for both sides when $b < d$. Thus, it follows that the quantity $N(d, d)$ is the same for both sides. This concludes the proof.
\end{proof}

Now we will prove completeness of the encoder.
\begin{theorem}[Completeness of Encoding]\label{thm::completeness}
Let $K$ be a graph such that $\mathcal{P}(K_{\bullet}^{\Phi}) = (\mathcal{P}_0, \mathcal{P}_1)$, then there exists an assignment $\sigma \in \mathcal{F}$ such that $H(\sigma) = K$. 
\end{theorem}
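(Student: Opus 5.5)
Given $K$ with $\mathcal{P}(K^{\Phi}_{\bullet}) = (\mathcal{P}_0,\mathcal{P}_1)$, we build the assignment $\sigma$ explicitly and check each constraint family.

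\textbf{Vertex labelling.} First fix the labelling: $\mathcal{P}_0(K^\Phi_\bullet)$ has one pair per vertex of $K$ with birth time $\Phi_K(v)$. The encoding assigns the labels $i\in[n]$ to the multiset $\mathcal{P}_0$ by an arbitrary bijection, so only the values $a_i$ matter. Since the multisets of births agree, we can choose a bijection $V(K)\to[n]$ with $\Phi_K(v)=a_{i(v)}$. We identify $K$ with a graph on $[n]$ via this bijection. This is harmless: $\Phi$ is permutation equivariant, so $\Phi_M=\Phi_K$ under the relabelling and the PDs are unchanged. Then set $\sigma(M_{ij})=1$ iff $\{i,j\}\in E(K)$, which gives $H(\sigma)=K$ and $\mathcal{C}_1$ immediately.

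\textbf{Slots.} Next define the slot variables. At stage $k$ the active set $V_k$ is exactly $V(K_k)$, and the number of connected components of $K_k$ equals the number of $0$-dimensional classes alive at $k$, namely $cc_k$ (read off from $\mathcal{P}_0(K^\Phi_\bullet)=\mathcal{P}_0$). Enumerate the components of $K_k$ as $0,\dots,cc_k-1$ in any order and set $x_{kcv}=1$ iff $v$ lies in component $c$. Then:
\begin{itemize}
\item $\mathcal{C}_2$ holds because the components partition $V_k$.
\item $\mathcal{C}_3$ holds because $K_k$ is an induced subgraph, so any edge with both endpoints in $V_k$ lies in $K_k$ and hence inside one component.
\item $\mathcal{C}_4$ holds as follows. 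For each component take a BFS spanning tree rooted at its smallest-index vertex, which is the vertex with $r_{kcv}=1$. Set $\pi$ to the parent pointers and $\ell$ to the BFS depths, both of which lie in $\{0,\dots,n\}$. The tree edges are edges of $K$ inside the slot, and each non-root vertex has exactly one parent.
\item $\mathcal{C}_5$ holds because the Euler characteristic formula $|E(K_k)|-|V_k|+\beta_0(K_k)=\beta_1(K_k)$ applies. For a vertex-based filtration, $\beta_1(K_k)$ equals the number of cycle classes born by time $k$, since $1$-classes never die in graphs. That count is $\mathrm{cy}_k$.
\end{itemize}

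\textbf{Birth-time survival.} The remaining constraint, and the main point requiring care, is $\mathcal{C}_6$. Under the elder rule, each component of $K_k$ is represented by its oldest vertex, i.e. one of minimal filtration value. Its $0$-class was born at $b=\min_{v\in\text{comp}}\Phi_K(v)$ and is alive at $k$. Conversely, every pair $(b,d)$ with $b\le k<d$ corresponds to exactly one component of $K_k$ whose minimum value is $b$. This correspondence needs a short argument that the surviving representative always has minimal birth in its component, which holds because merges kill the younger class. Hence the number of components with minimum value $b$, which is $\sum_c\mathbf{1}[e_{kcb}]$, equals $|\{(b,d)\in\mathcal{P}_0(K^\Phi_\bullet): b\le k<d\}|=\alpha(k,b)$. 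Note that $\alpha(k,b)$ must be read with $b\le k$. For $b>k$ both sides vanish, since no slot contains a vertex born after $k$. Finally, $r$ and $e$ are defined variables and take their forced values.

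All six families are therefore satisfied and $H(\sigma)=K$, which proves completeness.
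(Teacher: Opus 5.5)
Your proposal is correct and follows the paper's proof essentially step for step. The paper likewise identifies $V(K)$ with $[n]$ so that $a_i=\Phi_K(i)$ (it chooses the arbitrary bijection rather than relabelling $K$), takes slots to be the components of $K_k$ with spanning trees rooted at the minimum-index vertex, and verifies $\mathcal{C}_5$ via the Euler characteristic and $\mathcal{C}_6$ via the fact that a component's birth time is its minimum filtration value; your remarks that $\mathcal{C}_3$ and the edge count in $\mathcal{C}_5$ rely on $K_k$ being an induced subgraph, and that $\alpha(k,b)$ must be read with $b\le k$, fill in details the paper leaves implicit.
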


\begin{proof}
Fix an ordering of the vertices of $K$ in the list $[n]$ and an ordering of the components of $K_{k}^{\Phi}$ at each stage $k$. We give the following assignment to the variables:
\begin{enumerate}
    \item $M_{ij} = 1$ if $(i, j) \in E(K)$ and $i < j$. $M_{ij} = 0$ if neither $(i, j)$ or $(j, i)$ is in $E(K)$. 
    \item $x_{kcv} = 1$ if vertex $v$ is in component $c$ at $K_k$ and $0$ otherwise. We can fill in all variables because $K_{\bullet}^{\Phi}$ and $\mathcal{P}_0$ have the same number of components at each stage.
    \item We create $\pi_{kcij}$ and $\ell_{kcv}$ by choosing any spanning tree of each component $c$ at $K_k$ coming out of the root node with minimum index.
    \item $r_{kcv}$ and $e_{kcb}$ are dependent on the previous variables.
\end{enumerate}
For the precomputed quantity, we have the choice of making an arbitrary bijection, here we associated each $i$ with its value $\Phi_{K}(i)$. We can do this because the assignment corresponds exactly birth times arising in $\mathcal{P}_0$ by the hypothesis.\\

We now check that they satisfy the constraints $\mathcal{C}_1, ..., \mathcal{C}_6$ one by one.
\begin{enumerate}
    \item For $\mathcal{C}_1$: This is automatic from how we chose the bijection.
    \item For $\mathcal{C}_2$: A vertex $v$ in $K_k$ cannot belong to more than two components, so it follows that $x_{kcv}$ is $1$ for exactly one choice of $c$ at a fixed $(k, v)$. Thus it follows that $\sum_{c \in [cc_k]} x_{kcv} = 1$.
    \item For $\mathcal{C}_3$: If $u$ and $v$ does not share an edge, then the inequality is always true. Thus, we consider when $u, v$ shares an edge. By construction here $u, v$ has to be in the same component, so there is a $c \in [cc_k]$ such that $x_{kcu} = x_{kcv} = 1$. Otherwise, for any $c' \neq c$ in $[cc_k]$, we have that $x_{kc'u} = x_{kc'v} = 0$. In either cases we have that $1 \leq 1$.
    \item For $\mathcal{C}_4$: Clearly the spanning tree made at each component satisfies the constraints given.
    \item For $\mathcal{C}_5$: This follows from a standard fact in algebraic topology (see for instance, Theorem 2.44 of \cite{hatcher2002algebraic}) that for a graph $A$,
    \[\# V(A) - \# E(A) = \# \text{components of } A - \# \text{independent cycles of } A.\]
    This is called the invariance of the Euler characteristics. In particular, $\mathcal{C}_{5}$ is satisfied by rewriting the equation above to solve for the cycle counts.
    \item For $\mathcal{C}_6$: The vertex with minimum filtration value in a component is equal to the birth time of said component. Thus, the count holds for $K^{\Phi}_{\bullet}$ as well.
\end{enumerate}
\end{proof}

Finally, we are able to prove Theorem~\ref{thm:cp_correct}. To recall, this is the statement that:
\begin{theoremm}
Let $\mathcal{C} = \{\mathcal{C}_1, \dots, \mathcal{C}_6\}$, $\mathcal{F}$
the set of satisfying assignments to $\mathcal{C}$, and
$\mathcal{H}(\mathcal{P}_0, \mathcal{P}_1)$ the set of graphs $H$ with
$\mathcal{P}(H_\bullet^{\Phi}) = (\mathcal{P}_0, \mathcal{P}_1)$.
Define $H(\sigma)$ from $\sigma \in \mathcal{F}$ by
$E(H(\sigma)) = \{\{i,j\} : \sigma(M_{ij}) = 1\}$.
Then:
\begin{enumerate}[noitemsep, topsep=0pt,leftmargin=*]
\item \textbf{Soundness:} $H(\sigma) \in \mathcal{H}(\mathcal{P}_0,\mathcal{P}_1)$
  for every $\sigma \in \mathcal{F}$.
\item \textbf{Completeness:} every $H \in \mathcal{H}(\mathcal{P}_0,\mathcal{P}_1)$
  admits at least one $\sigma \in \mathcal{F}$ with $H(\sigma) = H$.
\item \textbf{Minimality for Degree:} When $\Phi = \operatorname{deg}$, for each $i \in \{1, \dots, 6\}$, there exist a
  target $(\mathcal{P}_0, \mathcal{P}_1)$ and an assignment $\sigma$ that
  satisfies $\mathcal{C} \setminus \{\mathcal{C}_i\}$ but with
  $H(\sigma) \notin \mathcal{H}(\mathcal{P}_0, \mathcal{P}_1)$.
\end{enumerate}
\end{theoremm}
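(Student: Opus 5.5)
Parts (1) and (2) are already established: soundness is exactly Theorem~\ref{thm::soundness} and completeness is exactly Theorem~\ref{thm::completeness}. So the proof of Theorem~\ref{thm:cp_correct} reduces to citing these two results (with $K = H$ in the completeness statement) and then proving (3), minimality for $\Phi = \operatorname{deg}$. For (3) the plan is to build, for each $i \in \{1,\dots,6\}$, one small explicit target $(\mathcal{P}_0,\mathcal{P}_1)$, namely the degree-PD of a tiny hidden graph $G$. For each target we exhibit an assignment $\sigma$ satisfying every family except $\mathcal{C}_i$ whose graph $H(\sigma)$ has a different degree-PD. In each case we verify membership directly and check that $\mathcal{P}(H(\sigma)^{\deg}_\bullet) \neq (\mathcal{P}_0,\mathcal{P}_1)$ by computing the diagram.

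\emph{Dropping $\mathcal{C}_1^{\deg}$.} Take $G$ a path on three vertices, so $a = (1,2,1)$. Choose $H$ to be the path with the degree-$2$ label on an endpoint, and choose slots realizing the prescribed stage structure. Here the actual degree filtration of $H$ differs from the labels $a_i$, so the PD differs; one checks that the slots and counts in $\mathcal{C}_2$–$\mathcal{C}_6$ are computed relative to $a$ and still hold.

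\emph{Dropping $\mathcal{C}_2$.} Put a vertex in no slot, or in two slots. Then an extra component can hide in $H$ while the slot counts remain correct.

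\emph{Dropping $\mathcal{C}_3$.} Allow an edge between two slots. Then $H$ has fewer components than $cc_k$ and the counts no longer match.

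\emph{Dropping $\mathcal{C}_4$.} Let a slot be disconnected, for example two edges $K_2 \sqcup K_2$ placed in one slot. The component count of $H$ then exceeds $cc_k$.

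\emph{Dropping $\mathcal{C}_5$.} Use a degree-respecting graph with the wrong edge count at an intermediate stage. For instance, take a triangle versus a star-like configuration where degrees agree only at the final stage; the cycle births then shift.

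\emph{Dropping $\mathcal{C}_6$.} Take two graphs with identical degree sequence and identical per-stage vertex, edge, and component counts, but in which a component dies with a different birth label. Figure~\ref{fig:intro-ex}-style pairs, or a small case with birth pairs $(1,\cdot)$ versus $(2,\cdot)$ swapped, should work.

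The main obstacle is the $\mathcal{C}_1$ and $\mathcal{C}_6$ cases. For $\mathcal{C}_1$, the remaining constraints already pin the edge counts per stage, so the example must move degree labels while preserving all per-stage vertex, edge, component and survival counts computed from $a$. For $\mathcal{C}_6$, every Euler and component count must agree while the birth-death pairing differs. For these two cases I would first search by hand over graphs on at most five vertices; failing that, I would run a small exhaustive computer search, as suggested by the appendix counterexamples in Figure~\ref{fig::counter_example}. Each found example is then checked against the five remaining constraints explicitly.
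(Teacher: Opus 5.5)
Your treatment of (1) and (2) matches the paper exactly. Your plan for (3), one explicit target and witness per dropped family, is also the paper's route (Appendix~\ref{sec:appendix-counterexamples}). The gap is that (3) is a pure existence claim, so its proof \emph{is} six verified examples, and you supply none. Worse, the concrete candidates you name fail because they ignore what $\mathcal{C}_5$ forces.

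The key bookkeeping is this. $\mathcal{C}_5$ fixes $|E_k|-|V_k|+cc_k=\mathrm{cy}_k$, while the Euler relation gives $|E_k|-|V_k|+\beta_0(H_k)=\beta_1(H_k)$. Hence every surviving assignment satisfies $\beta_0(H_k)-cc_k=\beta_1(H_k)-\mathrm{cy}_k$: a wrong component count must be paid for by an equally wrong cycle count.

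\textbf{$\mathcal{C}_1$.} Take your $G=P_3$ with $a=(1,2,1)$. $\mathcal{C}_5$ at $k=1$ gives $M_{02}=|V_1|-cc_1+\mathrm{cy}_1=0$, and at $k=2$ it gives exactly two edges, so $H=G$ is forced. Your path with the label-$2$ vertex at an end makes the two label-$1$ vertices adjacent, which violates $\mathcal{C}_5$ at $k=1$. The fix is a target with a single stage, such as the paper's $G=2K_2$ with all labels $1$. There $\mathcal{C}_2$--$\mathcal{C}_6$ only see ``two components, two edges, no cycles,'' and $E_H=\{(0,1),(0,3)\}$ passes.

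\textbf{$\mathcal{C}_4$.} The surviving $\mathcal{C}_2,\mathcal{C}_3,\mathcal{C}_6$ still make all $cc_k$ slots nonempty unions of components. So a disconnected slot forces $\beta_0(H_k)>cc_k$ and hence $\beta_1(H_k)>\mathrm{cy}_k$. The acyclic $K_2\sqcup K_2$ cannot achieve this on its own. Four label-$1$ vertices force the target $2K_2$ with $cc_1=2=\alpha(1,1)$, and a slot holding both edges leaves the other slot empty, violating $\mathcal{C}_6$. The paper's $G=C_6$, $H=2K_3$ pays for the extra component with an extra triangle.

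\textbf{$\mathcal{C}_2$ and $\mathcal{C}_3$.} The same bookkeeping is why the paper's $\mathcal{C}_2$ example hides a triangle ($P_6$ vs.\ $P_3\sqcup K_3$). It is also why its $\mathcal{C}_3$ example needs a cyclic target ($2K_3$ vs.\ $C_6$). Your sketches for these two cases do not say what pays for the component change.

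\textbf{$\mathcal{C}_5$.} Your suggestion cannot be repaired as stated. $\mathcal{C}_1$ fixes the degrees of the final graph, so ``degrees agree only at the final stage'' has no meaning. A triangle and a star also do not share a degree sequence, since $K_3$ is the only graph with degrees $(2,2,2)$. With $\mathcal{C}_1$--$\mathcal{C}_4,\mathcal{C}_6$ in force, the slots are exactly the components with the prescribed survival counts. Consequently $\mathcal{P}_0$ matches automatically, and the final-stage cycle count is fixed by the degree sequence. You therefore need two graphs with the same degree sequence and the same per-stage component data but different edge counts at some intermediate stage. This, not $\mathcal{C}_1$, is the hard case. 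The paper's search over named and random graphs on at most $7$ vertices found none, and it uses an $\mathrm{ER}(8,0.4)$ graph.

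\textbf{$\mathcal{C}_6$.} The pairs of Figure~\ref{fig:intro-ex} are degree-PD-equivalent by design. They therefore give $H(\sigma)\in\mathcal{H}(\mathcal{P}_0,\mathcal{P}_1)$, the opposite of what you need. The paper instead uses a $7$-vertex graph whose $|V_k|,|E_k|,cc_k,\mathrm{cy}_k$ all agree with the target but whose $\alpha(3,2)$ differs. An easier route also exists. Without $\mathcal{C}_6$ nothing forces slots to be nonempty, and $\mathcal{C}_4$ as written is vacuous on an empty slot. So for the target of $2K_3$, the graph $H=C_6$ placed in one slot, with the second slot empty, satisfies $\mathcal{C}_1$--$\mathcal{C}_5$.
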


\begin{proof}[Proof of Theorem~\ref{thm:cp_correct}]
    Soundness is proven in Theorem~\ref{thm::soundness}, and completeness is proven in Theorem~\ref{thm::completeness}. For minimality of degree, we removed each of the six constraints and generated an assignment that does not satisfy the removed constraints. The counter-examples are written in Appendix~\ref{sec:appendix-counterexamples} and explained there.
\end{proof}

\newpage
\section{Per-Constraint Counterexamples for Degree-Based PD}
\label{sec:appendix-counterexamples}

For each necessary constraint $\mathcal{C}_i$, we exhibit the a target
$G$ we found in our search where, with $\mathcal{C}_i$ disabled, the constraint solver
returns a graph $H$ with $\mathrm{PH}(H) \neq \mathrm{PH}(G)$. Throughout this section, we will always use degree-based filtration and consider degree-based persistence diagrams.

Counterexamples were found by
enumerating up to $80$ distinct CP solutions per (target, dropped-constraint)
pair across all named graphs on $|V| \leq 6$ and many random samples for
$|V| \leq 9$. Figure~\ref{fig::counter_example} displays the target graph and the counterexample produced.

In this section, we use $(b, d)^n$ to indicate the tuple $(b, d)$ has multiplicity $n$. 

\begin{remark}
    Before we proceed, we make a comment on how the $\mathcal{C}_1$ constraint is verified. Recall that in the actual generation process, the underlying graph $G$ is hidden from us, so we only have access to the persistence tuples. Therefore, $\mathcal{C}_1$ is satisfied as long as one can match the output $\{\Phi_{H}(v), v \in H\}$ to the multi-set of birth times that we could read off from $\mathcal{P}_0$. In particular, for degree, this means that the degree of the literal index $i$ in $H$ need not be the degree of the literal index $i$ in $G$. Of course, after reordering if necessary, we can make this true.
\end{remark}

\subsection{\texorpdfstring{$\mathcal{C}_1$}{C1} (degree)}
\textbf{Target.} $G = 2K_2$: $V = \{0,1,2,3\}$, $E_G = \{(0,1),(2,3)\}$.
Degrees $(1,1,1,1)$. $\mathcal{P}_0(G) = \{(1,1)^2,(1,\infty)^2\}$,
$\mathcal{P}_1(G) = \emptyset$.

\textbf{Counterexample.} $H$: $E_H = \{(0,1),(0,3)\}$.

\textbf{Witness.}
\begin{align*}
& M_{01} = M_{03} = 1, \quad M_{ij} = 0 \text{ otherwise} \\
& x_{1,0,0} = x_{1,0,1} = x_{1,0,3} = 1, \quad x_{1,1,2} = 1 \\
& \pi_{1,0,0,1} = \pi_{1,0,0,3} = 1, \quad
  \ell_{1,0,0} = 0,\ \ell_{1,0,1} = \ell_{1,0,3} = 1.
\end{align*}

\textbf{Verification.}
\begin{itemize}
    \item $\mathcal{C}_2$: each vertex in one slot.
    \item $\mathcal{C}_3$: edges $(0,1),(0,3)$ within slot 0.
    \item $\mathcal{C}_4$: slot 0 spans the star $K_{1,2}$.
    \item $\mathcal{C}_5$: $2 - 4 + 2 = 0 = \mathrm{cy}_1$.
    \item $\mathcal{C}_6$: $\sum_c \mathbf{1}[e_{1,c,1}] = 2 = \alpha(1,1)$.
\end{itemize}

\textbf{Why $\mathcal{C}_1$ fails.}
$\deg_H(0) = 2$, $\deg_H(2) = 0$.

\textbf{PH Comparison.} $\mathrm{PH}(H) = (\{(0,\infty),(1,2),(1,\infty),(2,2)\}, \emptyset)
\neq \mathrm{PH}(G)$.

\subsection{\texorpdfstring{$\mathcal{C}_2$}{C2} (partition)}
\textbf{Target.} $G = P_6$: $E_G = \{(0,1),(1,2),(2,3),(3,4),(4,5)\}$,
degrees $(1,2,2,2,2,1)$.
$\mathcal{P}_0(G) = \{(1,2),(1,\infty),(2,2)^4\}$,
$\mathcal{P}_1(G) = \emptyset$.
Decoded: $cc_1 = 2$, $cc_2 = 1$, $\alpha(1,1)=2$, $\alpha(2,1)=1$,
$\alpha(2,2)=0$.

\textbf{Counterexample.} $H = P_3 \sqcup K_3$:
$E_H = \{(0,5),(1,5),(2,3),(2,4),(3,4)\}$. This is the disjoint union of the path graph $0\text{-}5\text{-}1$ and a triangle on $\{2,3,4\}$.

\textbf{Witness (relaxed partition; vertices $\{2,3,4\}$ assigned to no slot at $k=2$).}
\begin{align*}
& M_{05}=M_{15}=M_{23}=M_{24}=M_{34}=1 \\
& x_{1,0,0}=1,\ x_{1,1,1}=1 \\
& x_{2,0,0}=x_{2,0,1}=x_{2,0,5}=1, \quad x_{2,0,2}=x_{2,0,3}=x_{2,0,4}=0 \\
& \pi_{2,0,0,5} = \pi_{2,0,5,1} = 1, \quad \ell_{2,0,0}=0,\ \ell_{2,0,5}=1,\ \ell_{2,0,1}=2.
\end{align*}

\textbf{Verification.}
\begin{itemize}
\item $\mathcal{C}_1$: the multi-set of degrees agree.
\item $\mathcal{C}_3$: edges $(0,5),(1,5)$ in slot 0; $(2,3),(2,4),(3,4)$ outside any slot.
\item $\mathcal{C}_4$: slot 0 spans path $0\text{-}5\text{-}1$.

\item $\mathcal{C}_5$: $|E_2|-|V_2|+cc_2 = 5-6+1 = 0 = \mathrm{cy}_2$.
\item $\mathcal{C}_6$: slot 0 elder-of-1 (contains $0,1$, no younger); not elder-of-2
(contains 5 but also younger). Sums match $\alpha(2,1)=1$, $\alpha(2,2)=0$.
\end{itemize}

The up-shot here is that since vertices $2, 3, 4$ are not registered in the component, the pass for $\mathcal{C}_{5}$ and $\mathcal{C}_6$ were only conducted on the path graph $0-5-1$. Thus, the assignment was satisfied.

\textbf{Why $\mathcal{C}_2$ fails.}
$\sum_c x_{2,c,2} = 0 \neq 1$, similarly for vertices 3 and 4.

\textbf{PH Comparison.} $\mathrm{PH}(H) = (\{(1,2),(1,\infty),(2,2)^3,(2,\infty)\},\{(2,\infty)\})
\neq \mathrm{PH}(G)$.

\subsection{\texorpdfstring{$\mathcal{C}_3$}{C3} (edge-slot consistency)}
\textbf{Target.} $G = 2K_3$:
$E_G = \{(0,1),(0,2),(1,2),(3,4),(3,5),(4,5)\}$, all degrees 2.
$\mathcal{P}_0(G) = \{(2,2)^4,(2,\infty)^2\}$, $\mathcal{P}_1(G) = \{(2,\infty)^2\}$.

\textbf{Counterexample.} $H \cong C_6$:
$E_H = \{(0,1),(0,2),(1,4),(2,3),(3,5),(4,5)\}$.

\textbf{Witness (partition respected, cross-slot edges allowed).}
\begin{align*}
& \text{slot 0} = \{0,1,2\}: x_{2,0,0}=x_{2,0,1}=x_{2,0,2}=1 \\
& \text{slot 1} = \{3,4,5\}: x_{2,1,3}=x_{2,1,4}=x_{2,1,5}=1 \\
& \pi_{2,0,0,1}=\pi_{2,0,0,2}=1, \quad \ell_{2,0,0}=0,\ \ell_{2,0,1}=\ell_{2,0,2}=1 \\
& \pi_{2,1,3,5}=\pi_{2,1,5,4}=1, \quad \ell_{2,1,3}=0,\ \ell_{2,1,5}=1,\ \ell_{2,1,4}=2.
\end{align*}

\textbf{Verification.}
\begin{itemize}
\item $\mathcal{C}_1$: the multi-set of degrees agree.
\item $\mathcal{C}_2$: clear.
\item $\mathcal{C}_4$: each slot uses 2 in-slot edges to span a path.
\item $\mathcal{C}_5$: $6-6+2 = 2 = \mathrm{cy}_2$.
\item $\mathcal{C}_6$: each slot elder-of-2; sum $= 2 = \alpha(2,2)$.
\end{itemize}

The up-shot here is that topologically the graph constructed has one component, but slot-wise there are still two slots going on. The calculations for $\mathcal{C}_5$ and $\mathcal{C}_6$ are conducted using slots rather than components, hence the assignment was satisfied.\\

\textbf{Why $\mathcal{C}_3$ fails.}
Edges $(1,4)$ and $(2,3)$ have endpoints in different slots.

\textbf{PH Comparison.} $\mathrm{PH}(H = C_6) = (\{(2,2)^5,(2,\infty)\},\{(2,\infty)\})
\neq \mathrm{PH}(G)$.

\subsection{\texorpdfstring{$\mathcal{C}_4$}{C4} (slot connectivity)}
\textbf{Target.} $G = C_6$:
$E_G = \{(0,1),(0,5),(1,2),(2,3),(3,4),(4,5)\}$.
$\mathcal{P}_0(G) = \{(2,2)^5,(2,\infty)\}$, $\mathcal{P}_1(G) = \{(2,\infty)\}$.

\textbf{Counterexample.} $H = 2K_3$:
$E_H = \{(0,3),(0,4),(1,2),(1,5),(2,5),(3,4)\}$.

\textbf{Witness (single slot covering disconnected graph).}
\[
M \text{ encodes } H, \quad x_{2,0,v}=1 \ \forall v.
\]
Spanning-tree variables $\pi, \ell$ are unconstrained (assign all zero).

\textbf{Verification.}
\begin{itemize}
\item $\mathcal{C}_1$: the multi-set of degrees agree.
\item $\mathcal{C}_2$: every vertex is in slot 0.
\item $\mathcal{C}_3$: all edges within slot 0.
\item $\mathcal{C}_6$: $6-6+1 = 1 = \mathrm{cy}_2$.
\item $\mathcal{C}_7$: slot 0 elder-of-2; sum $= 1 = \alpha(2,2)$.
\end{itemize}

\textbf{Why $\mathcal{C}_4$ fails.}
Slot 0's induced subgraph in $H$ is two disjoint triangles (so the subgraph is disconnected). No assignment of $\pi, \ell$ satisfies the spanning-tree depth constraint.

\textbf{PH Comparison.} $\mathrm{PH}(H = 2K_3) = (\{(2,2)^4,(2,\infty)^2\},\{(2,\infty)^2\})
\neq \mathrm{PH}(G)$.\\

Before we move on to $\mathcal{C}_{5}$ and $\mathcal{C}_6$.

\subsection{\texorpdfstring{$\mathcal{C}_5$}{C5} (Euler characteristics)}
We searched all named graphs and 200+ Erdős–Rényi (ER) random graph samples on $|V| \leq 7$ without
finding a graph-level counterexample. The smallest in our search appears at
$|V| = 8$.

\textbf{Target.} $G = \mathrm{ER}(8, 0.4)$ with seed $5$:
\[
E_G = \{(0,7),(1,6),(2,3),(2,6),(2,7),(3,4),(3,7),(4,6),(5,6),(5,7)\}.
\]
Degrees are $(1,1,4,3,2,2,4,4)$.
$\mathcal{P}_0(G) = \{(1,4),(1,\infty),(2,4)^2,(3,3)^2,(4,4)^2\}$,
$\mathcal{P}_1(G) = \{(4,\infty)^3\}$.

\textbf{Counterexample.} $H$:
\[
E_H = \{(0,6),(1,6),(2,4),(2,5),(3,6),(3,7),(4,5),(4,7),(5,7),(6,7)\}.
\]

\textbf{Witness.} The graph $H$ and its associated degree-based filtration specifies the $M_{ij}$'s and $x_{kcv}$'s. The spanning tree variables are done at each component. The following table records the progression of relevant data in $G$ and $H$ as the stages $k$ progress.
\begin{center}
\begin{tabular}{c|cccccc}
$k$ & $|V_k|$ & $|E_k(G)|$ & $|E_k(H)|$ & $cc_k$ & $\mathrm{cy}_k(G)$ & $\mathrm{cy}_k(H)$ \\
\hline
1 & 2 & 0 & 0 & 2 & 0 & 0 \\
2 & 4 & 0 & 0 & 4 & 0 & 0 \\
\textbf{3} & \textbf{6} & \textbf{2} & \textbf{3} & \textbf{4} & \textbf{0} & \textbf{1} \\
4 & 8 & 10 & 10 & 1 & 3 & 3 \\
\end{tabular}
\end{center}

\textbf{Why $\mathcal{C}_5$ Failed.}  Both graphs share the same per-level component structure and birth-time survival. A witness assigning each vertex to its component's slot at every level satisfies $\mathcal{C}_1$-$\mathcal{C}_4,\mathcal{C}_6$.
Only $\mathcal{C}_5$'s equation
$$|E_k| - |V_k| + cc_k = \mathrm{cy}_k$$
distinguishes the two graphs at $k=3$:

\textbf{PH Comparison.} Here we have that $\mathcal{P}_0(H) = \mathcal{P}_0(G)$, but
$\mathcal{P}_1(H) = \{(3,\infty),(4,\infty)^2\} \neq \mathcal{P}_1(G) = \{(4, \infty)^3\}$.

\subsection{\texorpdfstring{$\mathcal{C}_6$}{C6} (birth-time survival)}
\textbf{Target} $G = \mathrm{ER}(7, 0.3)$ with seed $28$:
\[
E_G = \{(0,1),(0,2),(0,4),(0,5),(1,3),(1,6),(2,3),(2,4),(2,5),(2,6),(4,5)\}.
\]
Degrees are $(4,3,5,2,3,3,2)$.
$\mathcal{P}_0(G) = \{(2,3),(2,\infty),(3,3)^2,(3,4),(4,4),(5,5)\}$,
$\mathcal{P}_1(G) = \{(4,\infty),(5,\infty)^4\}$.

\textbf{Counterexample.} $H$:
\[
E_H = \{(0,2),(0,6),(1,5),(1,6),(2,3),(2,6),(3,4),(3,5),(4,5),(4,6),(5,6)\}.
\]

\textbf{Witness.} 
The graph $H$ and its associated degree-based filtration specifies the $M_{ij}$'s and $x_{kcv}$'s. The spanning tree variables are done at each component. At every level, $|V_k|$, $|E_k|$, $cc_k$, and $\mathrm{cy}_k$ are
\emph{identical} between $G$ and $H$:
\begin{center}
\begin{tabular}{c|cccc}
$k$ & $|V_k|$ & $|E_k|$ & $cc_k$ & $\mathrm{cy}_k$ \\
\hline
2 & 2 & 0 & 2 & 0 \\
\textbf{3} & \textbf{5} & \textbf{3} & \textbf{2} & \textbf{0} \\
4 & 6 & 6 & 1 & 1 \\
5 & 7 & 11 & 1 & 5 \\
\end{tabular}
\end{center}

\textbf{Why $\mathcal{C}_6$ Failed.} The difference occurs in $\alpha(k, b)$ at $k = 3$. Recall $\alpha(3, 2)$ refers to the number of components born at $2$ that is still alive at stage $3$.
\begin{center}
\begin{tabular}{c|cccc}
 & $\alpha(3,2)$  \\
\hline
$G$ & 1  \\
$H$ & 2  \\
\end{tabular}
\end{center}

\textbf{PH Comparison.} $\mathcal{P}_0(H) = \{(2,4),(2,\infty),(3,3)^3,(4,4),(5,5)\} \neq \mathcal{P}_0(G)$, and
$\mathcal{P}_1(H) = \mathcal{P}_1(G)$.

\subsection{\texorpdfstring{$\mathcal{C}_7$}{C7} (ancestry)}
\label{sec:ancestry-redundant}

For completeness, we also empirically tested adding vs. not adding the redundant constraint $\mathcal{C}_7$. Across all $95$ test cases (graphs up to $|V| = 30$), disabling $\mathcal{C}_5$
yields no graph-level counterexample. Moreover, the solver runs $1.42\times$
faster without $\mathcal{C}_5$ ($23.0$\,s vs.~$16.2$\,s aggregate).

\newpage
\begin{center}
  \[\includegraphics[width=0.55\textwidth]{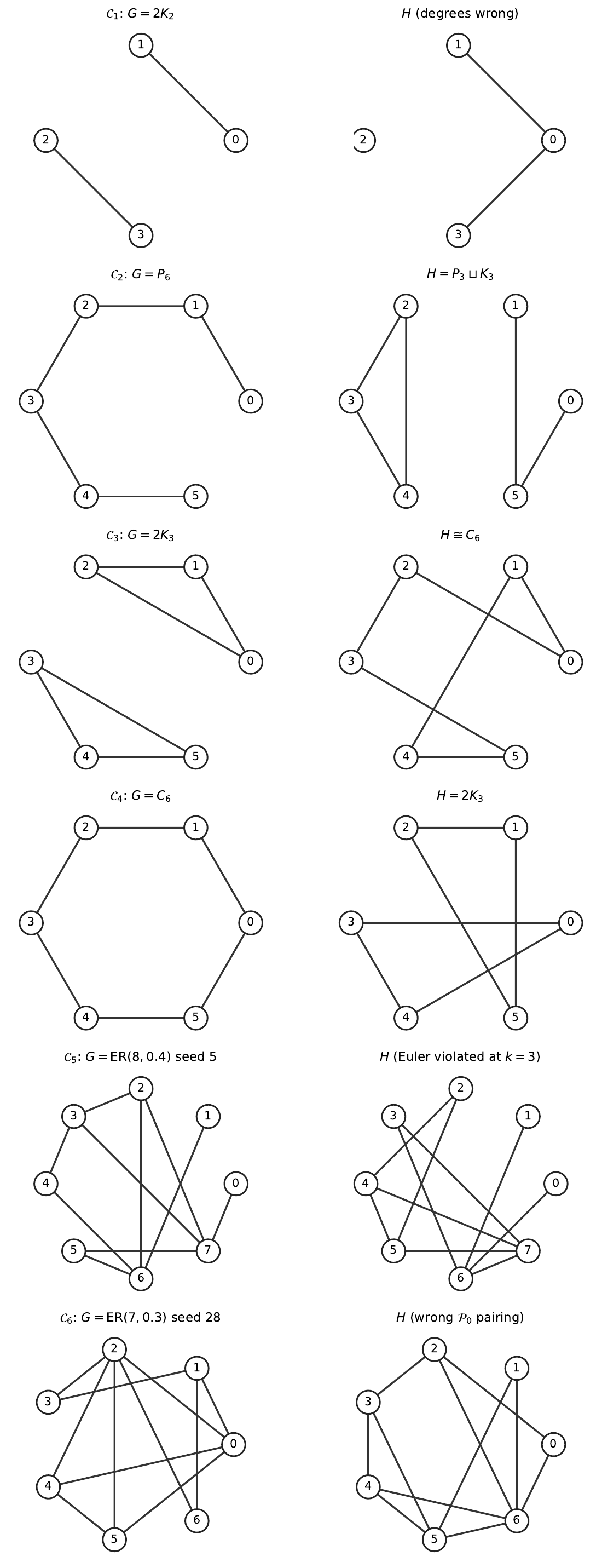}\]
  \captionsetup{hypcap=false}\captionof{figure}{Counterexamples in Appendix~\ref{sec:appendix-counterexamples}. The left displays the target graph and the right displays a graph generated with constraint $\mathcal{C}_i$ removed that has a different degree PD.}\label{fig::counter_example}
\end{center}

\newpage

\section{Experimental Details}
\label{app:setup}

\subsection{Datasets}
\label{app:datasets}

We evaluate on four standard graph generation benchmarks: \textsc{Comm-20}
(small dense two-community SBMs), \textsc{Planar} (fixed-size random planar
graphs), \textsc{SBM} (large stochastic block model graphs), and
\textsc{ENZYMES} (real-world enzyme protein contact maps from the
TUDataset~\cite{morris2020tudataset}). The first three are synthetic and
follow the conventions of prior graph generation
work~\cite{martinkus2022spectre, vignac2022digress, parktopology,
madeira2024generative}; \textsc{ENZYMES} is a  real-world dataset also previously used in prior work. We
adopt the uniform $64/16/20\%$ train/validation/test split convention from
SPECTRE~\cite{martinkus2022spectre}.

Aggregate statistics are listed in Table~\ref{tab::dataset_stats}. The four
datasets cover a range of regimes that stress different aspects of our
methods. \textsc{Comm-20} is the smallest and densest, with the lowest
max degree; the easiest setting in which to demonstrate strict PD compliance.
\textsc{Planar} has fixed $|V| = 64$ and  every graph is guaranteed to be connected and planar by construction.
\textsc{SBM} drives the difficulty: largest graphs
($|V|$ up to $187$), highest edge counts ($|E|$ up to $1129$), and highest
max degree ($\bar{\Delta} = 17.1$), and is where constraint-programming
scalability is most stressed. \textsc{ENZYMES} tests generalisation beyond
synthetic benchmarks.

\begin{table}[!ht]
\centering
\caption{Dataset statistics (all splits combined). Splits follow the uniform
$64/16/20\%$ convention.}
\label{tab::dataset_stats}
\small
\setlength{\tabcolsep}{4pt}
\renewcommand{\arraystretch}{1.05}
\begin{tabular}{@{}lcccc@{}}
\toprule
& \textsc{Comm-20} & \textsc{Planar} & \textsc{SBM} & \textsc{ENZYMES} \\
\midrule
Total graphs                 & 100 & 200 & 200 & 587 \\
Splits (train / val / test)  & 64 / 16 / 20 & 128 / 32 / 40 & 128 / 32 / 40 & 376 / 94 / 117 \\
Nodes (min / mean / max)     & 12 / 15.7 / 20 & 64 / 64 / 64 & 44 / 104.0 / 187 & 10 / 33.0 / 125 \\
Edges (min / mean / max)     & 21 / 37.0 / 61 & 173 / 177.8 / 181 & 129 / 500.4 / 1129 & 16 / 63.3 / 149 \\
Max.\ degree (mean)          & 6.5 & 9.0 & 17.1 & 6.1 \\
\bottomrule
\end{tabular}
\end{table}

\subsection{Baseline Methods}
\label{app:baselines}

We compare against four neural baselines along two axes: the diffusion
backbone (\textsc{DiGress} or \textsc{ConStruct}) and whether the
topology-aware components of \textsc{TAGG} (TAM module and PDM loss) are
added. This yields \textsc{DiGress}, \textsc{ConStruct},
\textsc{DiGress+TAGG}, and \textsc{ConStruct+TAGG}.

\subsubsection{DiGress}
\label{app:digress}

\textsc{DiGress}~\cite{vignac2022digress} is a discrete denoising diffusion
model and the foundation for both \textsc{ConStruct} and \textsc{TAGG}. The
forward process independently corrupts each node and edge category through
transition matrices $Q^t_X, Q^t_E$. The reverse process is a graph
transformer $\phi_\theta$ that maps a noisy graph $G^t$ to predicted
distributions $(\hat{p}^X, \hat{p}^E)$ over clean-graph categories, trained
with cross-entropy
\[
\mathcal{L}_{\mathrm{CE}}(G_0, \hat{p}^G) =
\sum_i \mathrm{CE}(x_i, \hat{p}^X_i) + \lambda \sum_{i,j} \mathrm{CE}(e_{ij}, \hat{p}^E_{ij}).
\]
We use the marginal-noise variant, which sets the limit distribution to the
empirical node and edge marginals of the training set, and augment the
denoising network with auxiliary cycle counts and Laplacian spectral features
to mitigate the expressivity limits of message
passing~\cite{gin}. \textsc{DiGress} is unconstrained and serves as the
vanilla diffusion baseline.

\subsubsection{ConStruct}
\label{app:construct}

\textsc{ConStruct}~\cite{madeira2024generative} extends \textsc{DiGress} with
hard constraints for \emph{edge-deletion-invariant} properties (planarity,
acyclicity, max degree, etc.). It introduces two modifications.

\xhdr{Edge-absorbing noise model.} The forward edge transition sends each
edge either to itself or to an absorbing no-edge state,
\[
Q^t_E = \alpha^t_{\mathrm{abs}} I + (1 - \alpha^t_{\mathrm{abs}})\, \mathbf{1}_c\, e^{\prime}_E,
\]
making the forward sequence a chain of subgraphs $G^0 \supseteq G^1 \supseteq
\cdots \supseteq G^T = \emptyset$ and the reverse process pure edge insertion.

\xhdr{Connection to graph filtrations.} The forward chain $G^0 \supseteq G^1
\supseteq \cdots \supseteq G^T = \emptyset$ (and its reverse) is structurally
a general filtration of $G^0$ in the sense of Sec.~\ref{sec::general_fil},
with the edge-insertion order set by the diffusion schedule rather than a
canonical filter function. \textsc{ConStruct} is not topology-aware and  the
schedule has no relation to persistent homology. Still, the subgraph-chain skeleton
aligns more naturally with the filtration setting of persistent homology
than \textsc{DiGress}'s independent-edge corruption does, making
\textsc{ConStruct} a more filtration-compatible baseline even before any
topological signal is added on top.

\xhdr{Projector.} At each reverse step, given the candidate $\hat G^{t-1}$
and previous iterate $G^t$, the projector iteratively inserts candidate edges
$E(\hat G^{t-1}) \setminus E(G^t)$ in random order, discarding any whose
insertion violates the target property. Theorem 1
of~\cite{madeira2024generative} shows this recovers an optimal solution to
$\min_{G \in \mathcal{C}} \mathrm{GED}(\hat G^{t-1}, G)$ for $\mathcal{C} =
\{G : \mathcal{P}(G), G \supseteq G^t\}$.

We apply \textsc{ConStruct} with planarity on \textsc{Planar}; for the other
three datasets, which lack a uniform edge-deletion-invariant property, we run
the same architecture with the projector disabled.

\subsubsection{TAGG: Topology-Aware Attention and PDM Loss}
\label{app:tagg}

\textsc{TAGG}~\cite{parktopology} uses the degree-based filtration as a
conditioning signal and proposes two topology-aware components built on top
of it, both of which we re-implement and apply to both \textsc{ConStruct}
and \textsc{DiGress} backbones.

\xhdr{PDM loss.} The persistence diagrams $D_{G_0}, D_{\hat p^{G_0}}$ are
computed under the degree-based filtration (matching our setting up to
normalisation). The loss is the $1$-Wasserstein distance with diagonal
padding for cardinality mismatch and per-dimension matching to prevent
$\mathcal{P}_0$/$\mathcal{P}_1$ confusion:
\[
\mathcal{L}_{\mathrm{PDM}}(G_0, \hat p^{G_0})
= W_1(D_{G_0}, D_{\hat p^{G_0}}).
\]
The full objective is $\mathcal{L} = \mathcal{L}_{\mathrm{CE}} + \alpha\,
\mathcal{L}_{\mathrm{PDM}}$.

\xhdr{Topology-aware attention (TAM).} A homological feature vector
$\mu_{G_0} \in \mathbb{R}^{L \cdot S}$, derived from the persistence
landscape of $G_0$, is injected as a global bias in attention scores at every
denoising layer:
\[
A^m_t = \mathrm{softmax}\!\Bigl( \tfrac{Q^m_t (K^m_t)^\top}{\sqrt{d_k}} \cdot Z^{m}_{E_t} + H_{G_0} \Bigr),
\]
with $H_{G_0} = \mathrm{Lin}_\mu(\mu_{G_0})$. At inference, $G_0$ is unknown,
and the original \textsc{TAGG} substitutes the training-set average
$\mu_{G'}$ as conditioning, making it a distribution-level method rather
than a per-target one. To align it with our setting --- where the target PD
is given per sample --- we instead feed the homological feature vector
$\mu_{G_{\mathrm{test}}}$ derived from each test graph as the conditioning
signal at inference. This grants \textsc{TAGG} the strongest possible
per-target topology signal, putting it on equal footing with our methods for
the strict PD-compliance evaluation.

\subsubsection{Augmenting DiGress and ConStruct with TAGG}
\label{app:tagg_augmentation}
\textsc{DiGress+TAGG} adds both TAM and the PDM loss to \textsc{DiGress},
recovering the original \textsc{TAGG} configuration.
\textsc{ConStruct+TAGG} adds the same components on top of
\textsc{ConStruct}'s edge-absorbing forward process and projector-augmented
reverse process; the projector remains active and filters edges proposed by
the TAGG-augmented denoiser. In both cases, inference uses the per-target
$\mu_{G_{\mathrm{test}}}$ derived from each test graph, as described above.

\xhdr{Hyperparameter selection.} For each baseline we use the default
hyperparameters proposed in the original paper for the datasets considered.

\subsection{Hardware} The constraint-programming solver and the local rejection sampler ran on a 120-CPU-core machine; no GPU was used. Neural diffusion baselines were trained on $4\times$ NVIDIA A100 (40\,GB) GPUs and run at inference on a single A100 or RTX~6000 Ada (48\,GB).

\subsection{Local vs.\ Global: Full Comparison Across Filtrations}
\label{app:local_extra}

We expand Sec.~\ref{sec::exp_local_vs_global} with the full set of local generators across the three filtration regimes considered in Sec.~\ref{sec::local}, together with median statistics and per-method commentary.

\xhdr{Methods compared.} Recall from Sec.~\ref{sec::local} that the local approach yields three rejection samplers, each tied to a different filtration scheme:
\begin{itemize}[leftmargin=1.3em, topsep=2pt, itemsep=2pt]
\item \textsc{Random} (Algorithm~\ref{alg:pd_to_graph}): general filtration. The vertex and edge filter values $f_V, f_E$ are not canonical and are sampled along with the graph.
\item \textsc{Vertex} (Algorithm~\ref{alg:vertex_based_gen}): vertex-based filtration. Edges are forced to take time $\max(f_V(u), f_V(v))$, but $f_V$ itself is not canonical and is sampled.
\item \textsc{Degree} (Algorithm~\ref{alg:degree_based_gen}): degree-based filtration. Both vertex and edge filter values are uniquely determined by the graph, so the algorithm only needs to return the graph itself.
\end{itemize}
For \textsc{Random} and \textsc{Vertex}, the algorithm therefore returns the pair $(H, f)$ specifying both the graph and a filtration function realising the target PD; only \textsc{Degree} (and the global CP solver) work directly with the canonical degree filter $f_V(v) = \deg(v)$. In this comparison the CP solver targets the degree filtration, so that it is compared with \textsc{Degree} on the same canonical filtration. Other filtrations are verified in Appendix~\ref{app:filtrations}.

\begin{remark}\label{rmk::implementation_bmin}
    In the implementation of Algorithm~\ref{alg:vertex_based_gen} and Algorithm~\ref{alg:degree_based_gen}, we took an alternative approach to the \textbf{minimum vertex pivot} step.

    In any stage of vertex based filtrations, let $b_{\min} = \operatorname{inf}_{\text{component deaths of the form }(b, k+1)} b$. In the minimum vertex pivot step, we instead do the following procedure:
\begin{enumerate}[leftmargin=*, topsep=0pt, itemsep=0pt]
    \item If there is only one component born at $b_{\min}$, randomly pick a component born at $t < b_{\min}$ and connect to a new vertex.
    \item Otherwise, randomly pick a component born at $t \leq b_{min}$ and connect to a new vertex.
\end{enumerate}
If the algorithm terminates, this is still guaranteed to give a PD-equivalent output and could realize all possible PD-equivalent vertex-based filtrations. This choice involved less variables to check, but a caveat is that this could potentially fail the component merging step.
\end{remark}

\xhdr{Setup.} We follow the protocol of Sec.~\ref{sec::exp_local_vs_global}: each rejection sampler is allowed up to $20{,}000$ proposals per target with $4$ parallel evaluator threads, and CP is given a $1{,}800$\,s budget with $120$ internal CP-SAT search workers. Wall-clocks are measured on a clean $120$-CPU-core machine. We report success rate, mean and median per-target wall-clock (charging the full budget on failed runs), and mean and median attempts to first success on successful runs.

\xhdr{Results.} Table~\ref{tab::local_vs_global_full} reports the full set of numbers.

\begin{table}[!ht]
\centering
\caption{Full local/global comparison across the three filtration regimes. \textsc{Random} and \textsc{Vertex} target their respective filtrations; \textsc{Degree} (Alg.~\ref{alg:degree_based_gen}) and \textsc{CP} both target the canonical degree filtration. Wall-clock means include the full $20{,}000$-attempt or $1{,}800$\,s budget on failed runs; attempt statistics are over successful runs only. CP performs a single solver call per target.}
\label{tab::local_vs_global_full}
\small
\setlength{\tabcolsep}{4pt}
\renewcommand{\arraystretch}{1.1}
\begin{tabular}{@{}llcccc@{}}
\toprule
& & \textsc{Comm-20} & \textsc{Planar} & \textsc{SBM} & \textsc{ENZYMES} \\
\midrule
\multirow{5}{*}{\textsc{Random} (Alg.~\ref{alg:pd_to_graph})}
 & Success (\%) $\uparrow$               & 100.0  & 100.0   & 100.0   & 100.0  \\
 & Wall-clock mean (s) $\downarrow$      & 0.10   & 2.02    & 6.04    & 0.71   \\
 & Wall-clock median (s) $\downarrow$    & 0.09   & 2.02    & 5.30    & 0.65   \\
 & Attempts mean $\downarrow$            & 1.0    & 2.8     & 1.1     & 1.7    \\
 & Attempts median $\downarrow$          & 1      & 1       & 1       & 1      \\
\midrule
\multirow{5}{*}{\textsc{Vertex} (Alg.~\ref{alg:vertex_based_gen})}
 & Success (\%) $\uparrow$               & 100.0  & 100.0   & 100.0   & 100.0  \\
 & Wall-clock mean (s) $\downarrow$      & 0.11   & 2.82    & 8.70    & 0.86   \\
 & Wall-clock median (s) $\downarrow$    & 0.10   & 2.82    & 6.31    & 0.76   \\
 & Attempts mean $\downarrow$            & 1.1    & 258.6   & 476.6   & 81.0   \\
 & Attempts median $\downarrow$          & 1      & 185.5   & 231.0   & 41.0   \\
\midrule
\multirow{5}{*}{\textsc{Degree} (Alg.~\ref{alg:degree_based_gen})}
 & Success (\%) $\uparrow$               & 100.0  & 100.0   & \textbf{15.0}    & 99.1   \\
 & Wall-clock mean (s) $\downarrow$      & 0.17   & 5.88    & 1465.0  & 6.94   \\
 & Wall-clock median (s) $\downarrow$    & 0.13   & 4.62    & ${\sim}3000^{\dagger}$  & 1.07   \\
 & Attempts mean $\downarrow$            & 49.6   & 137.5   & 4583.5  & 431.5  \\
 & Attempts median $\downarrow$          & 6      & 70.0    & 3552.5  & 15     \\
\midrule
\multirow{2}{*}{\textsc{CP} (Sec.~\ref{sec:global})}
 & Success (\%) $\uparrow$               & 100.0  & 100.0   & \textbf{82.5}    & 100.0  \\
 & Wall-clock mean (s) $\downarrow$      & 0.30   & 16.35   & 654.8   & 6.85   \\
 & Wall-clock median (s) $\downarrow$    & 0.16   & 16.21   & 142.5   & 1.78   \\
\bottomrule
\end{tabular}
\\[2pt]
{\footnotesize $^{\dagger}$ \textsc{Degree}/\textsc{SBM}: only $6/40$ targets succeed; the median is dominated by the failure budget. Successful runs alone have a median wall-clock of $162$\,s.}
\end{table}

\xhdr{Discussion.} The three local samplers form a clean ordering from least to most constrained. \textsc{Random}, working under a general filtration where $f_V, f_E$ are jointly sampled, accepts almost every proposal (mean attempts $1$--$3$ across all four datasets) and finishes in seconds. \textsc{Vertex}, with edges deterministically pinned to $\max(f_V(u), f_V(v))$, sees attempt counts grow to a few hundred on the larger benchmarks but still completes within $9$\,s on every dataset. \textsc{Degree}, where the filtration is fully canonical and the PD effectively determines the degree sequence, is where rejection becomes expensive: the per-attempt success probability collapses on large dense instances, and on \textsc{SBM} only $15\%$ of test targets are solved within the $20{,}000$-attempt budget. The CP solver targets the same degree filtration but, by enforcing the global constraints up front, raises the success rate to $82.5\%$ on \textsc{SBM} and cuts the mean wall-clock by more than a factor of two. On \textsc{Comm-20}, \textsc{Planar}, and \textsc{ENZYMES} the two degree-filtration methods solve essentially all targets, with the local sampler $1.7$--$2.8\times$ faster on the smaller datasets and on par on \textsc{ENZYMES}.

\xhdr{Tail behaviour.} The mean--median gap on \textsc{SBM} and \textsc{ENZYMES} reflects a heavy right tail in attempt counts. For \textsc{Degree}/\textsc{ENZYMES} (where success is $99.1\%$), $50\%$ of targets are solved in $\leq 15$ attempts, but the top $5\%$ require $\geq 441$ and the worst $17{,}331$. The mean ($431.5$) is dominated by this tail and overstates the typical cost. This is the natural pattern of rejection sampling against PD constraints derived from heterogeneous graphs: most reference PDs are typical and easy to match by chance, but a few have rare structure that the proposal distribution rarely hits.

\xhdr{Practical recipe.} The pair (success rate, mean attempts) characterises the difficulty of each (filtration, dataset) cell. Mean attempts $\leq 100$ marks an easy cell where the local sampler is the right tool; $\geq 1{,}000$ a borderline cell; $\geq 10{,}000$ (or hitting \texttt{max\_attempts}) a hard cell where CP becomes necessary. A simple hybrid --- run the local sampler with a modest cap (e.g.\ $1{,}000$ attempts) and fall back to CP on rejection --- achieves the best of both: free generation on the easy majority of targets, principled fallback on the hard tail. We use the local-only and CP-only configurations in the main paper for clarity of presentation.

\subsection{Neural Baseline Comparison: Full Details}
\label{app:exp_neural}

This appendix expands Sec.~\ref{sec::exp_neural} with full descriptions of the training and inference protocols, the evaluation metric, the full results table with standard deviations (Table~\ref{tab::cp_vs_tagg_full}, the unabridged version of Table~\ref{tab::cp_vs_tagg}), and discussion of each (method, dataset) cell.

\xhdr{Inference protocol: paired vs.\ averaged conditioning.} The original \textsc{TAGG}~\cite{parktopology} substitutes the dataset-averaged landscape vector $\mu_{G'} = \frac{1}{|D_\mathrm{train}|} \sum_{G \in D_\mathrm{train}} \mu_G$ as the conditioning signal at every test sample, since the test graph $G_0$ is unknown at inference. This makes \textsc{TAGG} a distribution-level method: it generates graphs whose topology resembles the training-set average, not a specific target. To align it with our setting --- where the target PD is given per sample --- we instead feed each test graph's own $\mu_{G_{\mathrm{test}}}$ at inference. Concretely, in the test loop we iterate the test data loader and pass each batch's $\mu$ vector and $n$-nodes mask into the reverse-diffusion sampling routine, producing one generated graph per test reference. This change has two consequences: (i) it grants the TAGG variants the strongest possible per-target topology signal, putting them on equal footing with our CP solver for the strict PD-compliance evaluation; (ii) it yields a 1-to-1 correspondence between generated and reference graphs, enabling the paired $\bar W_1$ metric. Within our paper, all three methods (CP, \textsc{ConStruct+TAGG}, \textsc{DiGress+TAGG}) are evaluated under this paired protocol so they are pairwise comparable; absolute $\bar W_1$ numbers are not directly comparable to the original \textsc{TAGG} paper's reported numbers, which use the averaged $\mu_{G'}$ protocol.

\xhdr{Evaluation metric details.} For each test reference $G^{(i)}_\mathrm{ref}$ paired with its generated graph $H^{(i)}$, we compute persistence diagrams under the degree filtration using \texttt{gudhi} and pair them via the standard $1$-Wasserstein distance with diagonal padding for unequal-cardinality multisets. The two homology dimensions are summed:
\[
\bar W_1^{(i)} \;=\; W_1\!\big(\mathcal{P}_0(G^{(i)}_\mathrm{ref}), \mathcal{P}_0(H^{(i)})\big) + W_1\!\big(\mathcal{P}_1(G^{(i)}_\mathrm{ref}), \mathcal{P}_1(H^{(i)})\big),
\]
and we aggregate as $\bar W_1 = \frac{1}{N_\mathrm{test}} \sum_i \bar W_1^{(i)}$. The CP solver's $\bar W_1 = 0.000$ entries serve as a sanity check: they confirm both that CP's output PD matches the target exactly and that our $W_1$ measurement code returns zero on identical multisets (no float-precision dust). The numerical scale of $\bar W_1$ varies across datasets because larger graphs have larger PDs --- absolute magnitudes between datasets are not directly comparable, only relative orderings within a column.

\xhdr{Hardware and budgets.} CP is run on a $120$-core CPU slice with \texttt{cp\_sat\_workers=120} and a $1{,}800$\,s per-target time limit. Neural inference runs on a single GPU (RTX~6000 Ada or A6000) per cell, single-process. Training is multi-GPU when available (typically $4$ GPUs with DDP). The trained checkpoint per cell is selected as the lowest-validation-MMD-mean checkpoint among the top-$3$ lowest \texttt{val/epoch\_NLL} checkpoints, automated in the test branch of each \texttt{train\_*.py} script.

\xhdr{Full results with standard deviations.} Table~\ref{tab::cp_vs_tagg_full} reports the unabridged version of Table~\ref{tab::cp_vs_tagg}, with neural variants summarised as mean\,$\pm$\,std over $5$ random seeds. CP is deterministic and reported as a single number. Standard deviations are uniformly small ($1$--$7\%$ of the mean), confirming that the residual gap between neural variants and CP is structural rather than due to seed noise.

\begin{table}[!ht]
\centering
\caption{Full version of Table~\ref{tab::cp_vs_tagg}: \textsc{Global} (CP) vs.\ TAGG-augmented neural baselines, with standard deviations over $5$ random seeds. CP is deterministic.}
\label{tab::cp_vs_tagg_full}
\small
\setlength{\tabcolsep}{4pt}
\renewcommand{\arraystretch}{1.1}
\begin{tabular}{@{}llcccc@{}}
\toprule
& & \textsc{Comm-20} & \textsc{Planar} & \textsc{SBM} & \textsc{ENZYMES} \\
\midrule
\multirow{3}{*}{\textsc{Global} (CP)}
 & $\bar W_1$ $\downarrow$            & \textbf{0.000} & \textbf{0.000} & \textbf{0.000} & \textbf{0.000} \\
 & Time/graph (s) $\downarrow$        & 0.30 & 16.35 & 654.8 & 6.85 \\
 & Train (h) $\downarrow$             & 0    & 0     & 0     & 0    \\
\midrule
\multirow{3}{*}{\textsc{ConStruct+TAGG}}
 & $\bar W_1$ $\downarrow$            & 9.35\,$\pm$\,0.97 & 43.23\,$\pm$\,0.56 & 434.48\,$\pm$\,17.54 & 17.35\,$\pm$\,0.24 \\
 & Time/graph (s) $\downarrow$        & 0.88\,$\pm$\,0.01 & 4.73\,$\pm$\,0.09  & 17.11\,$\pm$\,0.09   & 4.92\,$\pm$\,0.46  \\
 & Train (h) $\downarrow$             & 9.4  & 24.3  & 59.8  & 61.6 \\
\midrule
\multirow{3}{*}{\textsc{DiGress+TAGG}}
 & $\bar W_1$ $\downarrow$            & 4.52\,$\pm$\,0.32 & 38.77\,$\pm$\,1.07 & 464.99\,$\pm$\,32.73 & 17.06\,$\pm$\,0.44 \\
 & Time/graph (s) $\downarrow$        & 0.90\,$\pm$\,0.00 & 5.30\,$\pm$\,0.28  & 23.16\,$\pm$\,0.47   & 5.94\,$\pm$\,0.05  \\
 & Train (h) $\downarrow$             & 8.9  & 32.5  & 47.6  & 49.2 \\
\bottomrule
\end{tabular}
\end{table}

\begin{table}[!ht]
\centering
\caption{Handling of failed CP runs on \textsc{SBM}. CP solves $33/40$ targets within the $1{,}800$\,s budget (the coverage reported in Table~\ref{tab::local_vs_global}); its $\bar W_1$ in Table~\ref{tab::cp_vs_tagg} is over these solved targets. Restricting the neural baselines to the same $33$ targets gives an equal comparison.}
\label{tab::sbm_equal}
\small
\begin{tabular}{@{}lcc@{}}
\toprule
\textsc{SBM}, $\bar W_1$ $\downarrow$ & All $40$ targets & Same $33$ targets (CP-solved) \\
\midrule
\textsc{ConStruct+TAGG} & 434.48 & 408.95 \\
\textsc{DiGress+TAGG}   & 464.99 & 454.49 \\
CP (ours)               & 0.00 ($33/40$ solved) & 0.00 \\
\bottomrule
\end{tabular}
\end{table}

\xhdr{Per-dataset observations.}
\begin{itemize}[leftmargin=1.3em, topsep=2pt, itemsep=2pt]
\item \textbf{\textsc{Comm-20}.} The smallest and densest dataset. CP solves each target in $0.3$\,s --- the constraint problem is small enough that solver overhead dominates --- and is the only cell where CP is faster than the neural variants. \textsc{DiGress+TAGG} achieves the lowest neural $\bar W_1$ ($4.52$), roughly halving \textsc{ConStruct+TAGG}'s ($9.35$); the marginal-distribution transition seems to give a more PD-faithful denoising path on small graphs.

\item \textbf{\textsc{Planar}.} The fixed-size benchmark. CP scales to $16.35$\,s/graph, $3$--$3.5\times$ slower than the neural variants, since the cycle structure under degree filtration ($\sim$$110$ cycles, max degree $8$--$12$) generates a moderately large slot/ancestry encoding. The two TAGG variants achieve similar $\bar W_1$ ($\sim$$40$).

\item \textbf{\textsc{SBM}.} The hardest cell. $7/40$ CP runs hit the $1{,}800$\,s timeout, contributing the worst-case $1{,}800$\,s to the mean ($654.8$\,s). The neural variants run in $17$--$23$\,s/graph but post the largest $\bar W_1$ ($\sim$$435$--$465$), reflecting both the size of the PDs (max degree $\sim$$21$, $100+$ $H_1$ points) and the difficulty of the topology distribution. CP's $\bar W_1=0$ is computed over its $33$ solved targets; Table~\ref{tab::sbm_equal} compares all methods on the same set of targets.

\item \textbf{\textsc{ENZYMES}.} The most heterogeneous (real-world). CP and the neural variants are roughly tied on per-graph time ($5$--$7$\,s); the neural variants achieve $\bar W_1 \approx 17$, the lowest TAGG $\bar W_1$ relative to PD scale.
\end{itemize}

\subsection{Qualitative Examples: Detailed Discussion}
\label{app:qualitative}

This appendix expands Figure~\ref{fig::qualitative} from Sec.~\ref{sec::exp_neural} with a second test graph and a per-row breakdown of how each method's output deviates from the reference. Both examples are drawn from the \textsc{ENZYMES} test split; node colour encodes degree, which under the degree filtration is also the filtration value of each vertex.

\begin{figure}[!ht]
\centering
\includegraphics[width=0.95\textwidth]{Figures/enzymes_87_filtration.pdf}\\[6pt]
\includegraphics[width=0.95\textwidth]{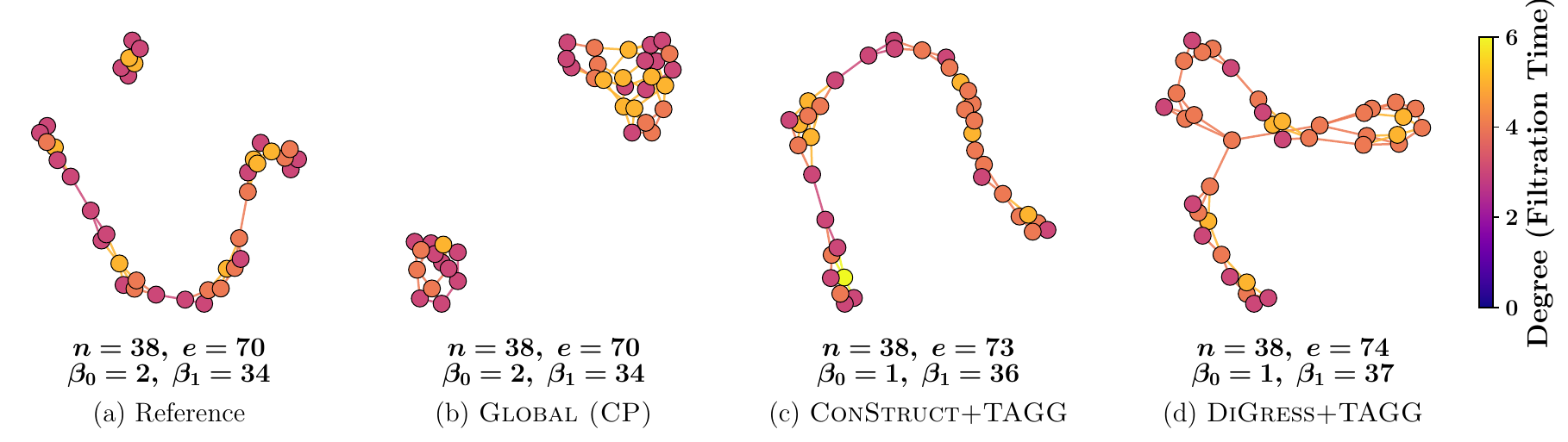}
\caption{Two qualitative examples on \textsc{ENZYMES} test graphs (\texttt{enzymes\_87}, top; \texttt{enzymes\_93}, bottom). Left to right: reference, \textsc{Global} (CP), \textsc{ConStruct+TAGG}, \textsc{DiGress+TAGG}. Node colour encodes degree. CP matches $(n, |E|, \beta_0, \beta_1)$ exactly on both graphs; neural variants drift in edge count and cycle count.}
\label{fig::qualitative_full}
\end{figure}

\xhdr{Top row (\texttt{enzymes\_87}).} The reference is a moderate-size enzyme contact graph with two connected components and a moderate cycle count. CP reproduces both invariants exactly: the same $\beta_0$, the same $\beta_1$, and the same edge count, with vertices placed into two slots whose union recovers the target PD. The two neural variants both merge the two reference components into a single connected graph ($\beta_0$: $2 \to 1$) and overshoot the target cycle count by $2$--$3$ ($\beta_1$). The merge is the more telling failure mode: it shows that even with the strongest per-target conditioning ($\mu_{G_\mathrm{test}}$ rather than the training-set average), the diffusion denoiser cannot reliably preserve component count, because component structure is a global property that no local denoising step on its own can enforce. The excess cycles compound the issue --- the PDM regulariser nudges the predicted PD toward the reference but does not commit to it.

\xhdr{Bottom row (\texttt{enzymes\_93}).} A larger and denser reference, where the failure modes are amplified. Both neural variants generate $14$--$17$ excess edges and more than double the target $\beta_1$, while $\beta_0$ stays roughly correct. This is the opposite failure pattern from the top row: rather than merging components, the denoiser over-fills the cycle structure within each component. The same structural cause is at work: the PDM loss is a soft penalty on the diagram, not a hard constraint, and when many edge configurations land at similar diffusion-loss values, the gradient-based optimiser settles in a region that is topologically off-target. CP, by contrast, returns a graph that is visibly different from the reference --- node positions and edge layout do not match --- but every degree-PD invariant is correct. This is the crucial separation our framework makes precise: a strict PD-compliant generator does not aim to recover the reference graph, only some graph in the PD-equivalence class.

\section{Complexity and Scaling Analysis}\label{app:complexity}

Here we give a complexity and scaling analysis of our local sampler and global encoder when $\Phi = \deg$. Since other filtrations only affects the constraint $\mathcal{C}_1$, the analysis outside of $\mathcal{C}_1$ still applies to them. For the global encoding, we also ran empirical experiments to test its runtimes in practice.

\textbf{Notation:} We adopt the following notations in the analysis:
\begin{itemize}
    \item $n = |\mathcal{P}_0|$ is the vertex count
    \item $m$ is the edge count
    \item $K = \max_v \deg(v)$ is the max degree (this can be read off as the max filtration stage)
    \item $cc_{\max} = \max_k |\{(b, d) \in \mathcal{P}_0 : b \le k < d\}|$ is the max concurrent alive component count.
\end{itemize}

\subsection{Local sampler} Each iteration of the sampling selects a vertex-birth or edge-birth event and updates a union-find structure, dominated by a preprocessing sort of the filtration events. This gives a single successful run the overall complexity of $O((n + m) \log(n + m))$ in time and $O(n + m)$ in space. There are no bounds in general on how many runs one needs to try to get a successful outcome (Theorem~\ref{thm::local_impossibility}), except for when we either have acyclic targets (Corollary~\ref{cor::general_trees}) or stages where cycle-birth demand stays within the running graph's cycle capacity (Proposition~\ref{prop::cycle_capacity}).

\subsection{Global CP encoding}
The global encoder is summarized by the following table.
\begin{table}[!ht]
\centering
\small
\begin{tabular}{@{}lll@{}}
\toprule
Constraint & Variables & Constraints \\
\midrule
$\mathcal{C}_1^{\deg}$ & $M_{ij}$: $O(n^2)$ & $O(n)$ \\
$\mathcal{C}_2$ & $x_{kcv}$: $O(K \cdot cc_{\max} \cdot n)$ & $O(Kn)$ \\
$\mathcal{C}_3$ & --- & $O(K \cdot cc_{\max} \cdot n^2)$ \\
$\mathcal{C}_4$ & $\pi_{kcij}$: $O(K \cdot cc_{\max} \cdot n^2)$; $\ell_{kcv}, r_{kcv}$: $O(K \cdot cc_{\max} \cdot n)$ & $O(K \cdot cc_{\max} \cdot n^2)$ \\
$\mathcal{C}_5$ & --- & $O(K)$ \\
$\mathcal{C}_6$ & $e_{kcb}$: $O(K \cdot cc_{\max})$ & $O(K \cdot cc_{\max})$ \\
\bottomrule
\end{tabular}
\end{table}

In total, the encoding uses $O(K \cdot cc_{\max} \cdot n^2)$ variables and constraints. Memory is linear in this quantity (all measured instances fit within commodity RAM). Encoding size is polynomial; solving is NP-hard in general (it subsumes SAT), so no closed-form runtime bound applies. However, modern CP solvers are highly optimized in practice.

Furthermore, we also conducted empirical study of graph structural properties on CP runtime. We ran controlled scaling experiments on synthetic targets, varying one property at a time and running each through the CP solver with a 60s budget. Here we report on some \textbf{empirical behaviors} we observed. 
\begin{enumerate}[leftmargin=*, topsep=0pt, itemsep=1pt]
\item \textbf{Vertices $n$ dominates model size} ($O(n^2)$ edge variables). We tested this on Erd\H{o}s--R\'enyi graphs at fixed $p=0.3$, while varying $n$ (the number of vertices). We obtained on average 0.16s at $n=10$, 0.87s at $n=20$, 1.42s at $n=25$. There is roughly $3$-$5\times$ per doubling.
\item \textbf{Density helps CP.} We also observed that denser graphs merge components earlier and reduces the quantity $cc_{\max}$. We tested this on Erd\H{o}s--R\'enyi graphs at fixed $n=25$, while varying $p$. We obtained on average 6.83s at $p=0.2$, 0.90s at $p=0.5$, 0.60s at $p=0.7$. Thus, we empirically conclude that density affects cost through $cc_{\max}$.
\item \textbf{$cc_{\max}$ is the dominant factor.} We tested this on disjoint unions of $k$ copies of $P_3$, isolating $cc_{\max}$ while keeping $K$ and cycle count fixed. We observed on average 0.03s at $cc_{\max}=4$, 0.41s at 12, 3.1s at 20, 12.8s at 24. The cost roughly doubles per $+2$ copies.
\item \textbf{Cycles add a modest overhead.} 
We tested this on disjoint unions of copies of $P_3$ against disjoint unions of copies of $C_3$ at matched $cc_{\max}$. At $cc_{\max}=6$, $6 \times C_3$ (6 cycles) was marginally slower than $6 \times P_3$ (0 cycles), consistent with the extra cycle-birth constraints, though the effect was small relative to the dependence on $cc_{\max}$ itself.
\end{enumerate}

\begin{remark}
The empirical behaviors above explain the behavior of the \textbf{execution time on the SBM dataset} (see Table~\ref{tab::local_vs_global}). The SBM graphs have about $100$ vertices on average, and 2-3 communities persisting across many filtration stages before merging. This drives $cc_{\max}$ up over long ranges, hitting both the $n^2$ size term in the encoding and the $cc_{\max}$-multiplier in the slot/parent variable counts simultaneously. The observed SBM timeouts (30-min budget) all have $n \in [144, 164]$ combined with high $cc_{\max}$.
\end{remark}

\subsection{A practitioner's guide to choosing the methods}
\label{app:prac}

Here we give some heuristics as to which method to choose (local sampler, global CP solver, and neural approximations) in practice.
\begin{enumerate}[leftmargin=*, topsep=0pt, itemsep=1pt]
\item \textbf{Local sampler.} This is preferred for sparse targets. Cycle counts grow slowly, so Proposition~\ref{prop::cycle_capacity} (and Corollary~\ref{cor::general_trees} for acyclic targets) guarantees single-run termination.
\item \textbf{Global CP solver.} This is preferred for denser targets where Theorem~\ref{thm::local_impossibility}'s infeasibility scenarios make local sampling unreliable, provided that the quantities $n$ and $cc_{\max}$ stay tractable. Empirically, we observe runtimes of sub-3s per graph at $n \le 40$, $cc_{\max} \le 5$; and sub-30s at $n \le 60$ or $cc_{\max} \le 10$.
\item \textbf{Neural approximation.} In regimes where both exact methods struggle (e.g., harder SBM instances, as they are not sparse enough for local, and too large in $cc_{\max}$ for CP), neural sampling is the only tractable choice, at the cost of exact PD compliance. As reported in the paper, one-off training cost is substantial, but amortized inference is fast. This pays off when many approximate graphs are needed quickly. Closing the gap in the third regime for our approach likely requires targeted algorithmic advances, or importing established techniques from the constraint-programming literature: symmetry breaking~\citep{gent1999symmetry, puget2002symmetry}, conflict-driven clause learning~\citep{marques1999grasp}, portfolio solvers~\citep{xu2008satzilla}, problem-specific decomposition~\citep{dechter1989tree, regin1994filtering}, none of which our current encoding exploits.
\end{enumerate}

\newpage
\section{Experiments with Other Filtration Schemes}\label{app:filtrations}

We verify the filtrations --- degree, triangle, $k$-clique, clustering, average neighbor degree, closeness, Katz, and PageRank --- for both soundness and completeness.
\begin{enumerate}[leftmargin=*, topsep=0pt, itemsep=1pt]
    \item For \textbf{soundness}, from each generated graph, we recompute $\Phi$ with NetworkX, recompute its PDs, and check it matches the target. Over a suite of structurally diverse graphs (paths, cycles, stars, complete graphs, random trees, Erd\H{o}s--R\'enyi, grids, barbells, wheels, Petersen) of size $\le 30$, every solution realizes the exact target, with no failures.
    \item \textbf{Completeness }is checked by exhaustive enumeration up to isomorphism at $n \le 6$: the CP solution set equals the full persistence-equivalence class.
\end{enumerate}

\begin{remark}
Overall, the solver cost depends on the encoding complexity. For degree, this is linear, whereas triangle and clustering add $O(n^3)$ auxiliaries and are costlier. On \textsc{Comm-20}, the success rate / median per-graph solve time are: degree 100\% / 0.16 s, triangle 90\% / 0.25 s, clustering 85\% / 1.31 s. Scaling these richer filtrations further on remaining datasets and other benchmarks, is a promising direction for future work.
\end{remark}

\section{Diversity Within a Degree-Based PD-Equivalence Class}\label{app:diversity}

Here we test how diverse the graphs in a (\textbf{degree-based}) PD-equivalence class are. We first note that diversity is intrinsic to our methods, both of which are sound and complete over the PD-equivalence class. The local sampler produces distinct members through its random choices. The CP-SAT solver enumerates the entire class (all solutions of the encoding), and at scale random seeds return distinct exact members. Thus, in general, we would expect many diverse graphs to occur in a given input PD.

In the experiment, for each reference target we run the CP solver $20\times$ with randomized search, and average pairwise structural distances over targets. The results are displayed below as:

\begin{table}[!ht]
\centering
\small
\begin{tabular}{@{}lcccccc@{}}
\toprule
Dataset & \# distinct & clustering & triangle & spectral & planar-rate & PD invariants ($\beta_0$, $\beta_1$, deg, density) \\
\midrule
\textsc{Comm-20} & 20 & 0.064 & 0.190 & 0.059 & 0.087 & 0.000 \\
\textsc{Planar}  & 18 & 0.020 & 0.212 & 0.020 & 0.000 & 0.000 \\
\textsc{SBM}     & 14 & 0.026 & 0.099 & 0.021 & 0.000 & 0.000 \\
\textsc{ENZYMES} & 20 & 0.072 & 0.240 & 0.060 & 0.039 & 0.000 \\
\bottomrule
\end{tabular}
\end{table}

To elaborate on the columns of the table:
\begin{itemize}[leftmargin=*, topsep=0pt, itemsep=1pt]
    \item \textbf{\# distinct} is the median number of pairwise non-isomorphic graphs among the $20$ solves.
    \item \textbf{clustering} is the mean pairwise absolute difference in average clustering coefficient (in $[0, 1]$).
    \item \textbf{triangle} is the mean pairwise relative difference in triangle count.
    \item \textbf{spectral} is the mean pairwise $\sqrt{n}$-normalized $L_2$ distance between sorted normalized-Laplacian spectra.
    \item \textbf{planar-rate} the fraction of class members that are planar.
    \item The last column (\textbf{PD invariants}) record the mean pairwise absolute difference of the PD-determined invariants ($\beta_0$, $\beta_1$, average degree, density), which is exactly $0$.
\end{itemize}
Note that the spectral column also serves as a surrogate for measuring \textbf{graph edit distances}. This is because exact GED is intractable at benchmark sizes ($n$ up to $\sim 123$), so the $\sqrt{n}$-normalized spectral distance is the standard scalable stand-in.

The table confirms that within a degree-PD class, generations diversify in non-topological structure while their PD invariants stay exactly fixed.


\end{document}

%% file: summary.tex
\begin{center}
\begin{minipage}{13.5cm}
\begin{tcolorbox}[tab2,tabularx={ll},title={},boxrule=1pt,top=0.9ex,bottom=0.9ex,colbacktitle=lb!10!white,colframe=lb!70!white]
\textbf{Local Generation of PD-Equivalent Filtrations (Section~\ref{sec::local})} \\ 
\addlinespace[0.3ex]
\quad Local Generation for General/Vertex/Degree Filtrations & Algorithm~\ref{alg:pd_to_graph},~\ref{alg:vertex_based_gen},~\ref{alg:degree_based_gen}\\ 
\addlinespace[0.3ex]
\quad Correctness and Diversity of Local Generations & Theorem~\ref{thm::general_filtration},~\ref{thm::vertex_filtration},~\ref{thm::degree_filtration} \\ \addlinespace[0.3ex]
\quad Perfect Generation on Acyclic Graphs (for General/Vertex) & Corollary~\ref{cor::general_trees},~\ref{cor::vertex_trees} \\ \addlinespace[0.3ex] 
\quad Impossibility Results for Local Generation & \autoref{thm::local_impossibility} \\ \addlinespace[0.3ex]
\midrule
\textbf{Global Generation of PD-Equivalent Graphs (Section~\ref{sec:global})} \\ \addlinespace[0.3ex]
\quad Global Generation for Any Canonical Vertex Filtration Scheme $\Phi$ & Section~\ref{sec::encoding}\\ \addlinespace[0.3ex]
\quad Soundness, Completeness, and Degree Minimality of Encoding & \autoref{thm:cp_correct} \\ \addlinespace[0.3ex]
\midrule
\textbf{Experiments (Section~\ref{sec::experiments})} \\ \addlinespace[0.3ex]
\quad Local vs.\ Global on Four Graph Generation Benchmarks & Table~\ref{tab::local_vs_global}\\ \addlinespace[0.3ex]
\quad Comparison Against PD-Conditioned Neural Diffusion Baselines & Table~\ref{tab::cp_vs_tagg}, Figure~\ref{fig::qualitative}\\ \addlinespace[0.3ex]
\end{tcolorbox}
\end{minipage}
\end{center}

%% file: Figures/intro_example.tex
\begin{tikzpicture}[
  vone/.style    ={draw, circle, fill=blue!50,   line width=0.5mm, minimum size=0.4cm, inner sep=0pt},
  vtwo/.style    ={draw, circle, fill=orange!60, line width=0.5mm, minimum size=0.4cm, inner sep=0pt},
  vthree/.style  ={draw, circle, fill=black!50,  line width=0.5mm, minimum size=0.4cm, inner sep=0pt},
  etwo/.style    ={line width=1.0mm, color=orange!70},
  ethree/.style  ={line width=1.0mm, color=black!60},
  arr/.style     ={->, line width=0.4mm},
  pp/.style      ={red!80!black, font=\footnotesize}
]

\node[font=\small] at (2.0,  5.6) {\Large $t = 1$};
\node[font=\small] at (6.0,  5.6) {\Large $t = 2$};
\node[font=\small] at (11.0, 5.6) {\Large $t = 3$};

\node[anchor=east, font=\small] at (0.7,  4.5) {\large \textbf{Graph 1}};
\node[anchor=east, font=\small] at (0.7,  2.7) {\large \textbf{Graph 2}};
\node[anchor=east, font=\small] at (0.7,  1.0) {\large \textbf{0th Homology}};
\node[anchor=east, font=\small] at (0.7,  0.5) {\large \textbf{(Components)}};
\node[anchor=east, font=\small] at (0.7, -0.2) {\large \textbf{1st Homology}};
\node[anchor=east, font=\small] at (0.2,  -0.7) {\large \textbf{(Cycles)}};


\node[vone] at (2.0, 4.5) {};

\node[vtwo] (P2a) at (4.5, 5.0) {};
\node[vtwo] (P2b) at (4.5, 4.0) {};
\node[vtwo] (P2x) at (5.5, 4.5) {};
\node[vtwo] (P2y) at (6.5, 4.5) {};
\node[vone] (P2z) at (7.5, 4.5) {};
\draw[etwo] (P2a) -- (P2b);
\draw[etwo] (P2x) -- (P2y);
\draw[etwo] (P2y) -- (P2z);

\node[vthree] (P3c) at (10.0, 4.5) {};
\node[vtwo]   (P3a) at ( 9.0, 5.0) {};
\node[vtwo]   (P3b) at ( 9.0, 4.0) {};
\node[vtwo]   (P3x) at (11.0, 4.5) {};
\node[vtwo]   (P3y) at (12.0, 4.5) {};
\node[vone]   (P3z) at (13.0, 4.5) {};
\draw[ethree] (P3c) -- (P3a);
\draw[ethree] (P3c) -- (P3b);
\draw[etwo]   (P3a) -- (P3b);
\draw[ethree] (P3c) -- (P3x);
\draw[etwo]   (P3x) -- (P3y);
\draw[etwo]   (P3y) -- (P3z);

\draw[arr] (2.3, 4.5) -- (4.2, 4.5);
\draw[arr] (7.8, 4.5) -- (8.7, 4.5);


\node[vone] at (2.0, 2.7) {};

\node[vone] (Q2l) at (4.5, 2.7) {};
\node[vtwo] (Q2p) at (5.5, 3.2) {};
\node[vtwo] (Q2q) at (5.5, 2.2) {};
\node[vtwo] (Q2A) at (6.5, 3.2) {};
\node[vtwo] (Q2B) at (6.5, 2.2) {};
\draw[etwo] (Q2p) -- (Q2A);
\draw[etwo] (Q2A) -- (Q2B);
\draw[etwo] (Q2B) -- (Q2q);

\node[vthree] (Q3c) at (10.0, 2.7) {};
\node[vone]   (Q3l) at ( 9.0, 2.7) {};
\node[vtwo]   (Q3p) at (11.0, 3.2) {};
\node[vtwo]   (Q3q) at (11.0, 2.2) {};
\node[vtwo]   (Q3A) at (12.0, 3.2) {};
\node[vtwo]   (Q3B) at (12.0, 2.2) {};
\draw[ethree] (Q3c) -- (Q3l);
\draw[ethree] (Q3c) -- (Q3p);
\draw[ethree] (Q3c) -- (Q3q);
\draw[etwo]   (Q3p) -- (Q3A);
\draw[etwo]   (Q3A) -- (Q3B);
\draw[etwo]   (Q3B) -- (Q3q);

\draw[arr] (2.3, 2.7) -- (4.2, 2.7);
\draw[arr] (6.8, 2.7) -- (8.7, 2.7);


\node[pp, anchor=west] at (8.5,  1.3) {\Large $(1,\infty), (2,3)$ };
\node[pp, anchor=west] at (8.5,  0.7) {\Large$(2,2), (2,2), (2,2), (3.3)$};

\node[pp, anchor=west] at (1.5,  1.3) {\Large $(1,-)$};
\node[pp, anchor=west] at (1.5,  -0.2) {\Large \text{None}};

\node[pp, anchor=west] at (4,  1.3) {\Large $(1,-), (2,-)$ };
\node[pp, anchor=west] at (4,  0.7) {\Large$(2,2), (2,2), (2,2)$};
\node[pp, anchor=west] at (4,  -0.2) {\Large \text{None}};


\node[pp, anchor=west] at (10.5, -0.2) { \Large $(3,\infty)$};

\end{tikzpicture}

%% file: Figures/random_cycle_capacity.tex
\begin{tikzpicture}[x=0.75pt,y=0.75pt,yscale=-1,xscale=1]

\draw  [color={rgb, 255:red, 208; green, 2; blue, 27 }  ,draw opacity=1 ][fill={rgb, 255:red, 208; green, 2; blue, 27 }  ,fill opacity=1 ] (288,83.25) .. controls (288,80.63) and (290.13,78.5) .. (292.75,78.5) .. controls (295.37,78.5) and (297.5,80.63) .. (297.5,83.25) .. controls (297.5,85.87) and (295.37,88) .. (292.75,88) .. controls (290.13,88) and (288,85.87) .. (288,83.25) -- cycle ;
\draw [color={rgb, 255:red, 208; green, 2; blue, 27 }  ,draw opacity=1 ][fill={rgb, 255:red, 208; green, 2; blue, 27 }  ,fill opacity=1 ]   (278,64.5) -- (292.75,85.25) ;
\draw  [color={rgb, 255:red, 208; green, 2; blue, 27 }  ,draw opacity=1 ][fill={rgb, 255:red, 208; green, 2; blue, 27 }  ,fill opacity=1 ] (277.13,59.83) .. controls (279.71,59.35) and (282.19,61.05) .. (282.67,63.63) .. controls (283.15,66.21) and (281.45,68.69) .. (278.87,69.17) .. controls (276.29,69.65) and (273.81,67.95) .. (273.33,65.37) .. controls (272.85,62.79) and (274.55,60.31) .. (277.13,59.83) -- cycle ;
\draw  [color={rgb, 255:red, 208; green, 2; blue, 27 }  ,draw opacity=1 ][fill={rgb, 255:red, 208; green, 2; blue, 27 }  ,fill opacity=1 ] (260,87.25) .. controls (260,84.63) and (262.13,82.5) .. (264.75,82.5) .. controls (267.37,82.5) and (269.5,84.63) .. (269.5,87.25) .. controls (269.5,89.87) and (267.37,92) .. (264.75,92) .. controls (262.13,92) and (260,89.87) .. (260,87.25) -- cycle ;
\draw [color={rgb, 255:red, 208; green, 2; blue, 27 }  ,draw opacity=1 ][fill={rgb, 255:red, 208; green, 2; blue, 27 }  ,fill opacity=1 ]   (250,68.5) -- (264.75,89.25) ;
\draw  [color={rgb, 255:red, 208; green, 2; blue, 27 }  ,draw opacity=1 ][fill={rgb, 255:red, 208; green, 2; blue, 27 }  ,fill opacity=1 ] (249.13,63.83) .. controls (251.71,63.35) and (254.19,65.05) .. (254.67,67.63) .. controls (255.15,70.21) and (253.45,72.69) .. (250.87,73.17) .. controls (248.29,73.65) and (245.81,71.95) .. (245.33,69.37) .. controls (244.85,66.79) and (246.55,64.31) .. (249.13,63.83) -- cycle ;
\draw  [color={rgb, 255:red, 74; green, 144; blue, 226 }  ,draw opacity=1 ][fill={rgb, 255:red, 74; green, 144; blue, 226 }  ,fill opacity=1 ] (78,62.25) .. controls (78,59.63) and (80.13,57.5) .. (82.75,57.5) .. controls (85.37,57.5) and (87.5,59.63) .. (87.5,62.25) .. controls (87.5,64.87) and (85.37,67) .. (82.75,67) .. controls (80.13,67) and (78,64.87) .. (78,62.25) -- cycle ;
\draw [color={rgb, 255:red, 74; green, 144; blue, 226 }  ,draw opacity=1 ][fill={rgb, 255:red, 74; green, 144; blue, 226 }  ,fill opacity=1 ]   (64,71.5) -- (82.75,62.25) ;
\draw  [color={rgb, 255:red, 0; green, 0; blue, 0 }  ,draw opacity=1 ][fill={rgb, 255:red, 0; green, 0; blue, 0 }  ,fill opacity=1 ] (98.13,69.83) .. controls (100.71,69.35) and (103.19,71.05) .. (103.67,73.63) .. controls (104.15,76.21) and (102.45,78.69) .. (99.87,79.17) .. controls (97.29,79.65) and (94.81,77.95) .. (94.33,75.37) .. controls (93.85,72.79) and (95.55,70.31) .. (98.13,69.83) -- cycle ;
\draw [color={rgb, 255:red, 74; green, 144; blue, 226 }  ,draw opacity=1 ][fill={rgb, 255:red, 74; green, 144; blue, 226 }  ,fill opacity=1 ]   (64,71.5) -- (78.75,92.25) ;
\draw  [color={rgb, 255:red, 74; green, 144; blue, 226 }  ,draw opacity=1 ][fill={rgb, 255:red, 74; green, 144; blue, 226 }  ,fill opacity=1 ] (63.13,66.83) .. controls (65.71,66.35) and (68.19,68.05) .. (68.67,70.63) .. controls (69.15,73.21) and (67.45,75.69) .. (64.87,76.17) .. controls (62.29,76.65) and (59.81,74.95) .. (59.33,72.37) .. controls (58.85,69.79) and (60.55,67.31) .. (63.13,66.83) -- cycle ;
\draw  [color={rgb, 255:red, 74; green, 144; blue, 226 }  ,draw opacity=1 ][fill={rgb, 255:red, 74; green, 144; blue, 226 }  ,fill opacity=1 ] (143,64.25) .. controls (143,61.63) and (145.13,59.5) .. (147.75,59.5) .. controls (150.37,59.5) and (152.5,61.63) .. (152.5,64.25) .. controls (152.5,66.87) and (150.37,69) .. (147.75,69) .. controls (145.13,69) and (143,66.87) .. (143,64.25) -- cycle ;
\draw [color={rgb, 255:red, 74; green, 144; blue, 226 }  ,draw opacity=1 ][fill={rgb, 255:red, 74; green, 144; blue, 226 }  ,fill opacity=1 ]   (129,73.5) -- (147.75,64.25) ;
\draw  [color={rgb, 255:red, 74; green, 144; blue, 226 }  ,draw opacity=1 ][fill={rgb, 255:red, 74; green, 144; blue, 226 }  ,fill opacity=1 ] (163.13,71.83) .. controls (165.71,71.35) and (168.19,73.05) .. (168.67,75.63) .. controls (169.15,78.21) and (167.45,80.69) .. (164.87,81.17) .. controls (162.29,81.65) and (159.81,79.95) .. (159.33,77.37) .. controls (158.85,74.79) and (160.55,72.31) .. (163.13,71.83) -- cycle ;
\draw  [color={rgb, 255:red, 74; green, 144; blue, 226 }  ,draw opacity=1 ][fill={rgb, 255:red, 74; green, 144; blue, 226 }  ,fill opacity=1 ] (139,92.25) .. controls (139,89.63) and (141.13,87.5) .. (143.75,87.5) .. controls (146.37,87.5) and (148.5,89.63) .. (148.5,92.25) .. controls (148.5,94.87) and (146.37,97) .. (143.75,97) .. controls (141.13,97) and (139,94.87) .. (139,92.25) -- cycle ;
\draw [color={rgb, 255:red, 74; green, 144; blue, 226 }  ,draw opacity=1 ][fill={rgb, 255:red, 74; green, 144; blue, 226 }  ,fill opacity=1 ]   (129,73.5) -- (143.75,94.25) ;
\draw  [color={rgb, 255:red, 74; green, 144; blue, 226 }  ,draw opacity=1 ][fill={rgb, 255:red, 74; green, 144; blue, 226 }  ,fill opacity=1 ] (128.13,68.83) .. controls (130.71,68.35) and (133.19,70.05) .. (133.67,72.63) .. controls (134.15,75.21) and (132.45,77.69) .. (129.87,78.17) .. controls (127.29,78.65) and (124.81,76.95) .. (124.33,74.37) .. controls (123.85,71.79) and (125.55,69.31) .. (128.13,68.83) -- cycle ;
\draw [color={rgb, 255:red, 74; green, 144; blue, 226 }  ,draw opacity=1 ][fill={rgb, 255:red, 74; green, 144; blue, 226 }  ,fill opacity=1 ]   (143.75,92.25) -- (147.75,64.25) ;
\draw  [dash pattern={on 4.5pt off 4.5pt}]  (181,45.5) -- (181,124.5) ;
\draw  [color={rgb, 255:red, 0; green, 0; blue, 0 }  ,draw opacity=1 ][fill={rgb, 255:red, 0; green, 0; blue, 0 }  ,fill opacity=1 ] (43.13,70.83) .. controls (45.71,70.35) and (48.19,72.05) .. (48.67,74.63) .. controls (49.15,77.21) and (47.45,79.69) .. (44.87,80.17) .. controls (42.29,80.65) and (39.81,78.95) .. (39.33,76.37) .. controls (38.85,73.79) and (40.55,71.31) .. (43.13,70.83) -- cycle ;
\draw  [color={rgb, 255:red, 0; green, 0; blue, 0 }  ,draw opacity=1 ][fill={rgb, 255:red, 0; green, 0; blue, 0 }  ,fill opacity=1 ] (19,91.25) .. controls (19,88.63) and (21.13,86.5) .. (23.75,86.5) .. controls (26.37,86.5) and (28.5,88.63) .. (28.5,91.25) .. controls (28.5,93.87) and (26.37,96) .. (23.75,96) .. controls (21.13,96) and (19,93.87) .. (19,91.25) -- cycle ;
\draw  [color={rgb, 255:red, 0; green, 0; blue, 0 }  ,draw opacity=1 ][fill={rgb, 255:red, 0; green, 0; blue, 0 }  ,fill opacity=1 ] (74,92.25) .. controls (74,89.63) and (76.13,87.5) .. (78.75,87.5) .. controls (81.37,87.5) and (83.5,89.63) .. (83.5,92.25) .. controls (83.5,94.87) and (81.37,97) .. (78.75,97) .. controls (76.13,97) and (74,94.87) .. (74,92.25) -- cycle ;
\draw  [color={rgb, 255:red, 0; green, 0; blue, 0 }  ,draw opacity=1 ][fill={rgb, 255:red, 0; green, 0; blue, 0 }  ,fill opacity=1 ] (219,79.25) .. controls (219,76.63) and (221.13,74.5) .. (223.75,74.5) .. controls (226.37,74.5) and (228.5,76.63) .. (228.5,79.25) .. controls (228.5,81.87) and (226.37,84) .. (223.75,84) .. controls (221.13,84) and (219,81.87) .. (219,79.25) -- cycle ;
\draw  [color={rgb, 255:red, 0; green, 0; blue, 0 }  ,draw opacity=1 ][fill={rgb, 255:red, 0; green, 0; blue, 0 }  ,fill opacity=1 ] (202,88.25) .. controls (202,85.63) and (204.13,83.5) .. (206.75,83.5) .. controls (209.37,83.5) and (211.5,85.63) .. (211.5,88.25) .. controls (211.5,90.87) and (209.37,93) .. (206.75,93) .. controls (204.13,93) and (202,90.87) .. (202,88.25) -- cycle ;

\draw (267,107.4) node [anchor=north west][inner sep=0.75pt]    {$\textcolor[rgb]{0.82,0.01,0.11}{H_{1}}$};
\draw (74,109.4) node [anchor=north west][inner sep=0.75pt]    {$\textcolor[rgb]{0.29,0.56,0.89}{G_{1}}$};
\draw (136,109.4) node [anchor=north west][inner sep=0.75pt]    {$\textcolor[rgb]{0.29,0.56,0.89}{G_{2}}$};
\draw (19,110.4) node [anchor=north west][inner sep=0.75pt]    {$\textcolor[rgb]{0.29,0.56,0.89}{G}\textcolor[rgb]{0.29,0.56,0.89}{_{0}}$};
\draw (209,108.4) node [anchor=north west][inner sep=0.75pt]    {$\textcolor[rgb]{0.82,0.01,0.11}{H}\textcolor[rgb]{0.82,0.01,0.11}{_{0}}$};

\end{tikzpicture}

%% file: Figures/vertex_based_capacity.tex
\begin{tikzpicture}[x=0.75pt,y=0.75pt,yscale=-1,xscale=1]

\draw  [dash pattern={on 4.5pt off 4.5pt}]  (204,65.5) -- (204,144.5) ;
\draw  [color={rgb, 255:red, 0; green, 0; blue, 0 }  ,draw opacity=1 ][fill={rgb, 255:red, 0; green, 0; blue, 0 }  ,fill opacity=1 ] (62.13,90.83) .. controls (64.71,90.35) and (67.19,92.05) .. (67.67,94.63) .. controls (68.15,97.21) and (66.45,99.69) .. (63.87,100.17) .. controls (61.29,100.65) and (58.81,98.95) .. (58.33,96.37) .. controls (57.85,93.79) and (59.55,91.31) .. (62.13,90.83) -- cycle ;
\draw  [color={rgb, 255:red, 0; green, 0; blue, 0 }  ,draw opacity=1 ][fill={rgb, 255:red, 0; green, 0; blue, 0 }  ,fill opacity=1 ] (42,111.25) .. controls (42,108.63) and (44.13,106.5) .. (46.75,106.5) .. controls (49.37,106.5) and (51.5,108.63) .. (51.5,111.25) .. controls (51.5,113.87) and (49.37,116) .. (46.75,116) .. controls (44.13,116) and (42,113.87) .. (42,111.25) -- cycle ;
\draw  [color={rgb, 255:red, 0; green, 0; blue, 0 }  ,draw opacity=1 ][fill={rgb, 255:red, 0; green, 0; blue, 0 }  ,fill opacity=1 ] (52.13,72.83) .. controls (54.71,72.35) and (57.19,74.05) .. (57.67,76.63) .. controls (58.15,79.21) and (56.45,81.69) .. (53.87,82.17) .. controls (51.29,82.65) and (48.81,80.95) .. (48.33,78.37) .. controls (47.85,75.79) and (49.55,73.31) .. (52.13,72.83) -- cycle ;
\draw  [color={rgb, 255:red, 0; green, 0; blue, 0 }  ,draw opacity=1 ][fill={rgb, 255:red, 0; green, 0; blue, 0 }  ,fill opacity=1 ] (28,93.25) .. controls (28,90.63) and (30.13,88.5) .. (32.75,88.5) .. controls (35.37,88.5) and (37.5,90.63) .. (37.5,93.25) .. controls (37.5,95.87) and (35.37,98) .. (32.75,98) .. controls (30.13,98) and (28,95.87) .. (28,93.25) -- cycle ;
\draw [color={rgb, 255:red, 208; green, 2; blue, 27 }  ,draw opacity=1 ][fill={rgb, 255:red, 208; green, 2; blue, 27 }  ,fill opacity=1 ]   (113,93.5) -- (131.75,84.25) ;
\draw  [color={rgb, 255:red, 0; green, 0; blue, 0 }  ,draw opacity=1 ][fill={rgb, 255:red, 0; green, 0; blue, 0 }  ,fill opacity=1 ] (126.21,80.45) .. controls (128.79,79.97) and (131.27,81.67) .. (131.75,84.25) .. controls (132.23,86.83) and (130.53,89.31) .. (127.95,89.79) .. controls (125.37,90.27) and (122.89,88.57) .. (122.41,85.99) .. controls (121.93,83.41) and (123.63,80.93) .. (126.21,80.45) -- cycle ;
\draw [color={rgb, 255:red, 208; green, 2; blue, 27 }  ,draw opacity=1 ][fill={rgb, 255:red, 208; green, 2; blue, 27 }  ,fill opacity=1 ]   (113,93.5) -- (101.75,112.25) ;
\draw  [color={rgb, 255:red, 208; green, 2; blue, 27 }  ,draw opacity=1 ][fill={rgb, 255:red, 208; green, 2; blue, 27 }  ,fill opacity=1 ] (112.13,88.83) .. controls (114.71,88.35) and (117.19,90.05) .. (117.67,92.63) .. controls (118.15,95.21) and (116.45,97.69) .. (113.87,98.17) .. controls (111.29,98.65) and (108.81,96.95) .. (108.33,94.37) .. controls (107.85,91.79) and (109.55,89.31) .. (112.13,88.83) -- cycle ;
\draw [color={rgb, 255:red, 74; green, 144; blue, 226 }  ,draw opacity=1 ][fill={rgb, 255:red, 74; green, 144; blue, 226 }  ,fill opacity=1 ]   (282,87.5) -- (300.75,78.25) ;
\draw  [color={rgb, 255:red, 0; green, 0; blue, 0 }  ,draw opacity=1 ][fill={rgb, 255:red, 0; green, 0; blue, 0 }  ,fill opacity=1 ] (295.21,74.45) .. controls (297.79,73.97) and (300.27,75.67) .. (300.75,78.25) .. controls (301.23,80.83) and (299.53,83.31) .. (296.95,83.79) .. controls (294.37,84.27) and (291.89,82.57) .. (291.41,79.99) .. controls (290.93,77.41) and (292.63,74.93) .. (295.21,74.45) -- cycle ;
\draw [color={rgb, 255:red, 74; green, 144; blue, 226 }  ,draw opacity=1 ][fill={rgb, 255:red, 74; green, 144; blue, 226 }  ,fill opacity=1 ]   (282,87.5) -- (296.75,108.25) ;
\draw  [color={rgb, 255:red, 74; green, 144; blue, 226 }  ,draw opacity=1 ][fill={rgb, 255:red, 74; green, 144; blue, 226 }  ,fill opacity=1 ] (281.13,82.83) .. controls (283.71,82.35) and (286.19,84.05) .. (286.67,86.63) .. controls (287.15,89.21) and (285.45,91.69) .. (282.87,92.17) .. controls (280.29,92.65) and (277.81,90.95) .. (277.33,88.37) .. controls (276.85,85.79) and (278.55,83.31) .. (281.13,82.83) -- cycle ;
\draw  [color={rgb, 255:red, 0; green, 0; blue, 0 }  ,draw opacity=1 ][fill={rgb, 255:red, 0; green, 0; blue, 0 }  ,fill opacity=1 ] (292,108.25) .. controls (292,105.63) and (294.13,103.5) .. (296.75,103.5) .. controls (299.37,103.5) and (301.5,105.63) .. (301.5,108.25) .. controls (301.5,110.87) and (299.37,113) .. (296.75,113) .. controls (294.13,113) and (292,110.87) .. (292,108.25) -- cycle ;
\draw [color={rgb, 255:red, 74; green, 144; blue, 226 }  ,draw opacity=1 ][fill={rgb, 255:red, 74; green, 144; blue, 226 }  ,fill opacity=1 ]   (308,89.5) -- (326.75,80.25) ;
\draw  [color={rgb, 255:red, 0; green, 0; blue, 0 }  ,draw opacity=1 ][fill={rgb, 255:red, 0; green, 0; blue, 0 }  ,fill opacity=1 ] (321.21,76.45) .. controls (323.79,75.97) and (326.27,77.67) .. (326.75,80.25) .. controls (327.23,82.83) and (325.53,85.31) .. (322.95,85.79) .. controls (320.37,86.27) and (317.89,84.57) .. (317.41,81.99) .. controls (316.93,79.41) and (318.63,76.93) .. (321.21,76.45) -- cycle ;
\draw [color={rgb, 255:red, 74; green, 144; blue, 226 }  ,draw opacity=1 ][fill={rgb, 255:red, 74; green, 144; blue, 226 }  ,fill opacity=1 ]   (308,89.5) -- (322.75,110.25) ;
\draw  [color={rgb, 255:red, 74; green, 144; blue, 226 }  ,draw opacity=1 ][fill={rgb, 255:red, 74; green, 144; blue, 226 }  ,fill opacity=1 ] (307.13,84.83) .. controls (309.71,84.35) and (312.19,86.05) .. (312.67,88.63) .. controls (313.15,91.21) and (311.45,93.69) .. (308.87,94.17) .. controls (306.29,94.65) and (303.81,92.95) .. (303.33,90.37) .. controls (302.85,87.79) and (304.55,85.31) .. (307.13,84.83) -- cycle ;
\draw  [color={rgb, 255:red, 0; green, 0; blue, 0 }  ,draw opacity=1 ][fill={rgb, 255:red, 0; green, 0; blue, 0 }  ,fill opacity=1 ] (318,110.25) .. controls (318,107.63) and (320.13,105.5) .. (322.75,105.5) .. controls (325.37,105.5) and (327.5,107.63) .. (327.5,110.25) .. controls (327.5,112.87) and (325.37,115) .. (322.75,115) .. controls (320.13,115) and (318,112.87) .. (318,110.25) -- cycle ;
\draw  [color={rgb, 255:red, 0; green, 0; blue, 0 }  ,draw opacity=1 ][fill={rgb, 255:red, 0; green, 0; blue, 0 }  ,fill opacity=1 ] (257.13,89.83) .. controls (259.71,89.35) and (262.19,91.05) .. (262.67,93.63) .. controls (263.15,96.21) and (261.45,98.69) .. (258.87,99.17) .. controls (256.29,99.65) and (253.81,97.95) .. (253.33,95.37) .. controls (252.85,92.79) and (254.55,90.31) .. (257.13,89.83) -- cycle ;
\draw  [color={rgb, 255:red, 0; green, 0; blue, 0 }  ,draw opacity=1 ][fill={rgb, 255:red, 0; green, 0; blue, 0 }  ,fill opacity=1 ] (237,110.25) .. controls (237,107.63) and (239.13,105.5) .. (241.75,105.5) .. controls (244.37,105.5) and (246.5,107.63) .. (246.5,110.25) .. controls (246.5,112.87) and (244.37,115) .. (241.75,115) .. controls (239.13,115) and (237,112.87) .. (237,110.25) -- cycle ;
\draw  [color={rgb, 255:red, 0; green, 0; blue, 0 }  ,draw opacity=1 ][fill={rgb, 255:red, 0; green, 0; blue, 0 }  ,fill opacity=1 ] (247.13,71.83) .. controls (249.71,71.35) and (252.19,73.05) .. (252.67,75.63) .. controls (253.15,78.21) and (251.45,80.69) .. (248.87,81.17) .. controls (246.29,81.65) and (243.81,79.95) .. (243.33,77.37) .. controls (242.85,74.79) and (244.55,72.31) .. (247.13,71.83) -- cycle ;
\draw  [color={rgb, 255:red, 0; green, 0; blue, 0 }  ,draw opacity=1 ][fill={rgb, 255:red, 0; green, 0; blue, 0 }  ,fill opacity=1 ] (223,92.25) .. controls (223,89.63) and (225.13,87.5) .. (227.75,87.5) .. controls (230.37,87.5) and (232.5,89.63) .. (232.5,92.25) .. controls (232.5,94.87) and (230.37,97) .. (227.75,97) .. controls (225.13,97) and (223,94.87) .. (223,92.25) -- cycle ;
\draw  [color={rgb, 255:red, 0; green, 0; blue, 0 }  ,draw opacity=1 ][fill={rgb, 255:red, 0; green, 0; blue, 0 }  ,fill opacity=1 ] (97,112.25) .. controls (97,109.63) and (99.13,107.5) .. (101.75,107.5) .. controls (104.37,107.5) and (106.5,109.63) .. (106.5,112.25) .. controls (106.5,114.87) and (104.37,117) .. (101.75,117) .. controls (99.13,117) and (97,114.87) .. (97,112.25) -- cycle ;
\draw [color={rgb, 255:red, 208; green, 2; blue, 27 }  ,draw opacity=1 ][fill={rgb, 255:red, 208; green, 2; blue, 27 }  ,fill opacity=1 ]   (100.08,80.12) -- (113,93.5) ;
\draw  [color={rgb, 255:red, 0; green, 0; blue, 0 }  ,draw opacity=1 ][fill={rgb, 255:red, 0; green, 0; blue, 0 }  ,fill opacity=1 ] (99.21,75.45) .. controls (101.79,74.97) and (104.27,76.67) .. (104.75,79.25) .. controls (105.23,81.83) and (103.53,84.31) .. (100.95,84.79) .. controls (98.37,85.27) and (95.89,83.57) .. (95.41,80.99) .. controls (94.93,78.41) and (96.63,75.93) .. (99.21,75.45) -- cycle ;
\draw [color={rgb, 255:red, 208; green, 2; blue, 27 }  ,draw opacity=1 ][fill={rgb, 255:red, 208; green, 2; blue, 27 }  ,fill opacity=1 ]   (168,96.5) -- (186.75,87.25) ;
\draw  [color={rgb, 255:red, 0; green, 0; blue, 0 }  ,draw opacity=1 ][fill={rgb, 255:red, 0; green, 0; blue, 0 }  ,fill opacity=1 ] (181.21,83.45) .. controls (183.79,82.97) and (186.27,84.67) .. (186.75,87.25) .. controls (187.23,89.83) and (185.53,92.31) .. (182.95,92.79) .. controls (180.37,93.27) and (177.89,91.57) .. (177.41,88.99) .. controls (176.93,86.41) and (178.63,83.93) .. (181.21,83.45) -- cycle ;
\draw [color={rgb, 255:red, 208; green, 2; blue, 27 }  ,draw opacity=1 ][fill={rgb, 255:red, 208; green, 2; blue, 27 }  ,fill opacity=1 ]   (168,96.5) -- (156.75,115.25) ;
\draw  [color={rgb, 255:red, 0; green, 0; blue, 0 }  ,draw opacity=1 ][fill={rgb, 255:red, 0; green, 0; blue, 0 }  ,fill opacity=1 ] (152,115.25) .. controls (152,112.63) and (154.13,110.5) .. (156.75,110.5) .. controls (159.37,110.5) and (161.5,112.63) .. (161.5,115.25) .. controls (161.5,117.87) and (159.37,120) .. (156.75,120) .. controls (154.13,120) and (152,117.87) .. (152,115.25) -- cycle ;
\draw [color={rgb, 255:red, 208; green, 2; blue, 27 }  ,draw opacity=1 ][fill={rgb, 255:red, 208; green, 2; blue, 27 }  ,fill opacity=1 ]   (155.08,83.12) -- (168,96.5) ;
\draw  [color={rgb, 255:red, 0; green, 0; blue, 0 }  ,draw opacity=1 ][fill={rgb, 255:red, 0; green, 0; blue, 0 }  ,fill opacity=1 ] (154.21,78.45) .. controls (156.79,77.97) and (159.27,79.67) .. (159.75,82.25) .. controls (160.23,84.83) and (158.53,87.31) .. (155.95,87.79) .. controls (153.37,88.27) and (150.89,86.57) .. (150.41,83.99) .. controls (149.93,81.41) and (151.63,78.93) .. (154.21,78.45) -- cycle ;
\draw  [color={rgb, 255:red, 0; green, 0; blue, 0 }  ,draw opacity=1 ][fill={rgb, 255:red, 0; green, 0; blue, 0 }  ,fill opacity=1 ] (161,43.25) .. controls (161,40.63) and (163.13,38.5) .. (165.75,38.5) .. controls (168.37,38.5) and (170.5,40.63) .. (170.5,43.25) .. controls (170.5,45.87) and (168.37,48) .. (165.75,48) .. controls (163.13,48) and (161,45.87) .. (161,43.25) -- cycle ;
\draw    (165.75,43.25) -- (182.08,88.12) ;
\draw    (165.75,43.25) -- (155.08,83.12) ;
\draw    (165.75,43.25) -- (168,96.5) ;
\draw    (165.75,43.25) -- (156.75,115.25) ;
\draw  [color={rgb, 255:red, 208; green, 2; blue, 27 }  ,draw opacity=1 ][fill={rgb, 255:red, 208; green, 2; blue, 27 }  ,fill opacity=1 ] (167.13,91.83) .. controls (169.71,91.35) and (172.19,93.05) .. (172.67,95.63) .. controls (173.15,98.21) and (171.45,100.69) .. (168.87,101.17) .. controls (166.29,101.65) and (163.81,99.95) .. (163.33,97.37) .. controls (162.85,94.79) and (164.55,92.31) .. (167.13,91.83) -- cycle ;
\draw  [color={rgb, 255:red, 208; green, 2; blue, 27 }  ,draw opacity=1 ][fill={rgb, 255:red, 208; green, 2; blue, 27 }  ,fill opacity=1 ] (134.13,113.83) .. controls (136.71,113.35) and (139.19,115.05) .. (139.67,117.63) .. controls (140.15,120.21) and (138.45,122.69) .. (135.87,123.17) .. controls (133.29,123.65) and (130.81,121.95) .. (130.33,119.37) .. controls (129.85,116.79) and (131.55,114.31) .. (134.13,113.83) -- cycle ;
\draw [color={rgb, 255:red, 208; green, 2; blue, 27 }  ,draw opacity=1 ][fill={rgb, 255:red, 208; green, 2; blue, 27 }  ,fill opacity=1 ]   (122.08,105.12) -- (135,118.5) ;
\draw  [color={rgb, 255:red, 0; green, 0; blue, 0 }  ,draw opacity=1 ][fill={rgb, 255:red, 0; green, 0; blue, 0 }  ,fill opacity=1 ] (119,105.25) .. controls (119,102.63) and (121.13,100.5) .. (123.75,100.5) .. controls (126.37,100.5) and (128.5,102.63) .. (128.5,105.25) .. controls (128.5,107.87) and (126.37,110) .. (123.75,110) .. controls (121.13,110) and (119,107.87) .. (119,105.25) -- cycle ;
\draw  [color={rgb, 255:red, 208; green, 2; blue, 27 }  ,draw opacity=1 ][fill={rgb, 255:red, 208; green, 2; blue, 27 }  ,fill opacity=1 ] (193.13,118.83) .. controls (195.71,118.35) and (198.19,120.05) .. (198.67,122.63) .. controls (199.15,125.21) and (197.45,127.69) .. (194.87,128.17) .. controls (192.29,128.65) and (189.81,126.95) .. (189.33,124.37) .. controls (188.85,121.79) and (190.55,119.31) .. (193.13,118.83) -- cycle ;
\draw [color={rgb, 255:red, 208; green, 2; blue, 27 }  ,draw opacity=1 ][fill={rgb, 255:red, 208; green, 2; blue, 27 }  ,fill opacity=1 ]   (180.08,108.12) -- (193,121.5) ;
\draw  [color={rgb, 255:red, 0; green, 0; blue, 0 }  ,draw opacity=1 ][fill={rgb, 255:red, 0; green, 0; blue, 0 }  ,fill opacity=1 ] (177,108.25) .. controls (177,105.63) and (179.13,103.5) .. (181.75,103.5) .. controls (184.37,103.5) and (186.5,105.63) .. (186.5,108.25) .. controls (186.5,110.87) and (184.37,113) .. (181.75,113) .. controls (179.13,113) and (177,110.87) .. (177,108.25) -- cycle ;

\draw (290,127.4) node [anchor=north west][inner sep=0.75pt]    {$\textcolor[rgb]{0.29,0.56,0.89}{H_{1}}$};
\draw (97,129.4) node [anchor=north west][inner sep=0.75pt]    {$\textcolor[rgb]{0.82,0.01,0.11}{G_{1}}$};
\draw (159,129.4) node [anchor=north west][inner sep=0.75pt]    {$\textcolor[rgb]{0.82,0.01,0.11}{G_{2}}$};
\draw (42,130.4) node [anchor=north west][inner sep=0.75pt]    {$\textcolor[rgb]{0.82,0.01,0.11}{G_{0}}$};
\draw (232,128.4) node [anchor=north west][inner sep=0.75pt]  [color={rgb, 255:red, 74; green, 144; blue, 226 }  ,opacity=1 ]  {$\textcolor[rgb]{0.29,0.56,0.89}{H_{0}}$};

\end{tikzpicture}